\pgfplotsset{compat=newest}
\newcommand{\beq}{\begin{equation}}
\newcommand{\eeq}{\end{equation}}
\newcommand{\bqa}{\begin{eqnarray}}
\newcommand{\eqa}{\end{eqnarray}}
\definecolor{maroon}{rgb}{0.7,0,0}
\definecolor{ngreen}{rgb}{0.3,0.7,0.3}
\definecolor{golden}{rgb}{0.8,0.6,0.1}
\newtheorem{theorem}{\indent Theorem}
\newtheorem{lemma}{\indent Lemma}
\newtheorem{definition}{\indent Definition}
\newtheorem{assumption}{\indent Assumption}
\newtheorem{myremark}{\indent Remark}
\newenvironment{remark}{\begin{myremark}\normalfont}
	{\end{myremark}}
\begin{document}
\title{Locally Differentially Private Gradient Tracking for Distributed Online Learning over Directed Graphs}

\author{Ziqin Chen and Yongqiang Wang, \textit{Senior Member, IEEE}
\thanks{The work was supported in part by the National Science Foundation under Grants  ECCS-1912702, CCF-2106293, CCF-2215088, and CNS-2219487.}
\thanks{Ziqin Chen and Yongqiang Wang are with the Department
of Electrical and Computer Engineering, Clemson University, Clemson, SC 29634 USA (e-mail: ziqinc@clemson.edu; yongqiw@clemson.edu).}}

\maketitle
\begin{abstract}
Distributed online learning has been proven extremely effective in solving large-scale machine learning problems over streaming data. However, information sharing between learners in distributed learning also raises concerns about the potential leakage of individual learners' sensitive data. To~mitigate this risk, differential privacy, which is widely regarded as the ``gold standard" for privacy protection, has been widely employed in many existing results on distributed online learning. However, these results often face a fundamental tradeoff between learning accuracy and privacy. In this paper, we propose a locally differentially private gradient tracking based distributed online learning algorithm that successfully circumvents this tradeoff. We prove that the proposed algorithm converges in mean square to the exact optimal solution while ensuring rigorous local differential privacy, with the cumulative privacy budget guaranteed to be finite even when the number of iterations tends to infinity. The algorithm is applicable even when the communication graph among learners is directed. To the best of our knowledge, this is the first result that simultaneously ensures learning accuracy and rigorous local differential privacy in distributed online learning over directed graphs. We evaluate our algorithm's performance by using multiple benchmark machine-learning applications, including logistic regression of the ``Mushrooms" dataset and 
CNN-based image classification of the ``MNIST" and ``CIFAR-10" datasets, respectively. The experimental results confirm that the proposed algorithm outperforms existing counterparts in both training and testing accuracies.
\end{abstract}
\begin{IEEEkeywords}
Decentralized online learning, local differential privacy, directed graph, gradient tracking.
\end{IEEEkeywords}
\IEEEpeerreviewmaketitle
\section{Introduction}\label{Introduction}
Machine learning is rapidly reshaping the landscape of various engineering domains, ranging from wireless sensor networks~\cite{sensor}, autonomous driving~\cite{automaticdrive} to image classification~\cite{image}. Different from the conventional centralized learning scheme, where all data are stored on one device, distributed learning enables multiple participating learners to cooperatively learn a common optimal solution while each participating learner only trains on its own local dataset. Hence, compared with centralized learning, distributed learning provides inherent advantages in scalability and privacy, and thereby has garnered increased attention over the past decade~\cite{distributedconcept1,offline1,offline2,offline3}.

In existing distributed learning approaches, the most commonly used algorithm is distributed stochastic gradient descent (DSGD)~\cite{DSGD1,DSGD2}. While DSGD is communication-efficient and simple to implement, it suffers from slow convergence when data are heterogeneous among learners~\cite{DSGDpoor1,DSGDpoor2}. To mitigate the 
issue brought by data heterogeneity, gradient tracking based distributed optimization algorithms have emerged~\cite{offlineGT1,offlineGT2,offlineGT3,offlineGT4}, which replace the local gradient in every learner's update in DSGD with an estimated global gradient. Besides the classical gradient-tracking approach which requires balanced network topologies, this approach has also been extended to the case with general directed network topologies in both others' works~\cite{pushpull,Pushpull2,pushi,offlineGTdirect,timevaryingofflineGT1,timevaryingofflineGT2} and our prior work~\cite{Wanggradient}. All aforementioned gradient tracking based algorithms consider a fixed and static objective function, which, in machine learning, amounts to requiring all training data to be available beforehand. However, in numerous real-world applications, the data are sequentially acquired~\cite{stream1}, which prompts the investigation of online gradient tracking based algorithms~\cite{onlineGT1,onlineGT2,onlineGT3}.

Moreover, in existing online gradient tracking based algorithms, repeated message exchanges are required among neighboring learners, which poses significant privacy threats to individual learners' sensitive datasets. As shown in~\cite{rawinfer1,rawinfer2}, even though raw data are not shared during distributed training, external adversaries could infer individuals' sensitive information from shared messages. To address privacy concerns in distributed learning/optimization, various approaches have been proposed. For example, partially homomorphic cryptography has been widely considered in distributed optimization~\cite{rawinfer2,homomorphic1}. But this approach incurs a high communication and computation cost. Another approach involves the injection of spatially- or temporally-correlated noises to obfuscate information shared in distributed optimization~\cite{Spatially1,Spatially2,Spatially3,Spatially4,Spatially5}. However, to protect the privacy of a learner, this approach requires this learner to have at least one trustworthy neighbor, which is undesirable for fully distributed applications. As the de facto standard for privacy protection, differential privacy (DP) is gaining increased traction, and has been employed in many distributed learning and optimization algorithms~\cite{DPdefinition, distributedofflineDP1,distributedofflineDP2,Dingtie,DOLA,DMOD,DOwu,lihuaqing,Youkeyou,distributedonlineDP3,shiling,Tailoring,Wangquantization}. 

Most existing DP solutions for distributed learning and optimization only consider undirected or balanced graph topologies~\cite{distributedofflineDP1,distributedofflineDP2,Dingtie,DOLA,DMOD,DOwu}. Recently, results have emerged on DP design for distributed learning under directed graph topologies, for both online learning~\cite{lihuaqing,Youkeyou,distributedonlineDP3} and conventional (offline) learning where data are predetermined~\cite{shiling,Tailoring}. However, existing results on differentially private distributed online learning over directed graphs usually build on the combination of push-sum \cite{lihuaqing} based or eigenvector-estimation~\cite{Youkeyou,distributedonlineDP3} based and DSGD, which limits their achievable convergence speeds. To the best of our knowledge, no results 
have been reported on DP design for gradient tracking based online learning over directed graphs.

Another limitation with existing DP solutions~\cite{distributedofflineDP1,distributedofflineDP2,Dingtie,DOLA,DMOD,DOwu,lihuaqing,Youkeyou,distributedonlineDP3,shiling} for distributed learning and optimization applications is that these solutions are subject to a fundamental tradeoff between privacy and learning/optimization accuracy. Recently, our work~\cite{Tailoring} successfully circumvents this tradeoff and achieves both optimality and privacy. Nevertheless, this approach relies on incorporating two weakening factors into inter-agent interaction to mitigate the impact of DP noises, which consequently slows down algorithmic convergence. Furthermore, this approach is designed for an offline setting, where all data are predetermined. Our recent work~\cite{ziji1} proposed a local differential privacy (LDP) approach for distributed online learning that can ensure learning accuracy and privacy simultaneously. However, this approach requires undirected graphs. In addition, it also hinges on a weakening factor, which significantly decreases the speed of algorithm convergence.

In this work, we introduce an LDP approach for distributed online learning over directed graphs that ensures both learning accuracy and rigorous LDP (with the privacy budget guaranteed to be finite even when the number of iterations tends to infinity). Specifically, we first modify the conventional architecture of gradient tracking to ensure learning accuracy despite the presence of persistent DP noises. This modification is crucial because DP noises will accumulate in the estimate of the global gradient in conventional gradient-tracking algorithms. In fact, in the presence of persistent DP noises, the variance of accumulated noises will grow to infinity in conventional gradient tracking based distributed optimization, which significantly affects learning accuracy, as confirmed in our theoretical analysis in~Sec.~\ref{motivation} and experimental results in~Sec.~\ref{experiment}. It is worth noting that while the approach in~\cite{pushi} for offline optimization can prevent the accumulated noise variance in gradient estimation from growing to infinity, it cannot entirely eliminate the influence of noises on optimization accuracy. In contrast, our algorithm effectively eliminates the influence of persistent DP noises on local gradient estimation, and thus ensures accurate convergence. Then, we prove that the proposed algorithm can ensure LDP with a finite cumulative privacy budget, even in the infinite time horizon. Furthermore, by leveraging the online eigenvector-estimation technique in~\cite{eigenvector}, our proposed algorithm enables each learner to locally estimate the left normalized Perron eigenvector of the interaction graph, which allows the treatment of imbalanced graphs and hence applications in general directed networks. To the best of our knowledge, this is the first work that successfully achieves LDP in gradient tracking based distributed online learning and optimization over directed graphs. The main contributions are summarized as follows:

\begin{itemize}
\item We prove that the proposed distributed online learning algorithm converges in mean square to the optimal solution, even when the DP noises are persistent and the communication graph is directed. Note that existing online gradient-tracking algorithms in~\cite{onlineGT1,onlineGT2,onlineGT3} employ the conventional gradient-tracking approach, which is susceptible to noises due to the accumulation of variance in gradient estimation~\cite{pushi,Wanggradient,Tailoring}.

\item In addition to ensuring accurate convergence, our algorithm also achieves rigorous LDP with a finite cumulative privacy budget, even in the infinite time horizon. This stands in stark contrast to most existing DP solutions for distributed learning and optimization \cite{distributedofflineDP1,distributedofflineDP2,Dingtie,DOLA,DMOD,DOwu,lihuaqing,Youkeyou,distributedonlineDP3,shiling}, where the cumulative privacy budget grows to infinity as the number of iterations tends to infinity (implying diminishing DP protection). 

\item Compared with existing DP solutions for distributed learning and optimization in~\cite{distributedofflineDP1,distributedofflineDP2,Dingtie,DOLA,DMOD,DOwu} which require balanced network topologies, our proposed algorithm is applicable to general directed network topologies. 

\item The adopted LDP framework preserves agent-level privacy for each learner's dataset without relying on any trusted third parties. This differs from the traditional DP framework employed in~\cite{distributedofflineDP1,distributedofflineDP2,Dingtie,DOLA,DMOD,DOwu,lihuaqing,Youkeyou,distributedonlineDP3,shiling,Tailoring, Wangquantization},~where a ``centralized" data aggregator is implicitly assumed to determine the amount of injected noises. 

\item Different from our prior results in~\cite{Tailoring} and~\cite{ziji1} relying on weakening factors in inter-agent coupling to ensure both learning accuracy and privacy, which unavoidably reduce the speed of convergence, our algorithm here avoids using any weakening factors, and hence can attain faster convergence speed, as confirmed in our analytical comparison in~Sec.~\ref{learningaccuracy} and experimental results in~Sec.~\ref{experiment}.  

\item We evaluate the performance of our algorithm using multiple benchmark machine-learning applications, including online logistic regression of the ``Mushrooms" dataset and image classification of the ``MNIST" and~``CIFAR-10" datasets, respectively. Moreover, the experimental results show that compared with existing state-of-the-art DP solutions in~\cite{Dingtie,DOwu,Youkeyou,Tailoring,ziji1},
our proposed algorithm provides better training and testing accuracies.
\end{itemize}

The remainder of the paper is organized as follows. Sec. \ref{problemstatement} introduces some preliminaries and the problem formulation. Sec. \ref{algorithm} proposes our LDP approach for online gradient tracking. Sec. \ref{convergence} analyzes the learning accuracy of the proposed algorithm. Sec.~\ref{Privacy} establishes rigorous LDP guarantees. Sec. \ref{experiment} presents experimental results using standard machine-learning applications on benchmark datasets. Sec. \ref{conclusion} concludes the paper.

\section{Preliminaries and Problem Statement}\label{problemstatement}
\subsection{Notations}
We use~$\mathbb{R}$,~$\mathbb{R}^{+}$,~$\mathbb{N}$,~and~$\mathbb{N}^{+}$ to denote the sets of real numbers, positive real numbers, nonnegative integers, and positive integers, respectively. $\mathbb{R}^{n}$ represents the~$n$-dimensional real Euclidean space.~$\mathbf{1}_{n}$ and~$I_{n}$ denote the~$n$-dimensional column vector of all ones and the identity matrix, respectively. For an arbitrary vector~$x$, we denote its~$i$th element by~$[x]_{i}$. We write~$\langle\cdot,\cdot\rangle$ for the inner product of two vectors and~$\|\cdot\|$ for the standard Euclidean norm of a vector. For an arbitrary matrix $A$, we denote its transpose by~$A^{T}$ and its Frobenius norm by~$\|A\|_{F}$. We also use other vector/matrix norms defined under a certain transformation determined by a matrix $W$, which will be represented as $\|\cdot\|_{W}$. We write $\mathbb{P}[\mathcal{A}]$ for the probability of an event $\mathcal{A}$ and $\mathbb{E}[x]$ for the expected value of a random variable $x$. The notation~$\lceil a \rceil$ refers to the smallest integer not less than~$a$ and~$\lfloor a \rfloor$ represents the largest integer not greater than~$a$. We use $\text{Lap}(\nu_{i})$ to denote the Laplace distribution with a parameter $\nu_{i}\!>\!0$, featuring a probability density function $\frac{1}{2\nu_{i}}e^{\frac{-|x|}{\nu_{i}}}$. $\text{Lap}(\nu_{i})$ has a mean of zero and a variance of $2 \nu_{i}^2$.
\subsection{Network model}
We model the topology of the network over which learners communicate with each other as a directed graph~$\mathcal{G}=([m],\mathcal{E})$, where $[m]=\{1,\cdots,m\}$ denotes the set of agents (learners) and $\mathcal{E}\subseteq [m]\times[m]$ represents the set of edges consisting of ordered pairs of agents. Given a nonnegative matrix $W=\{w_{ij}\}\in\mathbb{R}^{m\times m}$, we define its induced directed graph as~$\mathcal{G}_{W}\!=\!([m],\mathcal{E}_{W})$, where~$(i,j)\in\mathcal{E}_{W}$ if and only if $w_{ij}>0$. For a learner~$i\in[m]$, it is able to receive messages from the learners in its in-neighbor set~$\mathcal{N}_{W,i}^{\text{in}}=\{j\in[m]|w_{ij}>0\}$; Similarly, learner $i$ can also send messages to learners in its out-neighbor set~$\mathcal{N}_{W,i}^{\text{out}}\!=\!\{j\in[m]|w_{ji}>0\}$. Graph~$\mathcal{G}_{W}$ is called strongly connected if there exists a directed path between any pair of distinct agents. In this paper, we consider a gradient tracking based algorithm which maintains two optimization variables~\cite{pushpull} that can be shared on two different graphs. We represent the two directed graphs as~$\mathcal{G}_{R}$ and $\mathcal{G}_{C}$, which are induced by matrices~$R\in\mathbb{R}^{m\times m}$ and~$C\in \mathbb{R}^{m\times m}$, respectively.
\begin{assumption}\label{a1}
The matrices~$R\in \mathbb{R}^{m\times m}$ and~$C\in \mathbb{R}^{m\times m}$ have nonnegative off-diagonal entries, i.e.,~$R_{ij}\geq 0$ and~$C_{ij} \geq 0$ for all $i\neq j$. Their diagonal entries are negative, satisfying
\begin{equation}
R_{ii}=-\sum_{j\in{\mathcal{N}_{R,i}^{\text{in}}}}R_{ij},\quad\quad C_{ii}=-\sum_{j\in{\mathcal{N}_{C,i}^{\text{out}}}}C_{ji},\nonumber
\end{equation}
such that $R\mathbf{1}=\mathbf{0}$ and $\mathbf{1}^{T}C=\mathbf{0}^{T}$. Moreover, the induced graph $\mathcal{G}_{R}$ is strongly connected and $\mathcal{G}_{C^{T}}$ contains at least one spanning tree.
\end{assumption}

Assumption~\ref{a1} is weaker than requiring both $\mathcal{G}_{R}$ and $\mathcal{G}_{C}$ to be strongly connected in~\cite{lihuaqing,Youkeyou,distributedonlineDP3}. We have the following lemma on matrices $R$ and $C$:

\begin{lemma}\label{l1}
\!\!\cite{pushpull} Under Assumption~\ref{a1}, the matrix $\mathbf{R}\triangleq I+R$ has a unique positive left eigenvector $u^{T}$ (corresponding to eigenvalue $1$) satisfying $u^{T}\mathbf{1}=m$, and the matrix $\mathbf{C}\triangleq I+C$ has a unique positive right eigenvector $\omega$ (corresponding to eigenvalue $1$) satisfying $\mathbf{1}^{T}\omega=m$.
\end{lemma}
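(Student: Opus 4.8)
The plan is to translate the statement into the Perron--Frobenius theory of the two stochastic matrices $\mathbf{R}\triangleq I+R$ and $\mathbf{C}\triangleq I+C$. First I would extract the structure forced by Assumption~\ref{a1}. Because the off-diagonal entries of $R$ are nonnegative and $R\mathbf{1}=\mathbf{0}$, the matrix $\mathbf{R}$ has nonnegative off-diagonal entries and satisfies $\mathbf{R}\mathbf{1}=\mathbf{1}+R\mathbf{1}=\mathbf{1}$; under the usual weight normalization (which keeps the diagonal entry $1+R_{ii}$ nonnegative) $\mathbf{R}$ is therefore genuinely row-stochastic. Dually, $\mathbf{1}^{T}C=\mathbf{0}^{T}$ yields $\mathbf{1}^{T}\mathbf{C}=\mathbf{1}^{T}$, so $\mathbf{C}$ is column-stochastic. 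Both matrices are thus nonnegative with spectral radius $1$; moreover $\mathbf{1}$ is already a right eigenvector of $\mathbf{R}$ and $\mathbf{1}^{T}$ a left eigenvector of $\mathbf{C}$, so the task reduces to producing the complementary eigenvectors $u^{T}$ and $\omega$ and showing they are unique and positive.

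For $\mathbf{R}$ the argument is the clean one. Since the off-diagonal sparsity pattern of $\mathbf{R}$ is identical to that of $R$, strong connectivity of $\mathcal{G}_{R}$ is exactly irreducibility of $\mathbf{R}$. The Perron--Frobenius theorem for irreducible nonnegative matrices then makes $\rho(\mathbf{R})=1$ a simple eigenvalue with a strictly positive left eigenvector that is unique up to scale; I would call this vector $u^{T}$ and fix its scale by $u^{T}\mathbf{1}=m$, which is feasible because $u>0$ forces $u^{T}\mathbf{1}>0$.

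The matrix $\mathbf{C}$ is the subtle case, because Assumption~\ref{a1} only requires $\mathcal{G}_{C^{T}}$ to contain a spanning tree rather than to be strongly connected. Here I would transpose: $\mathbf{C}^{T}=I+C^{T}$ is row-stochastic and its induced digraph is precisely $\mathcal{G}_{C^{T}}$, so $\omega$ (the right eigenvector of $\mathbf{C}$) is the left eigenvector of $\mathbf{C}^{T}$, i.e.\ the stationary vector of the Markov chain with transition matrix $\mathbf{C}^{T}$. The standard consensus fact that a row-stochastic matrix whose digraph has a spanning tree carries $1$ as a \emph{simple} eigenvalue gives uniqueness of $\omega$ up to scale, after which I would normalize by $\mathbf{1}^{T}\omega=m$.

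I expect the positivity of $\omega$ to be the main obstacle. Under strong connectivity it would again follow at once from irreducible Perron--Frobenius, but under only a spanning tree the nonnegative Perron vector is supported on the unique recurrent communicating class of $\mathbf{C}^{T}$ and can in principle vanish on the transient nodes, so establishing $\omega>0$ requires the finer reducible-matrix analysis used in~\cite{pushpull}, which locates the globally reachable root inside that recurrent class. I would therefore either reproduce that argument or, since the lemma is quoted from~\cite{pushpull}, invoke it directly; the remaining steps (stochasticity, irreducibility for $\mathbf{R}$, simplicity of the eigenvalue $1$, and the normalizations) are routine.
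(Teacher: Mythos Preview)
Your approach is correct and is precisely the standard Perron--Frobenius argument that underlies this result; note, however, that the paper does not supply its own proof of this lemma but simply quotes it from~\cite{pushpull}, so there is no in-paper argument to compare against. Your identification of the positivity of $\omega$ under only a spanning-tree hypothesis as the one nontrivial point is accurate, and deferring that step to the cited reference is exactly what the paper itself does.
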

\subsection{Local differential privacy}
Differential privacy guarantees that when two datasets differ by only one data point (record), the output of a DP implementation does not reveal whether that specific data point was utilized. This property makes it difficult for an external adversary to identify individual data entries among all possible ones, thereby providing strong privacy protection. 

In this paper, we consider 
an agent-level LDP framework, and thus, changes in a dataset are formalized by an adjacency relation pertaining to the local dataset of learner $i\in[m]$:
\begin{definition}\label{d1}
(Adjacency) Given two local datasets $\mathcal{D}_{i}=\{\xi_{i,1},\cdots,\xi_{i,T}\}$ and ${\mathcal{D}}'_{i}=\{{\xi}'_{i,1},\cdots,{\xi}'_{i,T}\}$ for all $i\in[m]$ and any time $T\in{\mathbb{N}^{+}}$, $\mathcal{D}_{i}$ and ${\mathcal{D}}'_{i}$ are adjacent if there exists a time instant $k\in\{1,\cdots,T\}$ such that $\xi_{i,k}\neq {\xi}'_{i,k}$ while $\xi_{i,t}={\xi}'_{i,t}$ for all $t\neq k,~t\in\{1,\cdots,T\}.$
\end{definition}

According to Definition~\ref{d1}, two local datasets $\mathcal{D}_{i}$ and ${\mathcal{D}}'_{i}$ are adjacent if and only if they differ by only one entry while all other entries are the same. We denote the adjacency relationship between $\mathcal{D}_{i}$ and ${\mathcal{D}}'_{i}$ by $\text{Adj}(\mathcal{D}_{i},{\mathcal{D}}'_{i})$. With this understanding, we formally define LDP as follows:
\begin{definition}\label{definition2}
(Local Differential Privacy) We say that an implementation $\mathcal{A}_{i}$ of a randomized algorithm by learner $i$ provides $\epsilon_{i}$-local differential privacy if for any adjacent datasets $\mathcal{D}_{i}$ and ${\mathcal{D}}'_{i}$, the following inequality holds:
\begin{equation}
\mathbb{P}[\mathcal{A}_{i}(\mathcal{D}_{i},\theta_{-i})\in \mathcal{O}_{i}] \leq e^{\epsilon_{i}} \mathbb{P}[\mathcal{A}_{i}({\mathcal{D}}'_{i},\theta_{-i})\in \mathcal{O}_{i}],\label{LDP}
\end{equation}
where $\theta_{-i}$ denotes all messages received by learner $i$ and $\mathcal{O}_{i}$ represents the set of all possible observations on learner $i$.
\end{definition}

The privacy budget of learner $i$'s implementation is quantified by $\epsilon_{i}$. It can be seen that a smaller $\epsilon_{i}$ indicates closer distributions of observations under adjacent datasets, thereby ensuring a higher level of privacy protection. 

\begin{remark}
The conventional ``centralized" DP framework used in~\cite{distributedofflineDP1,distributedofflineDP2,Dingtie,DOLA,DMOD,DOwu,lihuaqing,Youkeyou,distributedonlineDP3,shiling,Tailoring,Wangquantization} implicitly assumes mutual trust among learners to cooperatively decide each learner's DP-noise needed to satisfy an aggregative privacy budget~$\epsilon=\sum_{i=1}^{m}\epsilon_{i}$. In contrast, our LDP framework removes the need for such trust and allows individual learners to independently set (potentially heterogeneous) privacy budgets~$\epsilon_{i}$ and choose the corresponding DP noises according to their individual needs. Therefore, our LDP framework provides a stronger and more user-friendly privacy framework.
\end{remark}

\subsection{LDP approach for distributed online learning}
We consider a distributed online learning problem involving $m$ learners. Each learner only has access to its own private dataset. At each iteration $t$, learner~$i\in[m]$ acquires a data point~$\xi_{i,t}=(x_{i,t},y_{i,t})$, which is independently and identically sampled from an unknown distribution $\mathcal{P}_{i}$. Using the sample $x_{i,t}$ and the current model parameter $\theta_{i,t}$, learner $i$ predicts a label $\hat{y}_{i,t}=\langle\theta_{i,t},x_{i,t}\rangle$ with an associated loss $l(\theta_{i,t},\xi_{i,t})$, which quantifies the deviation between $\hat{y}_{i,t}$ and the true label $y_{i,t}$. This loss prompts learner $i$ to update its model parameter from $\theta_{i,t}$ to $\theta_{i,t+1}$. The objective is that, based on sequentially acquired data, all learners converge to the same optimal solution $\theta^*$ to the following stochastic optimization problem:
\begin{equation}
\text{min}_{\theta\in \mathbb{R}^{n}} F(\theta), \quad  F(\theta)=\frac{1}{m}\sum_{i=1}^{m}f_{i}(\theta),\label{primal}
\end{equation}
where $f_{i}(\theta )=\mathbb{E}_{\xi_{i}\sim\mathcal{P}_{i}}\left[l(\theta,\xi_{i})\right]$ represents the local objective function of learner $i$.

Some standard assumptions are introduced as follows:
\begin{assumption} \label{a2}
(i) Problem~\eqref{primal} has at least one optimal solution $\theta^*$.

(ii) The gradients of local objective functions are uniformly bounded, i.e., there exists some positive constant $D$ such that we have $\|\nabla f_{i}(\theta)\|_{2}\leq D$ for all $i\in[m]$ and $\theta\in \mathbb{R}^{n}$.

(iii) For any $\theta_{1},\theta_{2}\in \mathbb{R}^{n}$, there exists some $\mu\geq 0$ such that $F(\theta_{2})\geq F(\theta_{1})+\nabla F(\theta_{1})^{T}(\theta_{2}-\theta_{1})+\frac{\mu}{2}\|\theta_{1}-\theta_{2}\|_{2}^2$ holds.
\end{assumption}
\begin{assumption}\label{a3}
We assume that the data points $\{\xi_{i,t}\}$ are independent across iterations. In addition,

(i) $\mathbb{E}[\nabla l(\theta_{i,t},\xi_{i,t})|\theta_{i,t}]=\nabla f_{i}(\theta_{i,t})$;

(ii) $\mathbb{E}[\|\nabla l(\theta_{i,t},\xi_{i,t})-\nabla f_{i}(\theta_{i,t})\|_{2}^2|\theta_{i,t}]\leq \kappa^2$;

(iii) $\|\nabla l(\theta_{1},\xi_{i,t})-\nabla l(\theta_{2},\xi_{i,t})\|_{2}\leq L\|\theta_{1}-\theta_{2}\|_{2}$ for any $\theta_{1},\theta_{2}\in\mathbb{R}^{n}$.
\end{assumption}
Assumption~\ref{a2}(iii) is weaker than requiring all local objective functions $f_{i}(\theta)$ to be strongly convex (in, e.g.,~\cite{shiling,DOLA,DMOD}) or convex (in, e.g.,~\cite{DOwu,Youkeyou,lihuaqing,distributedonlineDP3}). Assumption~\ref{a3} is commonly used in distributed stochastic optimization~\cite{DSGD2,offlineGT2}.

Since the local objective function $f_{i}(\theta)$ is defined as an expectation over random data $\xi_{i}$ sampled from an unknown distribution $\mathcal{P}_{i}$, it is inaccessible in practice and an analytical solution to problem~\eqref{primal} is unattainable. To tackle this issue, we focus on solving the following empirical risk minimization problem with sequentially arriving data:
\begin{equation}
	\text{min}_{\theta\in \mathbb{R}^{n}} F_{t}(\theta),\quad F_{t}(\theta)=\frac{1}{m}\sum_{i=1}^{m}f_{i,t}(\theta),\label{primalt}
\end{equation}
where $f_{i,t}(\theta)=\frac{1}{t+1}\sum_{k=0}^{t}l(\theta,\xi_{i,k})$ denotes the empirical local objective function of each learner $i\in[m]$.

According to the law of large numbers, one has $\lim_{t\rightarrow\infty}\frac{1}{t+1}\sum_{k=0}^{t}l(\theta,\xi_{i,k})\!=\!\mathbb{E}_{\xi_{i}\sim\mathcal{P}_{i}}\!\left[l(\theta,\xi_{i})\right]\!=\!f_{i}(\theta)$~\cite{centraltheorem}. Hence, problem~\eqref{primalt} serves as an approximation to the original problem~\eqref{primal}. In the following lemma, we quantify the distance between the optimal solution $\theta_{t}^*$ to problem~\eqref{primalt} and the true optimal solution $\theta^*$ to problem~\eqref{primal}:
\begin{lemma}\label{l9}
Denote $\theta_{t}^*$ as the optimal solution to problem~\eqref{primalt} at time $t$ and $\theta^*$ as the optimal solution to the original stochastic optimization problem~\eqref{primal}. Under Assumption~\ref{a2} with $\mu>0$ and Assumption~\ref{a3}, we have
\begin{equation}
	\mathbb{E}\big[\|\theta_{t+1}^*-\theta_{t}^*\|_{2}^2\big]\leq16(\kappa^2+D^2)\Big(\frac{2}{\mu^2}+\frac{1}{L^2}\Big)(t+1)^{-2}.\label{L1result}
\end{equation}
Moreover, the following inequality always holds:
\begin{equation}
\mathbb{E}[\|\theta_{t}^{*}-\theta^{*}\|_{2}^2]\leq \frac{4\kappa^2}{\mu^2}(t+1)^{-1}.\label{approximateresult2}
\end{equation} 
\end{lemma}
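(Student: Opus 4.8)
The plan is to derive both inequalities from the first-order optimality conditions $\nabla F_t(\theta_t^*)=0$ and $\nabla F(\theta^*)=0$, combined with the $\mu$-strong monotonicity of the gradient implied by Assumption~\ref{a2}(iii) and the per-sample second-moment bound obtained by merging Assumption~\ref{a2}(ii) with Assumption~\ref{a3}(ii). The recurring difficulty, which dictates the order of every step, is that $\theta_t^*$ and $\theta_{t+1}^*$ are random (they depend on the sampled data), so the stochastic gradient error cannot simply be evaluated at a fixed point; the entire argument is organized around relocating that error either to the deterministic minimizer $\theta^*$ or to a point independent of the fresh sample.

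For \eqref{approximateresult2} I would start from $\mu$-strong monotonicity of $\nabla F_t$ at the pair $(\theta_t^*,\theta^*)$, namely $\langle \nabla F_t(\theta_t^*)-\nabla F_t(\theta^*),\theta_t^*-\theta^*\rangle\ge \mu\|\theta_t^*-\theta^*\|_2^2$. Substituting $\nabla F_t(\theta_t^*)=0$ and $\nabla F(\theta^*)=0$ and applying Cauchy--Schwarz yields $\mu\|\theta_t^*-\theta^*\|_2\le \|\nabla F_t(\theta^*)-\nabla F(\theta^*)\|_2$, so the error is now evaluated at the \emph{deterministic} point $\theta^*$. Writing $\nabla F_t(\theta^*)-\nabla F(\theta^*)=\frac{1}{m}\sum_{i=1}^m\frac{1}{t+1}\sum_{k=0}^{t}\big(\nabla l(\theta^*,\xi_{i,k})-\nabla f_i(\theta^*)\big)$, Assumption~\ref{a3}(i) makes each summand zero-mean and Assumption~\ref{a3}(ii) bounds its second moment by $\kappa^2$; since the samples are independent across $k$, taking expectations and using convexity of the squared norm across learners gives $\mathbb{E}\|\nabla F_t(\theta^*)-\nabla F(\theta^*)\|_2^2\le \kappa^2/(t+1)$. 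Dividing by $\mu^2$ then produces \eqref{approximateresult2} up to the stated constant.

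For the drift bound \eqref{L1result} I would exploit the one-step recursion $F_{t+1}=\frac{t+1}{t+2}F_t+\frac{1}{t+2}\,\widehat G_{t+1}$, where $\widehat G_{t+1}(\theta)=\frac{1}{m}\sum_{i=1}^m l(\theta,\xi_{i,t+1})$ aggregates only the freshly arrived sample. The same strong-monotonicity step applied to $\nabla F_{t+1}$ at $(\theta_{t+1}^*,\theta_t^*)$, together with $\nabla F_{t+1}(\theta_{t+1}^*)=0$ and $\nabla F_t(\theta_t^*)=0$, gives $\mu\|\theta_{t+1}^*-\theta_t^*\|_2\le \|\nabla F_{t+1}(\theta_t^*)\|_2$. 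Differentiating the recursion and using $\nabla F_t(\theta_t^*)=0$ collapses this to $\nabla F_{t+1}(\theta_t^*)=\frac{1}{t+2}\nabla \widehat G_{t+1}(\theta_t^*)$, i.e. the drift is controlled by a single fresh gradient, which is where the $(t+1)^{-2}$ rate originates. The key structural observation is that $\theta_t^*$ depends only on $\{\xi_{i,k}\}_{k\le t}$ and is therefore independent of the new sample $\{\xi_{i,t+1}\}$; conditioning on $\theta_t^*$ and using $\mathbb{E}\big[\|\nabla l(\theta,\xi_{i,t+1})\|_2^2\big]\le \kappa^2+D^2$ (the variance bound of Assumption~\ref{a3}(ii) plus the gradient bound of Assumption~\ref{a2}(ii)) bounds the conditional second moment of $\nabla\widehat G_{t+1}(\theta_t^*)$ by $\kappa^2+D^2$, yielding the $(\kappa^2+D^2)/(t+1)^2$ scaling.

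The step I expect to be the genuine obstacle is the relocation of the stochastic gradient error off the random iterate: the naive quantity $\mathbb{E}\|\nabla F_t(\theta_t^*)-\nabla F(\theta_t^*)\|_2^2$ cannot be handled by a plain variance computation because $\theta_t^*$ is data-dependent, and a crude triangle-inequality split introduces an $L\|\theta_t^*-\theta^*\|_2$ term that swamps the $\mu$ curvature. The monotonicity trick above is precisely what evades this, at the price of needing the empirical objective to inherit the curvature of Assumption~\ref{a2}(iii). Finally, to reach the stated constants I would combine the strong-convexity lower bound with the $L$-smoothness of the loss (Assumption~\ref{a3}(iii)) when bounding the relevant optimality gaps, which accounts for both the $2/\mu^2$ and the $1/L^2$ contributions as well as the overall factor $16$; these are looseness absorbed into the constant rather than into the rate.
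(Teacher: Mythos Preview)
Your route is genuinely different from the paper's and, as you yourself flag, it leans on a hypothesis that is not available. For \eqref{approximateresult2} you use $\mu$-strong monotonicity of $\nabla F_t$ to relocate the stochastic error to the deterministic point $\theta^*$; for \eqref{L1result} you do the same with $\nabla F_{t+1}$. But Assumption~\ref{a2}(iii) gives $\mu$-strong convexity of the \emph{population} objective $F$ only, not of the empirical $F_t$. Nothing in Assumptions~\ref{a2}--\ref{a3} forces the per-sample losses $l(\cdot,\xi)$ to be convex, let alone strongly convex, so $F_t$ need not inherit any curvature from $F$. Your remark ``at the price of needing the empirical objective to inherit the curvature'' is exactly the gap: that price cannot be paid under the stated assumptions, and without it the very first step $\langle\nabla F_t(\theta_t^*)-\nabla F_t(\theta^*),\theta_t^*-\theta^*\rangle\ge\mu\|\theta_t^*-\theta^*\|_2^2$ fails.

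The paper sidesteps this by working at the level of function values rather than gradients. It starts from the trivial optimality inequality $F_t(\theta_t^*)\le F_t(\theta^*)$ (no curvature of $F_t$ needed), rewrites $F(\theta_t^*)-F(\theta^*)\le\big(F(\theta_t^*)-F_t(\theta_t^*)\big)-\big(F(\theta^*)-F_t(\theta^*)\big)$, and applies the mean value theorem to the difference $F-F_t$ to obtain $F(\theta_t^*)-F(\theta^*)\le\|\nabla F(\chi)-\nabla F_t(\chi)\|_2\,\|\theta_t^*-\theta^*\|_2$ for some $\chi$ on the segment. Only \emph{then} is strong convexity invoked, and it is the strong convexity of $F$ (not $F_t$) that converts the function-value gap into $\tfrac{\mu}{2}\|\theta_t^*-\theta^*\|_2^2$. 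The gradient error $\|\nabla F(\chi)-\nabla F_t(\chi)\|_2$ is handled through the uniform-in-$\theta$ variance bound of Assumption~\ref{a3}(ii) together with a Lyapunov inequality, which is what absorbs the randomness of $\chi$. For \eqref{L1result} the paper simply cites its prior work, but the mechanism is analogous. In short: your monotonicity trick is elegant when the empirical objective is strongly convex, but under the paper's assumptions you need the function-value route so that curvature is only ever demanded of $F$.
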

\begin{proof}
The inequality~\eqref{L1result} can be obtained following the same line of reasoning of Lemma~1 in our prior work~\cite{ziji1}. 
	
We proceed to prove inequality~\eqref{approximateresult2}. Considering the relationship $F_t(\theta^*_t)\leq F_t(\theta^*)$, we obtain
	\begin{equation}
		F(\theta_t^*)-F(\theta^*)\leq\big(F(\theta^*_t)-F_t(\theta^*_t)\big)   -\big(F(\theta^*)-F_t(\theta^*)\big).\label{1L1}
	\end{equation}
	By applying the mean value theorem to~\eqref{1L1}, one has
	\begin{equation}
		\begin{aligned}
			F(\theta_t^*)-F(\theta^*)&\leq\big\langle \nabla F(\chi)-\nabla F_t (\chi), \theta_t^*-\theta^*\big\rangle_{2}\\
			&\leq \|\nabla F(\chi)-\nabla F_t (\chi)\|_{2}\|\theta_t^*-\theta^*\|_{2},\label{1L2}
		\end{aligned}
	\end{equation}
	where the variable $\chi$ is given by $\chi=\alpha\theta_t^*+(1-\alpha)\theta^*$ for some constant $\alpha\in(0,1)$.
	
	The definition $\nabla F(\chi)=\frac{1}{m}\sum_{i=1}^{m}\mathbb{E}[\nabla l(\chi,\xi_{i})]$ implies
	\begin{flalign}
			&\mathbb{E}\big[\|\nabla F_t (\chi)-\nabla F(\chi)\|_{2}\big]= \mathbb{E}\Bigg[\Bigg\|\frac{1}{m}\sum_{i=1}^{m}\nabla f_{i,t}(\chi)-\nabla F(\chi)\Bigg\|_{2}\Bigg]\nonumber\\
			&\leq \frac{1}{m}\sum_{i=1}^{m}\frac{1}{t+1}\sum_{k=0}^{t} \mathbb{E}\big[\|\nabla l(\chi,\xi_{i,k})-\mathbb{E}[\nabla l(\chi,\xi_{i,k})]\|_{2}\big].\label{1L3}
	\end{flalign}
	Given that the data points $\xi_{i,k}$ are independently and identically distributed across iterations, we use Assumption~\ref{a3}(ii) and the Lyapunov inequality $E[\|X\|]\!\leq\!(E[\|X\|^{p}])^{\frac{1}{p}},~\forall p\!\geq\!1$ to obtain
	\begin{equation}
		\begin{aligned}
			&\sum_{k=0}^{t} \mathbb{E}\left[\|\nabla l(\chi,\xi_{i,k})-\mathbb{E}[\nabla l(\chi,\xi_{i,k})]\|_{2}\right]\\
			&\leq\sqrt{\mathbb{E}\Bigg[\Bigg(\sum_{k=0}^{t}\left\|\nabla l(\chi,\xi_{i,k})-\mathbb{E}[\nabla l(\chi,\xi_{i,k})]\right\|_{2}\Bigg)^2\Bigg]}\\
			&= \sqrt{ \mathbb{E}\Bigg[\sum_{k=0}^{t}\left\|\nabla l(\chi,\xi_{i,k})-\nabla f_{i}(\chi)\right\|_{2}^2\Bigg]}\leq \kappa \sqrt{t+1}.\label{1L4}
		\end{aligned}
	\end{equation}
	
	Substituting~\eqref{1L4} into~\eqref{1L3} yields $\mathbb{E}\left[\|\nabla F_t (\chi)-\nabla F(\chi)\|_{2}\right]\!\leq\!\frac{\kappa}{\sqrt{t+1}}$. By using~\eqref{1L1} and~\eqref{1L2}, we have
	\begin{equation}
		\mathbb{E}\left[F(\theta_t^*)-F(\theta^*)\right]\le \frac{\kappa}{\sqrt{t+1}}\mathbb{E}\left[\|\theta_{t}^*-\theta^*\|_{2}\right].\label{1L5}
	\end{equation}
	Assumption~\ref{a2}(iii) with $\mu>0$ implies
	$\frac{\mu}{2}\|\theta_{t}^*-\theta^*\|_{2}^2\leq F(\theta_{t}^*)-F(\theta^*).$
	Combing this relation and~\eqref{1L5}, we arrive at
	\begin{equation}
		\frac{\mu}{2}\mathbb{E}\left[\|\theta_{t}^*-\theta^*\|_{2}^2\right]\leq \frac{\kappa}{\sqrt{t+1}}\mathbb{E}\left[\|\theta_{t}^*-\theta^*\|_{2}\right],\label{12}
	\end{equation}
	which implies $\mathbb{E}\left[\|\theta_{t}^*-\theta^*\|_{2}\right]\leq \frac{2\kappa}{\mu}(t+1)^{-\frac{1}{2}}$ and inequality~\eqref{approximateresult2}.
\end{proof}

With this understanding, our goal is to design a distributed learning algorithm on general directed graphs which enables individual learners to track the optimal solution $\theta_{t}^*$ to problem~\eqref{primalt} under the constraints of LDP and sequentially arriving data samples. Based on the convergence result in~\eqref{approximateresult2}, individual learners' parameters will also converge to the true optimal solution to problem~\eqref{primal}, even under the constraints of LDP and sequentially arriving data samples.

\section{Online gradient tracking with LDP}\label{algorithm}
In this section, we develop an online gradient tracking based distributed learning algorithm over directed graphs to solve problem~\eqref{primal} with ensured $\epsilon_{i}$-LDP. Before introducing our algorithm, we first show the limitation of conventional gradient-tracking algorithms under LDP constraints.
\subsection{The conventional gradient tracking accumulates DP noises in gradient estimation}\label{motivation}
To preserve privacy, persistent DP noises have to be added to messages shared in each iteration of distributed online learning. In conventional gradient tracking based algorithms, the injected DP noises will accumulate in the global gradient estimation, thereby significantly affecting learning accuracy.

We use the classic Push-Pull gradient-tracking algorithm in~\cite{pushpull} as an example to illustrate the idea. In the absence of LDP constraints, i.e., when no DP noise is introduced into the information exchange among learners, the Push-Pull algorithm can be described in matrix form as follows:
\begin{equation}
\left\{\begin{aligned}
&\boldsymbol{\theta}_{t+1}=\mathbf{R}\boldsymbol{\theta}_{t}-\lambda_{t}\boldsymbol{y}_{t},\\
&\boldsymbol{y}_{t+1}=\mathbf{C}\boldsymbol{y}_{t}+\nabla\boldsymbol{f}_{t+1}(\boldsymbol{\theta}_{t+1})-\nabla\boldsymbol{f}_{t}(\boldsymbol{\theta}_{t}),\nonumber
\end{aligned}
\right.
\end{equation}
where the matrices $\boldsymbol{\theta}_{t}$, $\boldsymbol{y}_{t}$, and $\nabla\boldsymbol{f}_{t}(\boldsymbol{\theta}_{t})$ are defined as $\boldsymbol{\theta}_{t}\!=\![\theta_{1,t},\cdots,\theta_{m,t}]^{T}\!\in\!\mathbb{R}^{m\times n}$, $\boldsymbol{y}_{t}\!=\![y_{1,t},\cdots,y_{m,t}]^{T}\!\in\! \mathbb{R}^{m\times n}$, and $\nabla\boldsymbol{f}_{t}(\boldsymbol{\theta}_{t})\!\!=\!\![\nabla f_{1,t}(\theta_{1,t}),\cdots,\nabla f_{m,t}(\theta_{m,t})]^{T}\!\!\in\!\! \mathbb{R}^{m\times n},$ respectively. The matrices $\mathbf{R}$ and $\mathbf{C}$ are defined in the statement of Lemma~\ref{l1}.

Using initialization $\boldsymbol{y}_{0}=\nabla\boldsymbol{f}_{0}(\boldsymbol{\theta}_{0})$, we obtain
\begin{equation}
\mathbf{1}^{T}\boldsymbol{y}_{t}=\mathbf{1}^{T}\nabla\boldsymbol{f}_{t}(\boldsymbol{\theta}_{t}),\nonumber
\end{equation}
which means that ensuring the consensus of all $y_{i,t}$, i.e., $y_{i,t}=\frac{1}{m}\mathbf{1}^{T}\boldsymbol{y}_{t}$, is sufficient to guarantee each learner to track the global gradient, i.e., $y_{i,t}=\frac{1}{m}\mathbf{1}^{T}\nabla\boldsymbol{f}_{t}(\boldsymbol{\theta}_{t})=\frac{1}{m}\sum_{i=1}^{m}\nabla f_{i,t}(\theta_{i,t})$.

To achieve $\epsilon_{i}$-LDP, DP noises have to be added to both shared variables $\boldsymbol{\theta}_{t}$ and $\boldsymbol{y}_{t}$. Then, the update of the conventional Push-Pull algorithm becomes
\begin{subequations}\label{classGTnoise}
\begin{numcases}{}
\boldsymbol{\theta}_{t+1}=\mathbf{R}\boldsymbol{\theta}_{t}+\boldsymbol{\vartheta}_{R,t}-\lambda_{t}\boldsymbol{y}_{t},\label{classGTnoisex}\\
\boldsymbol{y}_{t+1}=\mathbf{C}\boldsymbol{y}_{t}+\boldsymbol{\zeta}_{C,t}+\nabla\boldsymbol{f}_{t+1}(\boldsymbol{\theta}_{t+1})-\nabla\boldsymbol{f}_{t}(\boldsymbol{\theta}_{t}),\label{classGTnoisey}
\end{numcases}
\end{subequations}
where the matrices $\boldsymbol{\vartheta}_{R,t}\!\!\in\!\! \mathbb{R}^{m\times n}$ and $\boldsymbol{\zeta}_{C,t}\!\!\in\!\! \mathbb{R}^{m\times n}$ represent the DP noises injected on variables $\boldsymbol{\theta}_{t}$ and $\boldsymbol{y}_{t}$, respectively.

It can be seen that even under the condition $\boldsymbol{y}_{0}=\nabla\boldsymbol{f}_{0}(\boldsymbol{\theta}_{0})$, we can only establish the following relation through induction:
\begin{equation}
\mathbf{1}^{T}\boldsymbol{y}_{t}=\mathbf{1}^{T}\left(\nabla\boldsymbol{f}_{t}(\boldsymbol{\theta}_{t})+\sum_{k=0}^{t-1}\boldsymbol{\zeta}_{C,k}\right),\label{accuerror}
\end{equation}
which implies that the DP noise accumulates over time in the estimate of the global gradient. Therefore, when the gradient-estimate variable $\boldsymbol{y}_{t}$ is directly fed into the model parameter update~\eqref{classGTnoisex}, learning accuracy will be compromised. This prediction is corroborated by our experimental results in Fig. 2-Fig. 4. The issue of DP-noise accumulations also exists in other gradient tracking based algorithms for distributed learning and optimization.

\begin{remark}
To circumvent the accumulation of noises in gradient estimation, recent work~\cite{pushi} proposes a robust gradient-tracking method for distributed offline optimization. However, this method cannot completely eliminate the~influence of persistent information-sharing noises, and thus is subject to steady-state errors. Although our recent work~\cite{Wanggradient}~employs a weakening factor in inter-agent interaction to attenuate noise influence and ensure optimization accuracy, such a weakening factor decreases the coupling strength among agents, which in turn reduces the speed of algorithmic convergence. Notably, neither of these works consider privacy analysis. In fact, under the constant stepsize and noise variances employed in~\cite{pushi} or the single weakening factor used in~\cite{Wanggradient}, it is impossible to ensure rigorous LDP in the infinite time horizon.

Recent works~\cite{Dingtie,shiling} have investigated DP design for gradient-tracking algorithms. However, both of the results face the dilemma of trading optimization accuracy for privacy. To tackle this dilemma, our recent work~\cite{Tailoring} achieves accurate convergence and privacy protection simultaneously. However, this approach relies on two carefully designed weakening factors to attenuate the impact of DP noises. Such weakening factors significantly slow down algorithmic convergence, as substantiated by our experimental results in Fig. 2-Fig. 4. 

Moreover, all the aforementioned works~\cite{pushi,Wanggradient,Dingtie,shiling,Tailoring} require static and predetermined datasets, making them unsuitable to online learning scenarios where data arrives sequentially. To the best of our knowledge, no existing work has explored LDP design for gradient tracking based algorithms in an online setting.
\end{remark}

\subsection{LDP design for online gradient tracking}
We present Algorithm~1 to address problem~\eqref{primal} over directed graphs under the constraints of LDP and sequentially arriving data. The injected DP noises satisfy Assumption~\ref{a4}.
\begin{table}[H]
\renewcommand\arraystretch{1.2}
\centering
\begin{tabular}{p{0.95\linewidth}}
\Xhline{1.0pt}
{\textbf{Algorithm 1:} LDP design for distributed online learning (from learner $i$'s perspective)} \\ \hline 
\textbf{Initialization:} $\theta_{i,0}\in \mathbb{R}^{n}$, $s_{i,0}\in \mathbb{R}^{n}$, $z_{i,0}=\boldsymbol{e}_{i}\in \mathbb{R}^{m},$ where $\boldsymbol{e}_{i}$ has the $i$th element equal to one and all other elements equal to zero,
$R\in \mathbb{R}^{m\times m}$, $C\in \mathbb{R}^{m\times m}$, and the stepsize $\lambda_{t}=\frac{\lambda_{0}}{(t+1)^{v}}$ with $\lambda_{0}>0$ and $v\in(\frac{1}{2},1)$.
		
1: \textbf{for} $t=0,1,2,\cdots$ \textbf{do}\\
2: Using all available data up to time $t$, i.e., $\xi_{i,k}$ for $k\in [0,t]$ and the current parameter $\theta_{i,t}$, learner $i$ computes the gradient $\nabla f_{i,t}(\theta_{i,t})=\frac{1}{t+1}\sum_{k=0}^{t}\nabla l(\theta_{i,t},\xi_{i,k})$.\\
3: Push $s_{i,t}+\zeta_{i,t}$ to neighbors $j,~j\in{\mathcal{N}}_{C,i}^{\text{out}}$ and pull $s_{j,t}+\zeta_{j,t}$ from neighbors $j,~j\in{\mathcal{N}}_{C,i}^{\text{in}}$. The Laplace DP-noise $\zeta_{i,t}$ satisfies Assumption~\ref{a4}.\\
4: \textbf{Update tracking variable:}
\begin{equation}
s_{i,t+1}=(1+C_{ii})s_{i,t}+\sum_{j\in{\mathcal{N}}_{C,i}^{\text{in}}}C_{ij}(s_{j,t}+\zeta_{j,t})+\lambda_{t}\nabla f_{i,t}(\theta_{i,t}).\label{dynamics}
\end{equation}
5. Push $\theta_{i,t}+\vartheta_{i,t}$ to neighbors $j,~j\in{\mathcal{N}_{R,i}^{\text{out}}}$ and pull $\theta_{j,t}+\vartheta_{j,t}$ from $j,~j\in{\mathcal{N}_{R,i}^{\text{in}}}$. The Laplace DP-noise $\vartheta_{i,t}$ satisfies Assumption~\ref{a4}.\\
6. \textbf{Update model parameter:}
\begin{equation}
\theta_{i,t+1}=(1+R_{ii})\theta_{i,t}+\sum_{j\in{\mathcal{N}_{R,i}^{\text{in}}}}R_{ij}(\theta_{j,t}+\vartheta_{j,t})-\frac{s_{i,t+1}-s_{i,t}}{m[z_{i,t}]_{i}},\label{dynamictheta}
\end{equation}
where $[z_{i,t}]_{i}$ denotes the $i$th element of $z_{i,t}$.\\
7. \textbf{Locally estimate the left eigenvector of $\mathbf{R}$:}
\begin{equation}
z_{i,t+1}=z_{i,t}+\sum_{j\in{\mathcal{N}_{R,i}^{\text{in}}}}R_{ij}(z_{j,t}-z_{i,t}).\label{dynamicthetaz}
\end{equation}
8: \textbf{end} \\
\Xhline{0.9pt}	
\end{tabular}
\end{table}
\begin{assumption}\label{a4}
For every $i\in[m]$ and any time $t\geq 0$, the DP-noises $\zeta_{i,t}$ and  $\vartheta_{i,t}$ are zero-mean and independent across iterations. The noise variance $\mathbb{E}[\|\zeta_{i,t}\|_{2}^2]=(\sigma_{i,t,\zeta})^2$ satisfies
$\sigma_{i,t,\zeta}=\frac{\sigma_{i,0,\zeta}}{(t+1)^{\varsigma_{i,\zeta}}}$ with $\sigma_{i,0,\zeta}>0$ and $\varsigma_{i,\zeta}\in(\frac{1}{2},1).$ The noise variance $\mathbb{E}[\|\vartheta_{i,t}\|_{2}^2]=(\sigma_{i,t,\vartheta})^2$ 
satisfies $\sigma_{i,t,\vartheta}=\frac{\sigma_{i,0,\vartheta}}{(t+1)^{\varsigma_{i,\vartheta}}}$ with $\sigma_{i,0,\vartheta}>0$ and $\varsigma_{i,\vartheta}\in(\frac{1}{2},1).$ Moreover, the following inequality holds:
\begin{equation}
\max_{i\in[m]}\{\varsigma_{i,\zeta},\varsigma_{i,\vartheta}\}<v<1,
\end{equation}
where the parameter $v$ is the decaying rate of stepsize $\lambda_{t}$ in Algorithm~1.
\end{assumption}

The dynamics~\eqref{dynamics}-\eqref{dynamictheta} in Algorithm~1 can be written in the following matrix form:
\begin{subequations}
\begin{numcases}{}
\boldsymbol{s}_{t+1}=\mathbf{C}\boldsymbol{s}_{t}+\boldsymbol{\zeta}_{C,t}+\lambda_{t}\nabla\boldsymbol{f}_{t}(\boldsymbol{\theta}_{t}),\label{dcompacts}\\
\boldsymbol{\theta}_{t+1}=\mathbf{R}\boldsymbol{\theta}_{t}+\boldsymbol{\vartheta}_{R,t}-Z_{t}^{-1}(\boldsymbol{s}_{t+1}-\boldsymbol{s}_{t}),\label{dcompacttheta}
\end{numcases}
\end{subequations}
where the matrices $\boldsymbol{\theta}_{t}$,~$\boldsymbol{s}_{t}$, and $Z_{t}$ are given by $\boldsymbol{\theta}_{t}\!=\![\theta_{1,t},\!\cdots\!,\theta_{m,t}]^{T}\!\in\!\mathbb{R}^{m\times n}$, $\boldsymbol{s}_{t}\!\!=\!\![s_{1,t},\cdots,s_{m,t}]^{T}\!\!\in\!\!\mathbb{R}^{m\times n}$, and $Z_{t}\!=\!\text{diag}(m[z_{1,t}]_{1},\!\cdots\!,m[z_{m,t}]_{m})\!\!\in\!\!\mathbb{R}^{m\times m}$, respectively. The noise variances $\boldsymbol{\zeta}_{C,t}$ and $\boldsymbol{\vartheta}_{R,t}$ are defined as  $\boldsymbol{\zeta}_{C,t}\!=\![\zeta_{C1,t},\cdots,\zeta_{Cm,t}]^{T}\in \mathbb{R}^{m\times n}$ and $\boldsymbol{\vartheta}_{R,t}\!=\![\vartheta_{R1,t},\cdots,\vartheta_{Rm,t}]^{T}\in \mathbb{R}^{m\times n}$ with $\zeta_{Ci,t}\!=\!\sum_{j\in{\mathcal{N}_{C,i}^{\text{in}}}}C_{ij}\zeta_{i,t}$ and $\vartheta_{Ri,t}\!=\!\sum_{j\in{\mathcal{N}_{R,i}^{\text{in}}}}R_{ij}\vartheta_{i,t}$ for $1\leq i\leq m$, respectively.

In \eqref{dcompacttheta}, the difference $\boldsymbol{s}_{t+1}-\boldsymbol{s}_{t}$ is incorporated into the parameter update. This modification effectively addresses the issue of accumulating DP noises in global gradient estimation, as substantiated by the following relation:
\begin{equation}
\begin{aligned}
\mathbf{1}^{T}(\boldsymbol{s}_{t+1}-\boldsymbol{s}_{t})&\!=\!\mathbf{1}^{T}\left(\mathbf{C}\boldsymbol{s}_{t}+\boldsymbol{\zeta}_{C,t}+\lambda_{t}\nabla\boldsymbol{f}_{t}(\boldsymbol{\theta}_{t})-\boldsymbol{s}_{t}\right)\\
&\!=\!\mathbf{1}^{T}(\boldsymbol{\zeta}_{C,t}+\lambda_{t}\nabla\boldsymbol{f}_{t}(\boldsymbol{\theta}_{t})),\label{1s}
\end{aligned}
\end{equation}
where in the derivation we have used~\eqref{dcompacts} and $\mathbf{1}^{T}C=\mathbf{0}^{T}$ from Assumption~\ref{a1}. It is clear that unlike the conventional Push-Pull gradient-tracking algorithm~\eqref{classGTnoise}, where global gradient estimation $\boldsymbol{y}_{t}$ (which is subject to accumulating DP noises as per~\eqref{accuerror}) is directly incorporated into the model parameter update, thereby affecting learning accuracy, our Algorithm~1 effectively circumvents this issue.

In addition, we introduce a local variable $z_{i,t}$ in Algorithm~1 to enable each learner to locally estimate the left eigenvector $u^{T}$ of $\mathbf{R}$. This eliminates the need for global information $u^{T}$, ensuring that our algorithm can be implemented in a fully distributed manner. It is worth noting that since $z_{i,t}$ does not contain sensitive information, adding DP noises to it is unnecessary. Next, we present the following lemma to characterize the error of the eigenvector estimator:

\begin{lemma}\label{l2}
\!\!\cite{Wanggradient} Under Assumption~\ref{a1}, the variables $z_{i,t}$ in~\eqref{dynamicthetaz}, after scaled by $m$, converge to the left eigenvector $u^{T}=[u_{1},\cdots,u_{m}]^{T}$ of $\mathbf{R}$ with a geometric rate, i.e., there exist some constants $c_{z}>0$ and $\gamma_{z}\in(0,1)$ such that the following inequality holds for all $i\in[m]$ and $t\geq 0$:
\begin{equation}
\left|\frac{1}{m[z_{i,t}]_{i}}-\frac{1}{u_{i}}\right|\leq c_{z}\gamma_{z}^{t},
\end{equation}
where $[z_{i,t}]_{i}$ denotes the $i$th element of $z_{i,t}$.
\end{lemma}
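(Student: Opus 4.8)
The plan is to analyze the linear recursion~\eqref{dynamicthetaz} for the eigenvector-estimation variable. Stacking the local variables into $Z_t \triangleq [z_{1,t},\cdots,z_{m,t}]^T$, one sees that~\eqref{dynamicthetaz} is exactly the power iteration $Z_{t+1} = (I+R)^T Z_t = \mathbf{R}^T Z_t$ (read along the correct index, since each learner's update $z_{i,t+1}=z_{i,t}+\sum_{j} R_{ij}(z_{j,t}-z_{i,t})$ is the $i$th row of $\mathbf{R}$ acting on the stacked vector). Under Assumption~\ref{a1}, the induced graph $\mathcal{G}_R$ is strongly connected, and the choice $R_{ii}=-\sum_{j}R_{ij}$ makes $\mathbf{R}=I+R$ a row-stochastic matrix with strictly positive diagonal. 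By Lemma~\ref{l1}, $\mathbf{R}$ has $1$ as a simple eigenvalue with left eigenvector $u^T$ (normalized by $u^T\mathbf{1}=m$) and right eigenvector $\mathbf{1}$, and by Perron--Frobenius all other eigenvalues lie strictly inside the unit disk. Hence the first step is to invoke this spectral structure to conclude geometric convergence of the iteration.

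Second, I would quantify the rate. Since the initialization is $z_{i,0}=\boldsymbol{e}_i$, the stacked matrix starts at $Z_0=I$, so $Z_t=(\mathbf{R}^T)^t$, and the quantity of interest $m[z_{i,t}]_i$ is $m$ times the $(i,i)$ entry of $(\mathbf{R}^T)^t=(\mathbf{R}^t)^T$. Writing $\mathbf{R}=\mathbf{1}\tfrac{u^T}{m} + M$, where $\mathbf{1}\tfrac{u^T}{m}$ is the rank-one projection onto the Perron eigenspace (using $\mathbf{R}\mathbf{1}=\mathbf{1}$, $u^T\mathbf{R}=u^T$, and $u^T\mathbf{1}=m$), the residual $M$ satisfies $\mathbf{R}^t=\mathbf{1}\tfrac{u^T}{m}+M^t$ with $\|M^t\|$ decaying geometrically, say $\|M^t\|\le c\,\rho^t$ for some $\rho\in(0,1)$ determined by the subdominant spectral radius of $\mathbf{R}$. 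Consequently $m[z_{i,t}]_i$ converges to $m\cdot\tfrac{\mathbf{1}_i\,u_i}{m}=u_i$ at geometric rate, and since $u_i>0$ (positivity of the Perron eigenvector) the reciprocal $\tfrac{1}{m[z_{i,t}]_i}$ is well-defined for all $t$ and converges to $\tfrac{1}{u_i}$.

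Third, I would convert convergence of $m[z_{i,t}]_i\to u_i$ into the stated bound on reciprocals. Let $a_t=m[z_{i,t}]_i$ and $a_\infty=u_i>0$; from the geometric bound $|a_t-u_i|\le \tilde c\,\rho^t$ and the elementary identity $\left|\tfrac{1}{a_t}-\tfrac{1}{u_i}\right|=\tfrac{|a_t-u_i|}{a_t\,u_i}$, it suffices to bound $a_t$ away from zero uniformly in $t$. This is where a little care is needed: since $a_t\to u_i>0$ geometrically, there is a finite $t_0$ beyond which $a_t\ge u_i/2$, giving $\left|\tfrac{1}{a_t}-\tfrac{1}{u_i}\right|\le \tfrac{2\tilde c}{u_i^2}\rho^t$ for $t\ge t_0$; for the finitely many $t<t_0$ one uses positivity of each $a_t$ (each $z_{i,t}$ stays strictly positive because $\mathbf{R}^t$ has strictly positive diagonal entries, or more simply because the iteration preserves a positive lower bound on $[z_{i,t}]_i$) to absorb those terms into the constant by enlarging $c_z$ and choosing $\gamma_z\in(\rho,1)$. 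Taking $c_z$ as the resulting maximal constant and $\gamma_z=\rho$ (after the uniform adjustment) yields the claimed inequality for all $i\in[m]$ and $t\ge 0$.

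I expect the main obstacle to be the uniform positive lower bound on $m[z_{i,t}]_i$, which guarantees that the reciprocal is well-defined and that the denominator $a_t u_i$ in the reciprocal-difference identity does not degrade the geometric rate. Establishing this lower bound cleanly — rather than merely asymptotically — is the crux, and it follows from the strictly positive diagonal of $\mathbf{R}$ together with the primitivity inherited from strong connectivity of $\mathcal{G}_R$; once that is in hand, the rest is a routine conversion of a geometric error bound through the reciprocal map. (The result itself is quoted from~\cite{Wanggradient}, so the proof here is essentially a reconstruction of that argument via the Perron--Frobenius decomposition of $\mathbf{R}$.)
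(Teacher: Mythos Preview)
Your reconstruction is essentially correct and follows the standard Perron--Frobenius route; the paper itself does not give a proof but simply cites the result from~\cite{Wanggradient}, so there is no in-paper argument to compare against. One small slip: with your stacking convention $Z_t=[z_{1,t},\ldots,z_{m,t}]^T$ (rows equal to $z_{i,t}^T$), the recursion~\eqref{dynamicthetaz} reads $Z_{t+1}=\mathbf{R}\,Z_t$, not $\mathbf{R}^T Z_t$ --- your parenthetical ``$i$th row of $\mathbf{R}$'' is the correct reading. This does not affect the conclusion because you only use the diagonal entries $(\mathbf{R}^t)_{ii}$, which coincide with those of $(\mathbf{R}^t)^T$. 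Also note that the positivity of the diagonal of $\mathbf{R}=I+R$ (needed for your uniform lower bound $(\mathbf{R}^t)_{ii}\ge(\mathbf{R}_{ii})^t>0$) is implicitly assumed throughout the paper (e.g., Theorem~\ref{t4} iterates $(1-|R_{ii}|)^{t-q}$), so you may invoke it freely.
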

\begin{remark}
Algorithm 1 avoids using weakening factors on inter-agent interaction to attenuate the influence of DP noises, which is key in our prior results~\cite{Tailoring} and~\cite{ziji1} to ensure both optimization accuracy and rigorous DP. Given that a weakening factor will gradually reduce the strength of inter-agent coupling, and hence unavoidably decrease the convergence speed, our algorithm can ensure faster convergence compared with~\cite{Tailoring} and~\cite{ziji1}, which is corroborated by our analytical comparison in Sec.~\ref{learningaccuracy} and experimental results in Sec.~\ref{experiment}.
\end{remark}

\section{Online Learning Accuracy Analysis}\label{convergence}
In this section, we quantify the learning accuracy of Algorithm~1. To this end, we present some useful lemmas.
\vspace{-0.5em}
\subsection{Supporting lemmas}\label{supportlemma}
\begin{lemma}\label{l3}
\!\!\cite{pushpull} Under Assumption~\ref{a1}, there exist vector norms $\|x\|_{R}\triangleq\|\tilde{R}x\|_{2}$ and $\|x\|_{C}\triangleq\|\tilde{C}x\|_{2}$ for all $x\in \mathbb{R}^{m}$, where $\tilde{R},~\tilde{C}\in{\mathbb{R}^{m\times m}}$ are some reversible matrices\footnote{As indicated in~\cite{pushpull} and~\cite{Restimate}, $\tilde{R}$ and $\tilde{C}$ are determined by $R$ and $C$, respectively. For our analysis of learning accuracy, only the existence of $\tilde{R}$ ($\tilde{C}$) suffices and its explicit expression is not necessary. A detailed discussion on $\tilde{R}$ ($\tilde{C}$) is available in Lemma 5 of~\cite{pushi}, as well as Lemma 5.6.10 of~\cite{matrixanalysis}.}, such that $\|\mathbf{R}-\frac{\mathbf{1}u^{T}}{m}\|_{R}<1$ and $\|\mathbf{C}-\frac{\omega\mathbf{1}^{T}}{m}\|_{C}<1$ are arbitrarily close to the spectral radius of $\mathbf{R}-\frac{\mathbf{1}u^{T}}{m}$ and $\mathbf{C}-\frac{\omega\mathbf{1}^{T}}{m}$, respectively. 
\end{lemma}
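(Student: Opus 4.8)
The plan is to reduce the statement to a classical fact in matrix analysis: for any real square matrix $A$ and any $\varepsilon>0$, there is an invertible matrix $S$ for which the vector norm $\|x\|_{S}\triangleq\|Sx\|_{2}$ induces a matrix norm satisfying $\|A\|_{S}\leq\rho(A)+\varepsilon$, where $\rho(A)$ denotes the spectral radius (this is Lemma~5.6.10 of~\cite{matrixanalysis}, see also Lemma~5 of~\cite{pushi}). Since $\rho(A)\leq\|A\|_{S}$ for every induced matrix norm, this bound forces $\|A\|_{S}$ to lie within $\varepsilon$ of $\rho(A)$, which is precisely the ``arbitrarily close'' claim. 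I would apply this fact to $A=\mathbf{R}-\frac{\mathbf{1}u^{T}}{m}$ and to $A=\mathbf{C}-\frac{\omega\mathbf{1}^{T}}{m}$; it then remains only to show that each of these two deflated matrices has spectral radius strictly below $1$, after which choosing $\varepsilon$ with $\rho(A)+\varepsilon<1$ supplies the required invertible matrices $\tilde{R}$, $\tilde{C}$ and the norms $\|\cdot\|_{R}$, $\|\cdot\|_{C}$.

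The core step is thus to compute $\rho(\mathbf{R}-\frac{\mathbf{1}u^{T}}{m})$. First I would collect the spectral data of $\mathbf{R}=I+R$: from $R\mathbf{1}=\mathbf{0}$ in Assumption~\ref{a1} we get $\mathbf{R}\mathbf{1}=\mathbf{1}$, while Lemma~\ref{l1} gives $u^{T}\mathbf{R}=u^{T}$ and $u^{T}\mathbf{1}=m$. Writing $P\triangleq\frac{\mathbf{1}u^{T}}{m}$, the normalization $u^{T}\mathbf{1}=m$ yields $P^{2}=P$, and the two eigenrelations give $\mathbf{R}P=P\mathbf{R}=P$, so $P$ is the rank-one spectral projection onto the Perron eigenspace $\mathrm{span}\{\mathbf{1}\}$, with $\ker P=\{v:u^{T}v=0\}$. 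Because $P$ commutes with $\mathbf{R}$, both $\mathrm{range}\,P$ and $\ker P$ are $\mathbf{R}$-invariant; on $\mathrm{range}\,P$ the deflated matrix $\mathbf{R}-P$ vanishes, whereas on $\ker P$ it coincides with the restriction of $\mathbf{R}$, whose spectrum consists of exactly the eigenvalues of $\mathbf{R}$ other than the Perron value $1$. Strong connectivity of $\mathcal{G}_{R}$ makes $\mathbf{R}$ irreducible, and the positive diagonal of $\mathbf{R}$ makes it aperiodic, hence primitive, so all non-Perron eigenvalues have modulus strictly below $1$; therefore $\rho(\mathbf{R}-\frac{\mathbf{1}u^{T}}{m})<1$.

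The argument for $\mathbf{C}=I+C$ is the mirror image, with the roles of left and right eigenvectors exchanged. Here $\mathbf{1}^{T}C=\mathbf{0}^{T}$ gives $\mathbf{1}^{T}\mathbf{C}=\mathbf{1}^{T}$ and Lemma~\ref{l1} gives $\mathbf{C}\omega=\omega$ with $\mathbf{1}^{T}\omega=m$, so that $Q\triangleq\frac{\omega\mathbf{1}^{T}}{m}$ satisfies $Q^{2}=Q$ and $\mathbf{C}Q=Q\mathbf{C}=Q$, i.e.\ $Q$ is the rank-one spectral projection for the eigenvalue $1$ of $\mathbf{C}$. The same invariant-subspace decomposition shows that $\mathbf{C}-Q$ has spectral radius equal to the largest modulus among the non-unit eigenvalues of $\mathbf{C}$. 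To see that this is below $1$, I would pass to the row-stochastic matrix $\mathbf{C}^{T}$: since $\mathcal{G}_{C^{T}}$ contains a spanning tree and $\mathbf{C}^{T}$ has a positive diagonal, $\mathbf{C}^{T}$ is stochastic, indecomposable and aperiodic, so its eigenvalue $1$ is simple and strictly dominant, and the same holds for $\mathbf{C}$. Invoking the norm-approximation lemma once more then produces $\tilde{C}$ and $\|\cdot\|_{C}$.

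I expect the main obstacle to be the strict-dominance claim for the two deflated matrices rather than the norm construction, which is entirely off the shelf. The delicate point is that subtracting the rank-one term must remove precisely the eigenvalue $1$ and leave the rest of the spectrum intact; the clean way to guarantee this is the commutation identities $\mathbf{R}P=P\mathbf{R}=P$ and $\mathbf{C}Q=Q\mathbf{C}=Q$ together with the invariant-subspace splitting, which in turn rely on the normalizations $u^{T}\mathbf{1}=m$ and $\mathbf{1}^{T}\omega=m$ from Lemma~\ref{l1}. A further subtlety is ruling out other unimodular eigenvalues: strong connectivity (for $\mathcal{G}_{R}$) and the spanning-tree condition (for $\mathcal{G}_{C^{T}}$) from Assumption~\ref{a1} provide irreducibility, but strict dominance of the eigenvalue $1$ additionally needs aperiodicity, which is supplied by the positive diagonal entries (self-loops) of $\mathbf{R}$ and $\mathbf{C}$ inherent in the weight design underlying Lemma~\ref{l1}. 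The $\mathbf{C}$ case is the more delicate of the two, since it rests on the weaker spanning-tree hypothesis and is cleanest to handle through the stochastic-indecomposable-aperiodic property of $\mathbf{C}^{T}$.
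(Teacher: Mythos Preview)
Your proposal is correct and follows precisely the route the paper points to: the paper does not prove Lemma~\ref{l3} itself but cites it from~\cite{pushpull}, with the footnote directing the reader to Lemma~5 of~\cite{pushi} and Lemma~5.6.10 of~\cite{matrixanalysis} for the norm construction---exactly the ingredients you invoke. Your spectral-projection argument showing $\rho(\mathbf{R}-\frac{\mathbf{1}u^{T}}{m})<1$ and $\rho(\mathbf{C}-\frac{\omega\mathbf{1}^{T}}{m})<1$ is the standard justification (and matches the paper's remark immediately after the lemma that the spectral radius equals $1-|\nu_{R}|<1$), so there is nothing to add.
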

According to Lemma~\ref{l3} in~\cite{pushpull} and~\cite{Wanggradient}, we can know that the spectral radius of the matrix $\mathbf{R}-\frac{\mathbf{1}u^{T}}{m}$ is equal to $1-|\nu_{R}|<1$, where $\nu_{R}$ is an eigenvalue of $R$. Lemma~4 indicates that $\|\mathbf{R}-\frac{\mathbf{1}u^{T}}{m}\|_{R}$ is arbitrarily close to the spectral radius of $\mathbf{R}-\frac{\mathbf{1}u^{T}}{m}$, i.e., $1-|\nu_{R}|$. Without loss of generality, we denote $\|\mathbf{R}-\frac{\mathbf{1}u^{T}}{m}\|_{R}=1-\rho_{R}<1$, where $\rho_{R}$ serves as an arbitrarily close approximation of $|\nu|_{R}$. Similarly, we denote $\|\mathbf{C}-\frac{\omega\mathbf{1}^{T}}{m}\|_{C}=1-\rho_{C}<1$, where $\rho_{C}$ is an arbitrarily close approximation of $|\nu_{C}|$ with $\nu_{C}$ an eigenvalue of $C$.

Following~\cite{pushpull} and~\cite{Wanggradient}, we proceed to define the following matrix norms for any matrices $\boldsymbol{x}\in \mathbb{R}^{m\times n}$ and $\boldsymbol{y}\in \mathbb{R}^{m\times n}$:
\begin{equation}
	\begin{aligned}
		\|\boldsymbol{x}\|_{R}&=\left\|\left[\|\boldsymbol{x}^{(1)}\|_{R},\cdots,\|\boldsymbol{x}^{(n)}\|_{R}\right]\right\|_{2},\\
		\|\boldsymbol{y}\|_{C}&=\left\|\left[\|\boldsymbol{y}^{(1)}\|_{C},\cdots,\|\boldsymbol{y}^{(n)}\|_{C}\right]\right\|_{2},\label{normdef}
	\end{aligned}
\end{equation}
where $\boldsymbol{x}^{(i)}$ and $\boldsymbol{y}^{(i)}$ denote the $i$th column of $\boldsymbol{x}$ and $\boldsymbol{y}$ for $1\leq i\leq n$, respectively. The subscript $2$ denotes the $2$-norm.

\begin{lemma}\label{l4}
\!\!\cite{pushpull} Given an arbitrary norm $\|\cdot\|$, for any $M\in{\mathbb{R}^{m\times m}}$ and $\boldsymbol{x}\in{\mathbb{R}^{m\times n}}$, we have $\|M\boldsymbol{x}\|\leq\|M\|\|\boldsymbol{x}\|$. In particular, for any $m\in\mathbb{R}^{m\times 1}$ and $x\in\mathbb{R}^{1\times n}$, we have $\|mx\|=\|m\|\|x\|_{2}$.
\end{lemma}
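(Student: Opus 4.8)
The plan is to reduce the matrix inequality to the familiar submultiplicativity of an induced operator norm applied column by column. Recall from~\eqref{normdef} that for a matrix $\boldsymbol{x}\in\mathbb{R}^{m\times n}$ with columns $\boldsymbol{x}^{(1)},\dots,\boldsymbol{x}^{(n)}$, the matrix norm is built by first taking the vector norm $\|\cdot\|$ of each column and then taking the Euclidean $2$-norm of the resulting $n$-vector of column norms; here $\|M\|$ denotes the operator norm induced on $\mathbb{R}^{m\times m}$ by the underlying vector norm $\|\cdot\|$ (which, for the norms $\|\cdot\|_{R}$, $\|\cdot\|_{C}$ of Lemma~\ref{l3}, coincides with $\|\tilde{R}M\tilde{R}^{-1}\|_{2}$ and $\|\tilde{C}M\tilde{C}^{-1}\|_{2}$, respectively). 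The starting observation is that matrix multiplication acts column-wise, so the $i$th column of $M\boldsymbol{x}$ is exactly $M\boldsymbol{x}^{(i)}$.

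First I would bound each column separately. By definition of the induced operator norm we have $\|M\boldsymbol{x}^{(i)}\|\leq\|M\|\,\|\boldsymbol{x}^{(i)}\|$ for every $i\in\{1,\dots,n\}$. Hence the nonnegative $n$-vector $\big[\|M\boldsymbol{x}^{(1)}\|,\dots,\|M\boldsymbol{x}^{(n)}\|\big]$ is dominated entrywise by $\|M\|\cdot\big[\|\boldsymbol{x}^{(1)}\|,\dots,\|\boldsymbol{x}^{(n)}\|\big]$.

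Next I would invoke the monotonicity of the Euclidean norm under entrywise domination of nonnegative vectors together with its absolute homogeneity. Applying $\|\cdot\|_{2}$ to both sides of the entrywise bound and factoring out the scalar $\|M\|$ gives
\begin{equation}
\|M\boldsymbol{x}\|=\big\|[\|M\boldsymbol{x}^{(1)}\|,\dots,\|M\boldsymbol{x}^{(n)}\|]\big\|_{2}\leq\|M\|\,\big\|[\|\boldsymbol{x}^{(1)}\|,\dots,\|\boldsymbol{x}^{(n)}\|]\big\|_{2}=\|M\|\,\|\boldsymbol{x}\|,\nonumber
\end{equation}
which is the first claim.

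For the second claim I would exploit the rank-one structure of $mx$. Since $m$ is a column and $x$ a row, the $j$th column of $mx$ equals $x_{j}\,m$, so absolute homogeneity of $\|\cdot\|$ yields $\|(mx)^{(j)}\|=|x_{j}|\,\|m\|$. Feeding this into~\eqref{normdef} and pulling $\|m\|$ out of the $2$-norm gives $\|mx\|=\|m\|\,\big\|[|x_{1}|,\dots,|x_{n}|]\big\|_{2}=\|m\|\,\|x\|_{2}$, establishing the equality. There is no genuine obstacle here: the argument is routine once the two standard facts are in place, and the only point deserving care is the identification of $\|M\|$ as the operator norm induced by $\|\cdot\|$ (so that the per-column submultiplicativity bound is legitimate) and the monotonicity of $\|\cdot\|_{2}$ under entrywise ordering, which is what lets the scalar $\|M\|$ escape the outer Euclidean norm.
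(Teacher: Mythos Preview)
Your argument is correct and is the standard columnwise verification of the submultiplicativity and rank-one identity for the matrix norm defined in~\eqref{normdef}. Note, however, that the paper does not actually prove Lemma~\ref{l4}: it merely quotes the result from~\cite{pushpull}, so there is no in-paper proof to compare against. Your write-up essentially supplies the routine proof one would find (or reconstruct) from that reference.
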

\begin{lemma}\label{l5}
\!\!\cite{pushpull} According to the equivalence of all norms in a finite-dimensional space, there exist constants $\delta_{F,R},\delta_{R,F},\delta_{C,F},\delta_{R,C}, \delta_{F,C}>0$ such that for all $\boldsymbol{x}\in \mathbb{R}^{m\times n}$, we have $\|\boldsymbol{x}\|_{F}\leq \delta_{F,R}\|\boldsymbol{x}\|_{R}$, $\|\boldsymbol{x}\|_{R}\leq \delta_{R,F}\|\boldsymbol{x}\|_{F}$, $\|\boldsymbol{x}\|_{C}\leq \delta_{C,F}\|\boldsymbol{x}\|_{F}$, $\|\boldsymbol{x}\|_{R}\leq \delta_{R,C}\|\boldsymbol{x}\|_{C}$, and $\|\boldsymbol{x}\|_{F}\leq \delta_{F,C}\|\boldsymbol{x}\|_{C}$. 
\end{lemma}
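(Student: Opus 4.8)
The plan is to first simplify the column-wise definitions in \eqref{normdef} into transformed Frobenius norms, and then read off all five inequalities from the compatibility of the Frobenius norm with matrix multiplication. Recall from Lemma~\ref{l3} that $\|\boldsymbol{x}^{(i)}\|_{R}=\|\tilde{R}\boldsymbol{x}^{(i)}\|_{2}$ with $\tilde{R}$ invertible. Since the Frobenius norm of a matrix equals the root-sum-of-squares of the Euclidean norms of its columns, and the columns of $\tilde{R}\boldsymbol{x}$ are exactly $\tilde{R}\boldsymbol{x}^{(1)},\dots,\tilde{R}\boldsymbol{x}^{(n)}$, the definition in \eqref{normdef} collapses to $\|\boldsymbol{x}\|_{R}=\sqrt{\sum_{i=1}^{n}\|\tilde{R}\boldsymbol{x}^{(i)}\|_{2}^{2}}=\|\tilde{R}\boldsymbol{x}\|_{F}$, and likewise $\|\boldsymbol{x}\|_{C}=\|\tilde{C}\boldsymbol{x}\|_{F}$. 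Because $\tilde{R}$ and $\tilde{C}$ are invertible, the maps $\boldsymbol{x}\mapsto\tilde{R}\boldsymbol{x}$ and $\boldsymbol{x}\mapsto\tilde{C}\boldsymbol{x}$ are linear bijections on $\mathbb{R}^{m\times n}$, so $\|\cdot\|_{R}$ and $\|\cdot\|_{C}$ are genuine norms on $\mathbb{R}^{m\times n}$; this in particular discharges the triangle inequality, the only axiom not immediate from the column-wise formula.

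Next I would invoke the compatibility bound $\|M\boldsymbol{x}\|_{F}\leq\|M\|_{2}\|\boldsymbol{x}\|_{F}$, where $\|M\|_{2}$ denotes the spectral (largest-singular-value) norm, which follows by applying the vector inequality columnwise and summing squares. Inserting identities of the form $I=\tilde{R}^{-1}\tilde{R}$ then yields each inequality with an explicit constant: for instance $\|\boldsymbol{x}\|_{R}=\|\tilde{R}\boldsymbol{x}\|_{F}\leq\|\tilde{R}\|_{2}\|\boldsymbol{x}\|_{F}$ gives $\delta_{R,F}=\|\tilde{R}\|_{2}$; $\|\boldsymbol{x}\|_{F}=\|\tilde{R}^{-1}(\tilde{R}\boldsymbol{x})\|_{F}\leq\|\tilde{R}^{-1}\|_{2}\|\boldsymbol{x}\|_{R}$ gives $\delta_{F,R}=\|\tilde{R}^{-1}\|_{2}$; the bound $\|\boldsymbol{x}\|_{C}\leq\|\tilde{C}\|_{2}\|\boldsymbol{x}\|_{F}$ gives $\delta_{C,F}$; writing $\|\boldsymbol{x}\|_{R}=\|\tilde{R}\tilde{C}^{-1}(\tilde{C}\boldsymbol{x})\|_{F}\leq\|\tilde{R}\tilde{C}^{-1}\|_{2}\|\boldsymbol{x}\|_{C}$ gives $\delta_{R,C}=\|\tilde{R}\tilde{C}^{-1}\|_{2}$; and $\|\boldsymbol{x}\|_{F}=\|\tilde{C}^{-1}(\tilde{C}\boldsymbol{x})\|_{F}\leq\|\tilde{C}^{-1}\|_{2}\|\boldsymbol{x}\|_{C}$ gives $\delta_{F,C}=\|\tilde{C}^{-1}\|_{2}$. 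All five constants are strictly positive since $\tilde{R},\tilde{C}$ and their inverses are nonzero.

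Since the statement only asserts existence of the constants, an even shorter route is to note that $\|\cdot\|_{F}$, $\|\cdot\|_{R}$, and $\|\cdot\|_{C}$ are all norms on the finite-dimensional space $\mathbb{R}^{m\times n}$ and cite the standard equivalence-of-norms theorem, which delivers all pairwise bounds at once. In either route, the only genuine content is the verification that the column-wise expressions define norms, and I do not expect a real obstacle here: the transformed-Frobenius identities settle it in one line, after which everything reduces to sub-multiplicativity of the Frobenius norm (or, abstractly, to finite-dimensional norm equivalence). The explicit constants are not needed elsewhere in the paper (cf.\ the footnote to Lemma~\ref{l3}), so either presentation is adequate.
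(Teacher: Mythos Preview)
Your proposal is correct. The paper does not provide its own proof of this lemma; it is stated with a citation to \cite{pushpull} and the justification is embedded in the statement itself (``According to the equivalence of all norms in a finite-dimensional space''). Your argument---first collapsing the column-wise definition \eqref{normdef} to $\|\boldsymbol{x}\|_{R}=\|\tilde{R}\boldsymbol{x}\|_{F}$ and $\|\boldsymbol{x}\|_{C}=\|\tilde{C}\boldsymbol{x}\|_{F}$, then invoking either explicit spectral-norm bounds or abstract norm equivalence---is exactly the content behind that one-line citation, and in fact supplies more detail (including explicit constants) than the paper or its reference requires.
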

\begin{lemma}\label{l6}
The relation $a\gamma^{t}\leq \frac{1}{t^2}$ always holds for all $t>0$ and $\gamma\!\in\!(0,1)$, where the constant $a$ is given by $a=\frac{(\ln(\gamma)e)^2}{4}$.
\end{lemma}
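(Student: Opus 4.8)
The plan is to reduce the claimed bound to a one-variable maximization and to recognize that the stated constant $a=\frac{(\ln(\gamma)e)^{2}}{4}$ is exactly the reciprocal of the global maximum of the function being bounded. Since $t>0$, the inequality $a\gamma^{t}\le t^{-2}$ is equivalent to $a\,t^{2}\gamma^{t}\le 1$, so it suffices to show that $a\,g(t)\le 1$ for all $t>0$, where $g(t)\triangleq t^{2}\gamma^{t}$. I would establish this by computing $\max_{t>0}g(t)$ explicitly and verifying that $a$ equals $1/\max_{t>0}g(t)$, so that the bound in fact holds with equality at the maximizer.

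First I would write $\gamma^{t}=e^{t\ln\gamma}$ and note that, because $\gamma\in(0,1)$, we have $\ln\gamma<0$; hence $g(t)=t^{2}e^{t\ln\gamma}$ vanishes at $t=0$ and tends to $0$ as $t\to\infty$, so its supremum over $(0,\infty)$ is attained at an interior critical point. Differentiating yields $g'(t)=t\,e^{t\ln\gamma}\bigl(2+t\ln\gamma\bigr)$, whose unique positive root is $t^{\ast}=-\tfrac{2}{\ln\gamma}$. Since $g'$ is positive on $(0,t^{\ast})$ and negative on $(t^{\ast},\infty)$, the point $t^{\ast}$ is the global maximizer.

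Next I would evaluate $g$ at $t^{\ast}$. Substituting $t^{\ast}=-2/\ln\gamma$ gives $g(t^{\ast})=\frac{4}{(\ln\gamma)^{2}}\,e^{-2}=\frac{4}{(\ln(\gamma)e)^{2}}$. Therefore $a\,g(t^{\ast})=\frac{(\ln(\gamma)e)^{2}}{4}\cdot\frac{4}{(\ln(\gamma)e)^{2}}=1$, and since $g(t)\le g(t^{\ast})$ for every $t>0$, we conclude $a\,t^{2}\gamma^{t}=a\,g(t)\le a\,g(t^{\ast})=1$, which is precisely $a\gamma^{t}\le t^{-2}$.

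I do not expect a genuine obstacle, as this is an elementary single-variable optimization; the only step requiring care is checking that $a$ matches the reciprocal maximum \emph{exactly} rather than merely dominating it, which the computation confirms. If one prefers to avoid calculus altogether, an alternative is to set $u=-t\ln\gamma>0$, reduce the target inequality to $e^{u}\ge \frac{u^{2}e^{2}}{4}$, and then, after dividing by $e^{2}$ and taking square roots, obtain $e^{(u/2)-1}\ge u/2$, which is just the standard inequality $e^{x}\ge 1+x$ evaluated at $x=(u/2)-1$; this furnishes the same conclusion without differentiation.
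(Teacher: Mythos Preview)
Your proposal is correct and takes essentially the same approach as the paper: both reduce the claim to a single-variable optimization with critical point $t^{\ast}=-2/\ln\gamma$, the only cosmetic difference being that the paper minimizes the logarithm $f(x)=-2\ln x-x\ln\gamma$ while you maximize $g(t)=t^{2}\gamma^{t}$ directly. Your calculus-free alternative via $e^{x}\ge 1+x$ is a nice bonus not present in the paper.
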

\begin{proof}
We consider a convex function $f(x)=-2\ln(x)-x\ln(\gamma):\mathbb{R}^{+} \rightarrow \mathbb{R}$, whose derivative is $f'(x)=-\frac{2}{x}-\ln(\gamma)$, implying the minimum point at $x^*=-\frac{2}{\ln(\gamma)}$ with the minimal value $f(x^*)=-2\ln(-\frac{2}{\ln(\gamma)})+\frac{2}{\ln(\gamma)}\ln(\gamma)=\ln(a).$ Hence, for any $t>0$, we have $f(t)\geq \ln a$, i.e., $-2\ln(t)-t\ln(\gamma)\geq \ln(a),$ which is equivalent to 
$\ln(\gamma^{t})\leq \ln(\frac{1}{at^2})$ and further implies Lemma~\ref{l6}.
\end{proof}
\subsection{Online learning accuracy analysis}\label{learningaccuracy}
In this subsection, we analyze the learning accuracy of Algorithm~1 under strongly convex and general convex objective functions, respectively.

For notational simplicity, we define $\bar{s}_{t}=\frac{\mathbf{1}^{T}\boldsymbol{s}_{t}}{m}$, $\bar{\theta}_{t}=\frac{u^{T}\boldsymbol{\theta}_{t}}{m}$, $\sigma_{\zeta}^{+}\!=\!\max_{i\in[m]}\{\sigma_{i,0,\zeta}\}$,  $\sigma_{\vartheta}^{+}\!=\!\max_{i\in[m]}\{\sigma_{i,0,\vartheta}\}$,
$\varsigma_{\zeta}\!=\!\min_{i\in[m]}\{\varsigma_{i,\zeta}\}$, $\varsigma_{\vartheta}\!=\!\min_{i\in[m]}\{\varsigma_{i,\vartheta}\}$, $\Pi_{\omega}=I-\frac{\omega\mathbf{1}^{T}}{m}$, $\Pi_{u}=I-\frac{\mathbf{1}u^{T}}{m}$, $\Pi_{U}=U^{-1}-\frac{\mathbf{1}\mathbf{1}^{T}}{m}$, and $\Pi_{U}^{e}=(I-\frac{\mathbf{1}u^{T}}{m})(Z_{t}^{-1}-U^{-1}).$

The following lemmas establish the convergence properties for $\mathbb{E}[\|\boldsymbol{s}_{t}-\omega\bar{s}_{t}\|_{C}^2]$ and $\mathbb{E}[\|\boldsymbol{\theta}_{t}-\mathbf{1}\bar{\theta}_{t}\|_{R}^2]$ under general convex objective functions.
\begin{lemma}\label{l7}
Under Assumptions~\ref{a1}-\ref{a4} with~$\mu\geq0$, the following relation holds for Algorithm~1:
\begin{equation}
	\vspace{-0.3em}
\mathbb{E}\big[\|\boldsymbol{s}_{t}\!-\!\omega\bar{s}_{t}\|_{C}^2\big]\leq c_{\boldsymbol{s}1}t^{-2}+c_{\boldsymbol{s}2}t^{-2v}+c_{\boldsymbol{s}3}t^{-2\varsigma_{\zeta}}.\label{l7result}
\vspace{-0.3em}
\end{equation}
Here, the constant $c_{\boldsymbol{s}1}$, $c_{\boldsymbol{s}2}$, and $c_{\boldsymbol{s}3}$ are given by
\begin{equation}
		\small
\left\{\begin{aligned}
&c_{\boldsymbol{s}1}=\max\left\{1-\rho_{C},c_{s0}\right\}\mathbb{E}\left[\|\boldsymbol{s}_{0}-\omega\bar{s}_{0}\|_{C}^2\right],\\
&c_{\boldsymbol{s}2}=\bar{c}_{s0}+\tau_{s1},\quad c_{\boldsymbol{s}3}=\bar{c}_{s0}+\tau_{s2},
\end{aligned}
\right.\nonumber
\end{equation}
with $c_{s0}\!\!=\!\!\frac{4}{(e\ln(1-\rho_{C}))^2}$, $\bar{c}_{s0}\!\!=\!\!\frac{8}{\rho_{C}(e\ln(1-\frac{\rho_{C}}{2}))^2}$,  $\tau_{s1}\!\!=\!\!\frac{2m\delta_{C,F}^2\|\Pi_{\omega}\|_{C}^2\|(\kappa^2+D^2)\lambda_{0}^2}{\rho_{C}}$, and $ \tau_{s2}=\frac{\tau_{s1}\rho_{C}\sum_{i,j}(C_{ij})^2(\sigma_{\zeta}^{+})^2}{2m(\kappa^2+D^2)\lambda_{0}^2}$.
\end{lemma}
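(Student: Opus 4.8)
The plan is to note that $\boldsymbol{s}_t-\omega\bar{s}_t=\Pi_{\omega}\boldsymbol{s}_t$ and to turn the tracking dynamics \eqref{dcompacts} into a driven geometric contraction for this quantity. Left-multiplying \eqref{dcompacts} by $\Pi_{\omega}=I-\frac{\omega\mathbf{1}^{T}}{m}$ and using $\mathbf{1}^{T}C=\mathbf{0}^{T}$ from Assumption~\ref{a1} (so that $\mathbf{1}^{T}\mathbf{C}=\mathbf{1}^{T}$, hence $\Pi_{\omega}\mathbf{C}=\mathbf{C}-\frac{\omega\mathbf{1}^{T}}{m}$) together with $(\mathbf{C}-\frac{\omega\mathbf{1}^{T}}{m})\omega=0$, I would obtain
\begin{equation}
\Pi_{\omega}\boldsymbol{s}_{t+1}=\Big(\mathbf{C}-\tfrac{\omega\mathbf{1}^{T}}{m}\Big)\Pi_{\omega}\boldsymbol{s}_{t}+\lambda_{t}\Pi_{\omega}\nabla\boldsymbol{f}_{t}(\boldsymbol{\theta}_{t})+\Pi_{\omega}\boldsymbol{\zeta}_{C,t}.\nonumber
\end{equation}
By Lemma~\ref{l3} and the submultiplicativity in Lemma~\ref{l4}, the first term obeys $\|(\mathbf{C}-\tfrac{\omega\mathbf{1}^{T}}{m})\Pi_{\omega}\boldsymbol{s}_{t}\|_{C}\le(1-\rho_{C})\|\Pi_{\omega}\boldsymbol{s}_{t}\|_{C}$.

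Next I would pass to expected squared $C$-norms. The key structural fact is that the matrix norm \eqref{normdef} is induced by an inner product, since $\|\boldsymbol{x}\|_{C}^{2}=\|\tilde{C}\boldsymbol{x}\|_{F}^{2}$, so the square of the display above expands into cross terms legitimately. Conditioning on the natural filtration $\mathcal{F}_{t}$ that renders $\Pi_{\omega}\boldsymbol{s}_{t}$ and $\nabla\boldsymbol{f}_{t}(\boldsymbol{\theta}_{t})$ measurable, and using that $\boldsymbol{\zeta}_{C,t}$ is zero-mean and independent of the past (Assumption~\ref{a4}), the cross term with $\Pi_{\omega}\boldsymbol{\zeta}_{C,t}$ vanishes in expectation. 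Applying Young's inequality to the surviving mixed term with weight $\eta=\frac{\rho_{C}}{1-\rho_{C}}$ (so that $(1+\eta)(1-\rho_{C})^{2}=1-\rho_{C}$ and $1+\eta^{-1}=\rho_{C}^{-1}$) reduces everything to the scalar recursion $a_{t+1}\le(1-\rho_{C})a_{t}+\frac{1}{\rho_{C}}\lambda_{t}^{2}\,\mathbb{E}[\|\Pi_{\omega}\nabla\boldsymbol{f}_{t}\|_{C}^{2}]+\mathbb{E}[\|\Pi_{\omega}\boldsymbol{\zeta}_{C,t}\|_{C}^{2}]$, where $a_{t}:=\mathbb{E}[\|\Pi_{\omega}\boldsymbol{s}_{t}\|_{C}^{2}]$.

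I would then bound the two forcing terms. Using $\|\Pi_{\omega}\nabla\boldsymbol{f}_{t}\|_{C}\le\|\Pi_{\omega}\|_{C}\,\delta_{C,F}\|\nabla\boldsymbol{f}_{t}\|_{F}$ (Lemma~\ref{l5}) and the second-moment estimate $\mathbb{E}[\|\nabla f_{i,t}(\theta_{i,t})\|_{2}^{2}]\le\kappa^{2}+D^{2}$ implied by Assumptions~\ref{a2}(ii) and~\ref{a3}(ii), the gradient term is $O(\lambda_{t}^{2})=O((t+1)^{-2v})$ with coefficient $\frac{\tau_{s1}}{2}$; the noise term is controlled by Assumption~\ref{a4} as $O((t+1)^{-2\varsigma_{\zeta}})$ with coefficient $\tau_{s2}$, where taking $\varsigma_{\zeta}=\min_{i}\varsigma_{i,\zeta}$ guarantees the slowest rate dominates. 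These two estimates are exactly what produce the stated constants $\tau_{s1}$ and $\tau_{s2}$.

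Finally I would solve $a_{t+1}\le(1-\rho_{C})a_{t}+\frac{\tau_{s1}}{2}(t+1)^{-2v}+\tau_{s2}(t+1)^{-2\varsigma_{\zeta}}$ by unrolling into $a_{t}\le(1-\rho_{C})^{t}a_{0}+\sum_{k=0}^{t-1}(1-\rho_{C})^{t-1-k}[\frac{\tau_{s1}}{2}(k+1)^{-2v}+\tau_{s2}(k+1)^{-2\varsigma_{\zeta}}]$. The homogeneous term converts to the $c_{\boldsymbol{s}1}t^{-2}$ rate directly through Lemma~\ref{l6}, with the maximum in $c_{\boldsymbol{s}1}$ covering small indices. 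The main obstacle is estimating the two geometric--polynomial convolutions, which I would handle by splitting the index at $k=\lfloor t/2\rfloor$ (generically writing $\beta$ for either $2v$ or $2\varsigma_{\zeta}$): for $k\ge t/2$ the polynomial factor is $\le 2^{\beta}t^{-\beta}$ while the geometric weights sum to at most $\rho_{C}^{-1}$, recovering the genuine polynomial rates carrying $\tau_{s1},\tau_{s2}$; for $k<t/2$ one must bound the block by a geometric series, $\frac{(1-\rho_{C})^{t/2}}{\rho_{C}}\le\frac{(1-\rho_{C}/2)^{t}}{\rho_{C}}$ (via $\sqrt{1-x}\le 1-x/2$), and then invoke Lemma~\ref{l6} with base $1-\rho_{C}/2$ to obtain the $\bar{c}_{s0}\,t^{-2}$ transient. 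The subtlety deserving care is that this early block must be summed as a geometric series rather than bounded by (number of terms)$\times$(maximum), since the latter would only yield a $t^{-1}$ decay and spoil the rates. Because $2>2v$ and $2>2\varsigma_{\zeta}$, the residual $t^{-2}$ transient is absorbed into the two slower rates, yielding $c_{\boldsymbol{s}2}=\bar{c}_{s0}+\tau_{s1}$ and $c_{\boldsymbol{s}3}=\bar{c}_{s0}+\tau_{s2}$ and hence \eqref{l7result}.
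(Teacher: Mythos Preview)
Your approach is essentially the paper's: project by $\Pi_\omega$, use Lemma~\ref{l3} to get the $(1-\rho_C)$ contraction, apply Young with weight $\epsilon=\frac{\rho_C}{1-\rho_C}$, kill the noise cross term in expectation, bound the gradient and noise forcing, and unroll. One small correction: the empirical-gradient second moment actually needs the factor of $2$, i.e., $\mathbb{E}[\|\nabla f_{i,t}(\theta_{i,t})\|_2^2]\le 2(\kappa^2+D^2)$ (this is how the paper's \eqref{lgradient}--\eqref{D14} work), so the gradient forcing coefficient is $\tau_{s1}$, not $\tau_{s1}/2$.

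The one genuine difference is how the geometric--polynomial convolution $\sum_{p=0}^{t-1}(1-\rho_C)^{t-p}(p+1)^{-\beta}$ is handled. You split the index range at $t/2$ and treat the two halves separately. The paper instead splits the \emph{geometric factor} via $(1-\rho_C)^{t-p}\le(1-\rho_C/2)^{2(t-p)}$: one copy of $(1-\rho_C/2)^{t-p}$ is converted to a polynomial through Lemma~\ref{l6} and the elementary inequality $\frac{1}{(t-p)^2}<(\frac{p+1}{t})^2\le(\frac{p+1}{t})^{\beta}$ (valid since $\beta<2$), which exactly cancels $(p+1)^{-\beta}$ and leaves a clean $t^{-\beta}$; the remaining copy is summed geometrically. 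This yields the uniform bound $\bar{c}_{s0}\,t^{-\beta}$ directly, with no residual $t^{-2}$ transient to reabsorb. Your index-splitting route is more elementary and equally valid for the rates, but it does not reproduce the specific constants in the lemma statement (the late block carries an extra $2^{\beta}/\rho_C$ and the early block contributes a separate $t^{-2}$ piece that you then dominate by $t^{-\beta}$). The paper's factor-splitting is what makes the convolution constant come out as exactly $\bar{c}_{s0}$.
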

\begin{proof}
See Appendix~A.
\end{proof}
\begin{lemma}\label{l8}
Under Assumptions~\ref{a1}-\ref{a4} with $\mu\geq0$, the following relation holds for Algorithm~1:
\begin{equation}
\mathbb{E}\left[\|\boldsymbol{\theta}_{t}-\mathbf{1}\bar{\theta}_{t}\|_{R}^2\right]\leq c_{\boldsymbol{\theta}1}t^{-2}+c_{\boldsymbol{\theta}2}t^{-2v}+c_{\boldsymbol{\theta}3}t^{-2\varsigma_{\vartheta}}+c_{\boldsymbol{\theta}4}t^{-2\varsigma_{\zeta}}.\label{l8result}
\end{equation}
Here, the constant $c_{\boldsymbol{\theta}1}$, $c_{\boldsymbol{\theta}2}$, $c_{\boldsymbol{\theta}3}$, and $c_{\boldsymbol{\theta}4}$ are given by
\begin{equation}
		\small
\left\{\begin{aligned}
&c_{\boldsymbol{\theta}1}=\max\left\{1-\rho_{R},c_{\theta0}\right\}\mathbb{E}\left[\|\boldsymbol{\theta}_{0}-\mathbf{1}\bar{\theta}_{0}\|_{R}^2\right]\\
&\quad\quad+\max\left\{1,c_{\theta0}\right\}\tau_{\theta1}\mathbb{E}\left[\|\boldsymbol{s}_{0}-\omega\bar{s}_{0}\|_{C}^2\right]+\tau_{\theta1}c_{\boldsymbol{s}1}(\bar{c}_{\theta0}+1),\\
&c_{\boldsymbol{\theta}2}=(\tau_{\theta1}c_{\boldsymbol{s}2}+\tau_{\theta2})(\bar{c}_{\theta0}+1),\\
&c_{\boldsymbol{\theta}3}=\tau_{\theta3}(\bar{c}_{\theta0}+1),\\
&c_{\boldsymbol{\theta}4}=(\tau_{\theta1}c_{\boldsymbol{s}3}+\tau_{\theta4})(\bar{c}_{\theta0}+1),\nonumber
\end{aligned}
\right.
\end{equation}
with $c_{\theta0}\!\!=\!\!\frac{4}{(e\ln(1-\rho_{R}))^2}$, $\bar{c}_{\theta0}\!\!=\!\!\frac{8}{\rho_{R}(e\ln(1-\frac{\rho_{R}}{2}))^2}$, $\tau_{\theta1}\!\!=\!\!\frac{4\delta_{R,C}^2\|C\|_{R}^2(\|\Pi_{U}\|_{R}^2+\|\Pi_{U}^{e}\|_{R}^2)}{\rho_{R}}$, $\tau_{\theta2}\!\!=\!\!\frac{2m\delta_{R,F}^2(\kappa^2+D^2)\tau_{\theta1}
\lambda_{0}^2}{\delta_{R,C}^2\|C\|_{R}^2}$, $\tau_{\theta3}\!\!=\!\!2\|\Pi_{u}\|_{R}^2\delta_{R,F}^2\sum_{i,j}(R_{ij})^2(\sigma_{\vartheta}^{+})^2$, and $\tau_{\theta4}=2\|\Pi_{U}+\Pi_{U}^{e}\|_{R}^2\delta_{R,F}^2\sum_{i,j}(C_{ij})^2(\sigma_{\zeta}^{+})^2$.
\end{lemma}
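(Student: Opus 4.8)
The plan is to reduce the claim to a scalar contraction recursion $a_{t+1}\le(1-\rho_R)a_t+b_t$ for $a_t\triangleq\mathbb{E}[\|\boldsymbol{\theta}_t-\mathbf{1}\bar{\theta}_t\|_R^2]$, where the driving term $b_t$ decays polynomially, and then to unroll it and convert the resulting geometric--polynomial convolution into the four stated rates by means of Lemma~\ref{l6}. Observe first that $\boldsymbol{\theta}_t-\mathbf{1}\bar{\theta}_t=\Pi_u\boldsymbol{\theta}_t$. Left-multiplying the update~\eqref{dcompacttheta} by $\Pi_u=I-\frac{\mathbf{1}u^T}{m}$ and using $u^T\mathbf{R}=u^T$ (Lemma~\ref{l1}) gives $\Pi_u\mathbf{R}=\mathbf{R}-\frac{\mathbf{1}u^T}{m}$; since $(\mathbf{R}-\frac{\mathbf{1}u^T}{m})\mathbf{1}=\mathbf{0}$, this factor acts on $\Pi_u\boldsymbol{\theta}_t$ alone. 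Using $u^TU^{-1}=\mathbf{1}^T$ yields the identity $\Pi_uZ_t^{-1}=\Pi_U+\Pi_U^e$, exactly the quantities defined before Lemma~\ref{l7}. These facts produce
\[
\Pi_u\boldsymbol{\theta}_{t+1}=\Big(\mathbf{R}-\tfrac{\mathbf{1}u^T}{m}\Big)\Pi_u\boldsymbol{\theta}_t+\Pi_u\boldsymbol{\vartheta}_{R,t}-(\Pi_U+\Pi_U^e)(\boldsymbol{s}_{t+1}-\boldsymbol{s}_t).
\]
I would then rewrite the tracking increment: from~\eqref{dcompacts} together with $\mathbf{C}\omega=\omega$ (hence $C\omega=\mathbf{0}$), one has $\boldsymbol{s}_{t+1}-\boldsymbol{s}_t=C(\boldsymbol{s}_t-\omega\bar{s}_t)+\boldsymbol{\zeta}_{C,t}+\lambda_t\nabla\boldsymbol{f}_t(\boldsymbol{\theta}_t)$, so the coupling enters through the $\boldsymbol{s}$-consensus error already controlled in Lemma~\ref{l7}.

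Next I would take the $R$-norm, apply the contraction $\|\mathbf{R}-\frac{\mathbf{1}u^T}{m}\|_R=1-\rho_R$ (Lemma~\ref{l3}) and submultiplicativity (Lemma~\ref{l4}), and invoke the weighted inequality $\|x+y\|^2\le(1+c)\|x\|^2+(1+c^{-1})\|y\|^2$ with $c=\frac{\rho_R}{1-\rho_R}$, so that the coefficient of $\|\Pi_u\boldsymbol{\theta}_t\|_R^2$ collapses to $1-\rho_R$ and the remainder carries a $\rho_R^{-1}$ factor. Taking expectations, I would use that $\boldsymbol{\vartheta}_{R,t}$ and $\boldsymbol{\zeta}_{C,t}$ are zero-mean, mutually independent, and independent of the $\mathcal{F}_t$-measurable terms, so all cross terms vanish and $b_t$ splits into three groups.

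The three groups are handled as follows. (i) The injection-noise second moments $\mathbb{E}[\|\Pi_u\boldsymbol{\vartheta}_{R,t}\|_R^2]$ and $\mathbb{E}[\|(\Pi_U+\Pi_U^e)\boldsymbol{\zeta}_{C,t}\|_R^2]$, converted to Frobenius norm via Lemma~\ref{l5} and bounded through Assumption~\ref{a4}, give the $t^{-2\varsigma_\vartheta}$ and $t^{-2\varsigma_\zeta}$ pieces with constants $\tau_{\theta3},\tau_{\theta4}$. (ii) The gradient term $\lambda_t^2\,\mathbb{E}[\|(\Pi_U+\Pi_U^e)\nabla\boldsymbol{f}_t(\boldsymbol{\theta}_t)\|_R^2]$, bounded using $\mathbb{E}[\|\nabla\boldsymbol{f}_t\|_F^2]\le m(\kappa^2+D^2)$ from Assumptions~\ref{a2}(ii) and~\ref{a3}, contributes the $t^{-2v}$ piece via $\tau_{\theta2}$. (iii) The coupling term, bounded through $\|\boldsymbol{s}_t-\omega\bar{s}_t\|_R\le\delta_{R,C}\|\boldsymbol{s}_t-\omega\bar{s}_t\|_C$ and then Lemma~\ref{l7}, injects $\tau_{\theta1}c_{\boldsymbol{s}1},\tau_{\theta1}c_{\boldsymbol{s}2},\tau_{\theta1}c_{\boldsymbol{s}3}$ into $c_{\boldsymbol{\theta}1},c_{\boldsymbol{\theta}2},c_{\boldsymbol{\theta}4}$. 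A uniform bound on $\|\Pi_U^e\|_R$ (finite, and indeed geometrically vanishing, by Lemma~\ref{l2}) lets me treat the time-varying eigenvector-estimation error as a constant factor.

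Finally I would unroll $a_{t+1}\le(1-\rho_R)a_t+b_t$ to $a_t\le(1-\rho_R)^t a_0+\sum_k(1-\rho_R)^{t-1-k}b_k$ and estimate the convolution of the geometric kernel with each polynomial rate $k^{-2p}$, $p\in\{1,v,\varsigma_\vartheta,\varsigma_\zeta\}$. \textbf{The main obstacle is precisely this convolution estimate}: splitting the sum near $k\approx t/2$, one uses part of the factor $(1-\rho_R)^{t-1-k}$ to dominate the slowly decaying polynomial tail, while Lemma~\ref{l6} (with bases $1-\rho_R$ and $1-\frac{\rho_R}{2}$) converts the residual geometric terms and the initial condition into a $t^{-2}$ bound, producing $c_{\theta0},\bar{c}_{\theta0}$ and explaining why the initial-condition and all $\boldsymbol{s}$-originated $t^{-2}$ contributions collect into $c_{\boldsymbol{\theta}1}$. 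One must verify that every rate $t^{-2p}$ with $p<1$ survives the convolution unchanged so that the slowest of the four terms dominates, which hinges on $\max_i\{\varsigma_{i,\zeta},\varsigma_{i,\vartheta}\}<v<1$ from Assumption~\ref{a4}.
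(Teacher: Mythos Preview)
Your proposal is correct and follows essentially the same route as the paper: derive the $\Pi_u$-recursion for $\boldsymbol{\theta}_{t+1}-\mathbf{1}\bar{\theta}_{t+1}$, expand $\boldsymbol{s}_{t+1}-\boldsymbol{s}_t$ via~\eqref{dcompacts}, apply the weighted Young inequality with $c=\rho_R/(1-\rho_R)$ to obtain a $(1-\rho_R)$-contraction plus a driving term $\Phi_{\theta,t}$, take expectations so the noise cross terms vanish, then unroll and convert the geometric--polynomial convolution into the four stated rates via Lemma~\ref{l6} and the bound on $\mathbb{E}[\|\boldsymbol{s}_t-\omega\bar{s}_t\|_C^2]$ from Lemma~\ref{l7}.

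Two small remarks. First, the paper handles the convolution $\sum_{p}(1-\rho_R)^{t-p}(p+1)^{-2q}$ not by splitting at $t/2$ but by writing $(1-\rho_R)^{t-p}\le(1-\tfrac{\rho_R}{2})^{t-p}\cdot(1-\tfrac{\rho_R}{2})^{t-p}$, bounding one factor by $(t-p)^{-2}\le((p+1)/t)^{2}\le((p+1)/t)^{2q}$ via Lemma~\ref{l6}, and summing the other factor geometrically; this is what produces the constant $\bar{c}_{\theta0}=\tfrac{8}{\rho_R(e\ln(1-\rho_R/2))^2}$. Your splitting argument would also work but gives slightly different constants. Second, the preservation of each rate $t^{-2q}$ under the convolution requires only $q<1$ (so that $((p+1)/t)^2\le((p+1)/t)^{2q}$), not the ordering $\max\{\varsigma_{i,\zeta},\varsigma_{i,\vartheta}\}<v$ from Assumption~\ref{a4}; that ordering is used later in the privacy analysis, not here.
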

\begin{proof}
See Appendix~B.
\end{proof}
Based on Lemma~\ref{l7} and Lemma~\ref{l8}, we can establish the expected distance between the learned parameter $\theta_{i,t}$ and the optimal solution $\theta_{t}^*$ to problem~\eqref{primalt} in Theorem~\ref{t1}:
\begin{theorem}\label{t1}
Under Assumptions~\ref{a1}-\ref{a4} with $\mu>0$, the expected distance between the learned parameter $\theta_{i,t}$ and the optimal solution $\theta_{t}^{*}$ to problem~\eqref{primalt} satisfies
\begin{flalign}
	&\mathbb{E}\left[\|\theta_{i,t}-\theta_{t}^*\|_{2}^2\right]\leq 2\delta_{F,R}^2\left(c_{\boldsymbol{\theta}1}t^{-2}+c_{\boldsymbol{\theta}2}t^{-2v}+c_{\boldsymbol{\theta}3}t^{-2\varsigma_{\vartheta}}\right.\nonumber\\
	&\left.\quad+c_{\boldsymbol{\theta}4}t^{-2\varsigma_{\zeta}}\right)+2m\big(\max\{c_{\bar{\theta}1},c_{\bar{\theta}2},c_{\bar{\theta}17}\}t^{-1}+c_{\bar{\theta}3}t^{\alpha-v-1}\nonumber\\
	&\quad+c_{\bar{\theta}4}t^{\alpha-v-2}+c_{\bar{\theta}5}t^{\alpha-3v}+c_{\bar{\theta}6}t^{\alpha-v-2\varsigma_{\vartheta}}+c_{\bar{\theta}7}t^{\alpha-v-2\varsigma_{\zeta}}\nonumber\\
	&\quad +c_{\bar{\theta}8}t^{\alpha-6+v}+c_{\bar{\theta}9}t^{\alpha-v-4}
	+c_{\bar{\theta}10}t^{\alpha-4+v-2\varsigma_{\zeta}}+c_{\bar{\theta}11}t^{\alpha-2\varsigma_{\vartheta}}\nonumber\\
	&\quad+c_{\bar{\theta}12}t^{\alpha-2\varsigma_{\zeta}}+c_{\bar{\theta}13}t^{\alpha-v-\frac{1}{2}}+c_{\bar{\theta}14}t^{\alpha-2v}+c_{\bar{\theta}15}t^{\alpha-2}\nonumber\\
	&\quad+c_{\bar{\theta}16}t^{\alpha-2+v}=O(t^{-\beta}),\label{t1result}
\end{flalign}
for all $t>0$, where the rate $\beta$ satisfies~$\beta=\min\{v+\frac{1}{2}-\alpha,2-v-\alpha,2\varsigma_{\vartheta}-\alpha,2\varsigma_{\zeta}-\alpha\}$ with $\alpha\in(v,\frac{1+v}{2})$. The constants $c_{\boldsymbol{\theta}1}$ to $c_{\boldsymbol{\theta}4}$ are given in Lemma~\ref{l8} and the constants $c_{\bar{\theta}1}$ to $c_{\bar{\theta}17}$ are defined in~\eqref{cbart1},~\eqref{cbart2},~\eqref{cbart315}, and~\eqref{cbart16}.
\end{theorem}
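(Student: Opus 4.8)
The plan is to decompose the error $\theta_{i,t}-\theta_t^*$ into two orthogonal contributions and bound each separately using the already-established lemmas. First I would write the triangle inequality
\begin{equation}
\mathbb{E}\big[\|\theta_{i,t}-\theta_t^*\|_2^2\big]\leq 2\,\mathbb{E}\big[\|\theta_{i,t}-\bar\theta_t\|_2^2\big]+2\,\mathbb{E}\big[\|\bar\theta_t-\theta_t^*\|_2^2\big],\nonumber
\end{equation}
where $\bar\theta_t=\tfrac{u^T\boldsymbol{\theta}_t}{m}$ is the (eigenvector-weighted) average iterate. The first term is the consensus error: since $\|\theta_{i,t}-\bar\theta_t\|_2$ for a single agent is dominated by the stacked quantity $\|\boldsymbol{\theta}_t-\mathbf{1}\bar\theta_t\|_F$, and Lemma~\ref{l5} gives $\|\boldsymbol{x}\|_F\leq\delta_{F,R}\|\boldsymbol{x}\|_R$, I would directly invoke Lemma~\ref{l8} to obtain the $2\delta_{F,R}^2(c_{\boldsymbol{\theta}1}t^{-2}+c_{\boldsymbol{\theta}2}t^{-2v}+c_{\boldsymbol{\theta}3}t^{-2\varsigma_\vartheta}+c_{\boldsymbol{\theta}4}t^{-2\varsigma_\zeta})$ block that appears verbatim in~\eqref{t1result}.

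The second term, the optimality gap of the average iterate, is where the real work and the bulk of the constants $c_{\bar\theta 1}$ through $c_{\bar\theta 17}$ come from. My plan is to derive a scalar recursion for $\mathbb{E}[\|\bar\theta_t-\theta_t^*\|_2^2]$ by first computing the update of $\bar\theta_t$. Left-multiplying~\eqref{dcompacttheta} by $\tfrac{u^T}{m}$ and using $u^T\mathbf{R}=u^T$ (from Lemma~\ref{l1}) collapses the consensus part; the gradient-driving part $-\tfrac{u^T}{m}Z_t^{-1}(\boldsymbol{s}_{t+1}-\boldsymbol{s}_t)$ must then be related to the true average gradient. Here I would exploit the key noise-cancellation identity~\eqref{1s}, namely $\mathbf{1}^T(\boldsymbol{s}_{t+1}-\boldsymbol{s}_t)=\mathbf{1}^T(\boldsymbol{\zeta}_{C,t}+\lambda_t\nabla\boldsymbol{f}_t(\boldsymbol{\theta}_t))$, which guarantees that the accumulated tracking noise does not appear, so that $\bar\theta_t$ effectively performs a noisy, inexact gradient descent step of size $\lambda_t$ toward $\theta_t^*$. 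Substituting the Lipschitz and strong-convexity-type bounds from Assumptions~\ref{a2}(iii) and~\ref{a3}, together with the eigenvector-estimation error $\|Z_t^{-1}-U^{-1}\|$ controlled geometrically by Lemma~\ref{l2} (and flattened to an $O(t^{-2})$ factor via Lemma~\ref{l6}), I would arrive at a recursion of the schematic form
\begin{equation}
\mathbb{E}\big[\|\bar\theta_{t+1}-\theta_{t+1}^*\|_2^2\big]\leq(1-\mu\lambda_t)\,\mathbb{E}\big[\|\bar\theta_t-\theta_t^*\|_2^2\big]+(\text{perturbation terms}),\nonumber
\end{equation}
where the perturbation terms collect the consensus error (via Lemmas~\ref{l7}--\ref{l8}), the gradient stochasticity $\kappa^2\lambda_t^2$, the DP-noise variances $\sigma_{i,t,\zeta}^2,\sigma_{i,t,\vartheta}^2$, the eigenvector-estimation residual, and the optimal-solution drift $\|\theta_{t+1}^*-\theta_t^*\|_2^2$ bounded by~\eqref{L1result}.

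Finally I would unroll this linear recursion. With $\lambda_t=\lambda_0(t+1)^{-v}$ and $v\in(\tfrac12,1)$, the contraction factor $\prod_{k}(1-\mu\lambda_k)$ decays faster than any polynomial on the relevant horizon, so the standard technique is to introduce an auxiliary exponent $\alpha\in(v,\tfrac{1+v}{2})$ and bound $\sum_{k=s}^{t}\lambda_s\prod_{\ell=s+1}^{t}(1-\mu\lambda_\ell)\cdot r_s$ term-by-term, with each distinct decay rate of the perturbation $r_s$ producing one of the listed powers $t^{\alpha-v-1},t^{\alpha-3v},t^{\alpha-2\varsigma_\vartheta}$, and so on; the leading $t^{-1}$ term comes from the $\kappa^2$-stochastic and $O(t^{-1})$ drift contributions. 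I expect the main obstacle to be the bookkeeping of this last step: correctly pairing each perturbation rate with its summation bound and verifying that the constraint $\alpha\in(v,\tfrac{1+v}{2})$ together with $\max_i\{\varsigma_{i,\zeta},\varsigma_{i,\vartheta}\}<v$ (Assumption~\ref{a4}) makes every exponent in~\eqref{t1result} negative, so that $\beta=\min\{v+\tfrac12-\alpha,\,2-v-\alpha,\,2\varsigma_\vartheta-\alpha,\,2\varsigma_\zeta-\alpha\}>0$ and the overall rate is genuinely $O(t^{-\beta})$. The cross-coupling between $\boldsymbol{s}_t$ and $\boldsymbol{\theta}_t$ (each lemma's bound feeds into the other) also demands care to avoid circularity, but since Lemma~\ref{l7} bounds $\boldsymbol{s}_t$ independently of $\boldsymbol{\theta}_t$'s optimality gap, the dependencies resolve in a single forward pass.
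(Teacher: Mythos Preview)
Your proposal is correct and follows essentially the same route as the paper: the triangle-inequality split into consensus error (handled by Lemma~\ref{l8}) plus optimality gap of $\bar\theta_t$, the derivation of a contracting recursion for $\mathbb{E}[\|\bar\theta_t-\theta_t^*\|_2^2]$ via $u^T$-averaging, the identity~\eqref{1s}, strong convexity, Lemma~\ref{l2}, and the drift bound~\eqref{L1result}, and finally the unrolling with the auxiliary exponent $\alpha\in(v,\tfrac{1+v}{2})$ by splitting the sum at $\lceil t-t^\alpha\rceil$. The only cosmetic difference is that the paper carries out the recursion analysis first and applies the triangle inequality at the very end (yielding the $2m$ factor from the stacked form $\|\boldsymbol{\theta}_t-\mathbf{1}\theta_t^*\|_F^2$), and the actual contraction factor after absorbing the drift term is $(1-\tfrac{\mu\lambda_t}{4})$ rather than $(1-\mu\lambda_t)$, but your schematic is accurate.
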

\begin{proof}
See Appendix~C.
\end{proof}
We are now in position to present the learning accuracy of Algorithm~1 against the original optimal solution to problem~\eqref{primal} under strongly convex objective functions: 
\begin{theorem}\label{t2}
Denote $\theta^*$ as the optimal solution to the original stochastic optimization problem~\eqref{primal}. Under the conditions in Theorem~\ref{t1}, the parameters $\theta_{i,t}$ in Algorithm~1 will converge in mean square to $\theta^*$, i.e.,
\begin{equation}
\mathbb{E}\big[\|\theta_{i,t}-\theta^{*}\|^2\big]\leq \frac{8\kappa^2}{\mu^2}t^{-1}+2C_{1}^2t^{-\beta}=O(t^{-\beta}),\label{t2result}
\end{equation}
for all $t>0$. Moreover, the objective function value $F(\theta_{i,t})$ and the minimal objective function value $F(\theta^*)$ satisfy
\begin{equation}
\mathbb{E}\left[F(\theta_{i,t})-F(\theta^{*})\right]\!<\!\frac{2\sqrt{2}\kappa D}{\mu}t^{-\frac{1}{2}}+\sqrt{2}C_{1}Dt^{-\frac{\beta}{2}}=O(t^{-\frac{\beta}{2}}),\label{t2result2}
\end{equation}
where $C_{1}$ is given by~$C_{1}\!\!=\!\!\max_{1\leq i\leq 4, 1\leq j\leq 17}\{\sqrt{c_{\boldsymbol{\theta}i}},\sqrt{c_{\bar{\theta}j}}\}$ with $c_{\boldsymbol{\theta}1}$ to $c_{\boldsymbol{\theta}4}$ given in Lemma~\ref{l8} and $c_{\bar{\theta}1}$ to $c_{\bar{\theta}17}$ defined in~\eqref{cbart1},~\eqref{cbart2},~\eqref{cbart315}, and~\eqref{cbart16}.
\end{theorem}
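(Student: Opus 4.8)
The plan is to obtain Theorem~\ref{t2} as a short consequence of Theorem~\ref{t1} and Lemma~\ref{l9}, since the difficult dynamical estimates have already been carried out in establishing the tracking rate toward the empirical optimizer $\theta_t^*$. First I would decompose the error toward the true optimizer through $\theta_t^*$, the minimizer of the empirical problem~\eqref{primalt}, using the elementary bound $\|\theta_{i,t}-\theta^*\|^2\leq 2\|\theta_{i,t}-\theta_t^*\|^2+2\|\theta_t^*-\theta^*\|^2$ and taking expectations. The first expectation is precisely what Theorem~\ref{t1} controls: the right-hand side of~\eqref{t1result} is $O(t^{-\beta})$, and after collecting the prefactors $2\delta_{F,R}^2$ and $2m$ together with the finitely many coefficients $c_{\boldsymbol{\theta}i}$ and $c_{\bar{\theta}j}$ into the single constant $C_1=\max_{i,j}\{\sqrt{c_{\boldsymbol{\theta}i}},\sqrt{c_{\bar{\theta}j}}\}$, this reads $\mathbb{E}[\|\theta_{i,t}-\theta_t^*\|_2^2]\leq C_1^2 t^{-\beta}$. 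The second expectation is controlled directly by inequality~\eqref{approximateresult2} of Lemma~\ref{l9}, giving $\mathbb{E}[\|\theta_t^*-\theta^*\|_2^2]\leq \frac{4\kappa^2}{\mu^2}(t+1)^{-1}\leq\frac{4\kappa^2}{\mu^2}t^{-1}$. Multiplying each by $2$ and summing yields~\eqref{t2result}; since the admissible range $\alpha\in(v,\frac{1+v}{2})$ forces $\beta<\frac12<1$, the $t^{-\beta}$ term asymptotically dominates the $t^{-1}$ term, so the overall rate is indeed $O(t^{-\beta})$.

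For the objective-value gap~\eqref{t2result2} I would pass from the iterate error to the function-value error through the uniform gradient bound in Assumption~\ref{a2}(ii). Exactly as in the proof of Lemma~\ref{l9}, the mean value theorem furnishes a point $\chi$ on the segment joining $\theta_{i,t}$ and $\theta^*$ with $F(\theta_{i,t})-F(\theta^*)=\langle\nabla F(\chi),\theta_{i,t}-\theta^*\rangle\leq\|\nabla F(\chi)\|_2\|\theta_{i,t}-\theta^*\|_2\leq D\|\theta_{i,t}-\theta^*\|_2$, where $\|\nabla F(\chi)\|_2\leq D$ follows from $F=\frac{1}{m}\sum_i f_i$ and Assumption~\ref{a2}(ii). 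Taking expectations and applying the Lyapunov (Jensen) inequality $\mathbb{E}[\|X\|]\leq\sqrt{\mathbb{E}[\|X\|^2]}$ converts the bound~\eqref{t2result} into a bound on $\mathbb{E}[F(\theta_{i,t})-F(\theta^*)]$; splitting the square root via $\sqrt{a+b}\leq\sqrt{a}+\sqrt{b}$ then separates the two rates and reproduces $\frac{2\sqrt{2}\kappa D}{\mu}t^{-1/2}+\sqrt{2}C_1 D\,t^{-\beta/2}=O(t^{-\beta/2})$, using $\sqrt{8\kappa^2/\mu^2}=2\sqrt{2}\kappa/\mu$ and $\sqrt{2C_1^2}=\sqrt{2}C_1$.

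Because the genuine analytical work lives in Theorem~\ref{t1} and Lemma~\ref{l9}, I do not anticipate a substantive obstacle in the dynamics of this argument. The one point requiring care is the constant bookkeeping: justifying that the many terms of~\eqref{t1result}, with their differing exponents and prefactors, can be absorbed into the clean single coefficient $C_1^2 t^{-\beta}$ appearing in~\eqref{t2result}. This reduces to observing that every exponent occurring in~\eqref{t1result} is at least $\beta$—immediate from $\beta=\min\{v+\frac12-\alpha,\,2-v-\alpha,\,2\varsigma_\vartheta-\alpha,\,2\varsigma_\zeta-\alpha\}$ together with $\alpha\in(v,\frac{1+v}{2})$—so that $t^{-\gamma}\leq t^{-\beta}$ for each such $\gamma$ and $t\geq1$, after which the finite sum of nonnegative coefficients is absorbed into $C_1^2$. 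A minor secondary check is the replacement $(t+1)^{-1}\leq t^{-1}$ used to state the Lemma~\ref{l9} contribution in terms of $t$ rather than $t+1$.
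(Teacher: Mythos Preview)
Your proposal is correct and matches the paper's own proof almost line for line: the paper also decomposes via $\|\theta_{i,t}-\theta^*\|_2^2\le 2\|\theta_{i,t}-\theta_t^*\|_2^2+2\|\theta_t^*-\theta^*\|_2^2$, invokes Theorem~\ref{t1} and Lemma~\ref{l9} for the two pieces, and then bounds $F(\theta_{i,t})-F(\theta^*)\le D\|\theta_{i,t}-\theta^*\|_2$ before applying the Lyapunov inequality. The only cosmetic difference is that the paper obtains the inequality $F(\theta_{i,t})-F(\theta^*)\le \|\nabla F(\theta_{i,t})\|_2\|\theta_{i,t}-\theta^*\|_2$ directly from convexity (Assumption~\ref{a2}(iii)) rather than via the mean value theorem you use, but either route is valid.
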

\begin{proof}
Substituting~\eqref{approximateresult2} from Lemma~\ref{l9} and \eqref{t1result} from Theorem~\ref{t1} into the triangle inequality $\|\theta_{i,t}-\theta^*\|_{2}^2\leq 2\|\theta_{i,t}-\theta_{t}^*\|_{2}^2+2\|\theta_{t}^{*}-\theta^*\|_{2}^2$, we arrive at~\eqref{t2result}. 

Moreover, Assumptions~\ref{a2}(ii) and~\ref{a2}(iii) imply
\begin{equation}
	F(\theta_{i,t})-F(\theta^{*})\leq \|\nabla F(\theta_{i,t})\|_{2}\|\theta_{i,t}-\theta^{*}\|_{2}\leq D\|\theta_{i,t}-\theta^{*}\|_{2},\nonumber
\end{equation}
for all $t>0$. By taking the expectation and combining inequality~\eqref{t2result} with the Lyapunov inequality $E[\|X\|]\leq (E[\|X\|^{p}])^{\frac{1}{p}}$ for any $p\geq1$, we obtain~\eqref{t2result2}.
\end{proof}
Theorem~\ref{t2} establishes the convergence of Algorithm~1 to the optimal solution to problem~\eqref{primal} under persistent DP noises. This differs from most existing DP solutions for distributed learning and optimization~\cite{Dingtie,shiling,DOwu,DMOD,DOLA,lihuaqing,Youkeyou,distributedonlineDP3}, which are always subject to optimization errors under rigorous DP constraints. In fact, besides ensuring convergence accuracy, our algorithm guarantees rigorous LDP even in the infinite time horizon, which will be substantiated in Sec.~\ref{Privacy}.

Unlike most existing results on distributed online optimization~\cite{DOwu,DMOD,DOLA,lihuaqing,Youkeyou,distributedonlineDP3} which focus on dynamic or static regrets with respect to the optimal solution to problem~\eqref{primalt} (which only approximates the optimal solution to~\eqref{primal}), Theorem~\ref{t2} provides a direct quantitative measure of the learning error with respect to the optimal solution $\theta^*$ to the problem~\eqref{primal} at each iteration. Moreover, Theorem~\ref{t2} shows that the convergence speed of
Algorithm 1 is $O(t^{-\beta})$ with $\beta\!\!=\!\!\min\{v+\frac{1}{2}-\alpha,2-v-\alpha,2\varsigma_{\vartheta}-\alpha,2\varsigma_{\zeta}-\alpha\}$. This speed outpaces that of the distributed online learning algorithm in our prior work~\cite{ziji1} by a factor of $O(t^{\frac{v+1-2\alpha}{2}})$ (the convergence speed in~\cite{ziji1} is $O(t^{-\beta})$ with~$\beta\!\!=\!\min\{1\!-\!v,2\varsigma_{\vartheta}\!-\!1\}$). In addition, the algorithm in~\cite{ziji1} only characterizes the deviation between the learned parameter $\theta_{i,t}$ and the optimal solution $\theta_{t}^*$ to an approximated formulation of~\eqref{primal}. Hence, Theorem~\ref{t2} provides stronger and more precise convergence than the result in~\cite{ziji1}.

Next, we establish the convergence result for general convex objective functions.
\begin{theorem}\label{t3}
Denote $\theta^*$ as the optimal solution to the original stochastic optimization problem~\eqref{primal}. Under Assumptions~\ref{a1}-\ref{a4} with $\mu\geq0$, the objective function value $F(\theta_{i,t})$ converges in mean to the minimal objective function value $F(\theta^{*})$, i.e.,
\begin{equation}
\mathbb{E}\big[F(\theta_{i,t})-F(\theta^*)\big]\leq \frac{(1-v)\sum_{i=1}^{4}\bar{c}_{\boldsymbol{\theta}i}t^{v-1}}{2\lambda_{0}(1-\frac{1}{2^{1-v}})}=O(t^{-\bar{\beta}}),\label{t3result}
\end{equation}
for all $t>0$, where the rate $\bar{\beta}$ satisfies $\bar{\beta}=1-v$ and the constants $\bar{c}_{\boldsymbol{\theta}1}$ to $\bar{c}_{\boldsymbol{\theta}4}$ are given by
\begin{equation}
	\small
	\left\{\begin{aligned}
		&\bar{c}_{\boldsymbol{\theta}1}=6L^2\delta_{F,R}^2\lambda_{0}(2\lambda_{0}+1)\left(\frac{4c_{\boldsymbol{\theta}1}(v-r+2)}{v-r+1}+\frac{4^{v}c_{\boldsymbol{\theta}2}(3v-r)}{3v-r-1}\right.\\
		&\left.\quad+\frac{4^{\varsigma_{\vartheta}}c_{\boldsymbol{\theta}3}(v-r+2\varsigma_{\vartheta})}{v-r+2\varsigma_{\vartheta}-1}+\frac{4^{\varsigma_{\zeta}}c_{\boldsymbol{\theta}4}(v-r+2\varsigma_{\zeta})}{v-r+2\varsigma_{\zeta}-1}\right),\\
		&\bar{c}_{\boldsymbol{\theta}2}=\frac{2^{9}\|u\|_{2}^2c_{z}^2(2\lambda_{0}+1)}{(\ln(\gamma_{z})e)^{4}m^2\lambda_{0}}\Big(\frac{12\delta_{F,C}^2\|C\|_{C}^{2}c_{\boldsymbol{s}1}(6-v-r)}{5-v-r}\\
		&\quad+\frac{(3\times4^{v}\delta_{F,C}^2\|C\|_{C}^{2}c_{\boldsymbol{s}2}+6m(\kappa^2+D^2)\lambda_{0}^2)(4+v-r)}{3+v-r}\\
		&\quad+\frac{(3\!\times\!4^{\varsigma_{\zeta}}\delta_{F,C}^2\|C\|_{C}^{2}c_{\boldsymbol{s}3}\!+\!3(\sum_{i,j}C_{ij})^2(\sigma_{\zeta}^{+})^2)(4\!+\!2\varsigma_{\zeta}\!-\!v\!-\!r)}{3+2\varsigma_{\zeta}-v-r}\Big),\\
		&\bar{c}_{\boldsymbol{\theta}3}=\frac{4\|u\|_{2}^2(\sigma_{\vartheta}^{+})^2\sum_{i,j}(R_{ij})^2\varsigma_{\vartheta}}{m^2(2\varsigma_{\vartheta}-1)}+\frac{4(\sigma_{\zeta}^{+})^2\sum_{i,j}(C_{ij})^2\varsigma_{\zeta}}{m^2(2\varsigma_{\zeta}-1)}\nonumber\\
		&\quad+\frac{8v\lambda_{0}^2(\kappa^2+D^2)}{2v-1}+\frac{12\kappa^2(2v+1)\lambda_{0}^2}{v}+\frac{13\kappa^2\lambda_{0}(v+1-r)}{v-r},\\
		&\bar{c}_{\boldsymbol{\theta}4}=\frac{2(v+r)e^{\frac{2\lambda_0(r+v)}{r+v-1}}}{v+r-1}\Big(\mathbb{E}\left[\|\bar{\theta}_{0}\!-\!\theta^*\|_{2}^{2}\right]\!+\!\sum_{i=1}^{3}\bar{c}_{\boldsymbol{\theta}i}\Big)\\
		&\quad+(1+2\lambda_{0})\mathbb{E}[\|\bar{\theta}_{0}\!-\!\theta^*\|_{2}^{2}],\label{barc}
	\end{aligned}
	\right.
\end{equation}
with $r\in(\frac{1}{2},v)$ and the constants $c_{\boldsymbol{s}1}$ to $c_{\boldsymbol{s}3}$ and $c_{\boldsymbol{\theta}1}$ to $c_{\boldsymbol{\theta}4}$ given in the statements of Lemma~\ref{l7} and Lemma~\ref{l8}, respectively.
\end{theorem}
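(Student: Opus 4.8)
The plan is to track the convergence of the weighted average $\bar{\theta}_t=\frac{u^T\boldsymbol{\theta}_t}{m}$ and then transfer the resulting bound to each $\theta_{i,t}$ through the consensus estimate of Lemma~\ref{l8}. First I would derive a scalar recursion for $\bar{\theta}_t$. Left-multiplying~\eqref{dcompacttheta} by $\frac{u^T}{m}$ and using $u^T\mathbf{R}=u^T$ from Lemma~\ref{l1} gives $\bar{\theta}_{t+1}=\bar{\theta}_t+\frac{u^T\boldsymbol{\vartheta}_{R,t}}{m}-\frac{u^TZ_t^{-1}(\boldsymbol{s}_{t+1}-\boldsymbol{s}_t)}{m}$. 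Writing $u^TZ_t^{-1}=\mathbf{1}^T+u^T(Z_t^{-1}-U^{-1})$ with $U=\mathrm{diag}(u_1,\dots,u_m)$ (so that $u^TU^{-1}=\mathbf{1}^T$), and invoking the noise-non-accumulation identity~\eqref{1s}, I obtain $\bar{\theta}_{t+1}=\bar{\theta}_t-\lambda_t\bar{g}_t+\eta_t$, where $\bar{g}_t=\frac{1}{m}\sum_{i=1}^m\nabla f_{i,t}(\theta_{i,t})$ is the empirical local-parameter average gradient and $\eta_t=\frac{u^T\boldsymbol{\vartheta}_{R,t}-\mathbf{1}^T\boldsymbol{\zeta}_{C,t}}{m}-\frac{u^T(Z_t^{-1}-U^{-1})(\boldsymbol{s}_{t+1}-\boldsymbol{s}_t)}{m}$ collects the zero-mean DP noise and the eigenvector-estimation error. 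This is the crucial structural payoff of Algorithm~1: no DP noise accumulates in the descent direction.

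Next I would expand $\|\bar{\theta}_{t+1}-\theta^*\|_2^2=\|\bar{\theta}_t-\theta^*\|_2^2-2\lambda_t\langle\bar{g}_t,\bar{\theta}_t-\theta^*\rangle+2\langle\eta_t,\bar{\theta}_t-\theta^*\rangle+\|\lambda_t\bar{g}_t-\eta_t\|_2^2$ and take conditional expectation. The DP-noise cross term vanishes because $\boldsymbol{\vartheta}_{R,t},\boldsymbol{\zeta}_{C,t}$ are zero-mean and independent of the past (Assumption~\ref{a4}). For the descent term I would split $\bar{g}_t=\nabla F(\bar{\theta}_t)+(\bar{g}_t-\nabla F_t(\bar{\theta}_t))+(\nabla F_t(\bar{\theta}_t)-\nabla F(\bar{\theta}_t))$: the convexity inequality of Assumption~\ref{a2}(iii) with $\mu\ge0$ turns $\langle\nabla F(\bar{\theta}_t),\bar{\theta}_t-\theta^*\rangle$ into the objective gap $F(\bar{\theta}_t)-F(\theta^*)$; the second piece is $L$-Lipschitz-controlled by the parameter consensus error $\|\boldsymbol{\theta}_t-\mathbf{1}\bar{\theta}_t\|_R$ of Lemma~\ref{l8} via Assumption~\ref{a3}(iii); and the third piece is the empirical-versus-expected gradient gap, bounded by $\kappa/\sqrt{t+1}$ exactly as in the proof of Lemma~\ref{l9}. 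The quadratic term $\|\lambda_t\bar{g}_t-\eta_t\|_2^2$ is handled using $\|\nabla f_{i,t}\|\le D$ (Assumption~\ref{a2}(ii)), the DP-noise variances $\sigma_{\vartheta}^+,\sigma_{\zeta}^+$ (Assumption~\ref{a4}), the bound on $\|\boldsymbol{s}_{t+1}-\boldsymbol{s}_t\|_C$ obtained from the $\boldsymbol{s}$-dynamics~\eqref{dcompacts} together with Lemma~\ref{l7}, and the geometric decay of $Z_t^{-1}-U^{-1}$ from Lemma~\ref{l2}, converted to polynomial decay $t^{-2}$ by Lemma~\ref{l6} (this is the origin of the $(\ln(\gamma_z)e)^4$ factor in $\bar{c}_{\boldsymbol{\theta}2}$).

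Assembling these estimates yields a one-step inequality of the form $V_{t+1}\le(1+O(\lambda_t^2))V_t-2\lambda_t\,\mathbb{E}[F(\bar{\theta}_t)-F(\theta^*)]+E_t$, where $V_t=\mathbb{E}[\|\bar{\theta}_t-\theta^*\|_2^2]$ and $E_t$ is a finite sum of polynomially decaying terms $t^{-2},t^{-2v},t^{-2\varsigma_{\vartheta}},t^{-2\varsigma_{\zeta}}$ inherited from Lemmas~\ref{l7}--\ref{l8}. I would first absorb the multiplicative factor $\prod_k(1+O(\lambda_k^2))$ into a finite Gr\"onwall constant; this is where the exponential $e^{2\lambda_0(r+v)/(r+v-1)}$ and the initial-condition contribution in $\bar{c}_{\boldsymbol{\theta}4}$ come from, and where the auxiliary exponent $r\in(\tfrac12,v)$ is introduced to make the relevant weighted series converge. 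The integral-test constants $\tfrac{p}{p-1}$, such as $\tfrac{v-r+2}{v-r+1}$ appearing throughout $\bar{c}_{\boldsymbol{\theta}1}$--$\bar{c}_{\boldsymbol{\theta}3}$, are precisely these ratios arising from bounding $\sum_k k^{-p}$.

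The main obstacle is extracting a \emph{single-time} rate from the summed inequality, since $\mathbb{E}[F(\bar{\theta}_t)-F(\theta^*)]$ need not be monotone. The plan is to sum the one-step inequality over the suffix window $k\in[\lceil t/2\rceil,t]$, use $F(\bar{\theta}_k)-F(\theta^*)\ge0$ together with a convexity/suffix-averaging step, and divide by $\sum_{k=\lceil t/2\rceil}^t\lambda_k$; since $\sum_{k=\lceil t/2\rceil}^t\lambda_k=\frac{\lambda_0}{1-v}\bigl(1-2^{-(1-v)}\bigr)t^{1-v}(1+o(1))$, this produces exactly the prefactor $\frac{(1-v)}{2\lambda_0(1-1/2^{1-v})}t^{v-1}$ of the claimed bound, with the numerator collecting $\bar{c}_{\boldsymbol{\theta}1}$--$\bar{c}_{\boldsymbol{\theta}4}$. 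Finally I would pass from $\bar{\theta}_t$ to $\theta_{i,t}$ via $|F(\theta_{i,t})-F(\bar{\theta}_t)|\le D\|\theta_{i,t}-\bar{\theta}_t\|_2$ (Assumption~\ref{a2}(ii)) and Lemma~\ref{l8}; because the consensus error decays like $t^{-\min\{\varsigma_{\vartheta},\varsigma_{\zeta}\}}$ with $\min\{\varsigma_{\vartheta},\varsigma_{\zeta}\}>\tfrac12>1-v$, it is dominated by the $t^{v-1}$ term, leaving the rate $\bar{\beta}=1-v$ intact.
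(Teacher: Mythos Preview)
Your derivation of the $\bar\theta$-recursion, the one-step energy inequality, the Young splitting with the auxiliary rate $a_t=(t+1)^{-r}$, $r\in(\tfrac12,v)$, and the bound $\prod_p(1+2\lambda_pa_p)\le e^{2\lambda_0(r+v)/(r+v-1)}$ all match the paper's proof (Appendix~D) closely; the paper likewise shows $\sum_p\Phi_p<\infty$ (this is where $\bar c_{\boldsymbol\theta1},\bar c_{\boldsymbol\theta2},\bar c_{\boldsymbol\theta3}$ come from) and deduces the uniform bound $\mathbb{E}\|\bar\theta_t-\theta^*\|_2^2\le\bar c'$, which is the content of $\bar c_{\boldsymbol\theta4}$. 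One minor correction: the multiplicative factor in the recursion is $(1+2\lambda_ta_t)$, not $(1+O(\lambda_t^2))$; the exponent $r$ enters precisely here, not later.

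The substantive divergence is in the extraction step. The paper does \emph{not} use a suffix window: it sums the one-step inequality over the full range $s=0,\dots,t$, and before summing it replaces every $\mathbb{E}[F(\bar\theta_{t-s})-F(\theta^*)]$ by the single quantity $\mathbb{E}[F(\theta_{i,t+1})-F(\theta^*)]$ (see the passage from~(3T3iv2) to~(3T3ivv) and then~(3T4)), so that this common factor pulls out of $2\sum_{s=0}^t\lambda_{t-s}$. The constant $1-2^{-(1-v)}$ then arises from lower-bounding the \emph{full} sum via $\sum_{s=0}^t\lambda_s\ge\frac{\lambda_0}{1-v}\bigl((t+2)^{1-v}-1\bigr)\ge\frac{\lambda_0(1-2^{-(1-v)})}{1-v}(t+1)^{1-v}$, not from a half-window; the numerical coincidence with your suffix computation is accidental. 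The passage from $\bar\theta$ to $\theta_{i,\cdot}$ is also done differently: the paper absorbs it into the same replacement inside the recursion, rather than adding a consensus correction at the end as you propose.

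Your alternative has a gap as written. ``Convexity/suffix-averaging'' applied to $\sum_{k\in[t/2,t]}\lambda_k\,\mathbb{E}[F(\bar\theta_k)-F(\theta^*)]\le C$ yields a bound on $F$ evaluated at the $\lambda$-weighted \emph{average} of the $\bar\theta_k$, or at best on $\min_{k\in[t/2,t]}\mathbb{E}[F(\bar\theta_k)-F(\theta^*)]$; neither is the fixed-time quantity $\mathbb{E}[F(\theta_{i,t})-F(\theta^*)]$ that the theorem asserts. You would need an additional last-iterate argument to close this, and you have not supplied one. Absent that, your plan proves an averaged version of~\eqref{t3result}, not the statement as written.
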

\begin{proof}
See Appendix D.
\end{proof}

Theorem~\ref{t3} characterizes the convergence of~$F(\theta_{i,t})$ to the minimal objective function value $F(\theta^{*})$. Moreover, the convergence speed specified in Theorem~\ref{t3} (i.e., $O(t^{\bar{\beta}})$ with $\bar{\beta}\!=\!1-v$) is twice as fast as that in our prior work~\cite{ziji1}, which converges at a speed of  $O(t^{-\bar{\beta}'})$ with $\bar{\beta}'\!=\!\frac{1-v}{2}$ for general convex objective functions.

\section{Local differential-privacy analysis}\label{Privacy}
In this section, we prove that besides accurate convergence, Algorithm~1 can also simultaneously ensure rigorous $\epsilon_{i}$-LDP for each learner, even in the infinite time horizon. To this end, we first introduce the concept of sensitivity for learner $i$'s implementation $\mathcal{A}_i$:
\begin{definition}\label{d3}
	(Sensitivity) Let $\mathcal{D}_{i}$~and~${\mathcal{D}}'_{i}$ be any two adjacent datasets for learner~$i$. The sensitivity of learner $i$'s implementation $\mathcal{A}_{i}$ at time $t$ is
	\begin{equation}
		\Delta_{i,t}=\max_{\text{Adj}(\mathcal{D}_{i},{\mathcal{D}}'_{i})}\|\mathcal{A}_{i}(\mathcal{D}_{i},\theta_{-i,t})-\mathcal{A}_{i}({\mathcal{D}}'_{i},\theta_{-i,t})\|_{1},\label{sensitive}
	\end{equation}
	where $\theta_{-i,t}$ represents all information learner~$i$~received from its neighbors. 
\end{definition}
According to Definition~\ref{d3}, under Algorithm~1, learner $i$'s implementation involves sensitivities: $\Delta_{i,t,s}$ and $\Delta_{i,t,\theta}$, which correspond to the two shared variables $s_{i,t}$ and $\theta_{i,t}$, respectively.

With this understanding, we have the following lemma:
\begin{lemma}\label{l11}
At each time $t\geq 0$, if learner $i$ injects into each of its shared variables $s_{i,t}$ and $\theta_{i,t}$ noise vectors $\zeta_{i,t}$ and $\vartheta_{i,t}$ consisting of $n$ independent Laplace noises with parameters $\nu_{i,t,\zeta}$ and $\nu_{i,t,\vartheta}$, respectively, such that $\sum_{t=1}^{T}\left(\frac{\Delta_{i,t,s}}{\nu_{i,t,\zeta}}+\frac{\Delta_{i,t,\theta}}{\nu_{i,t,\vartheta}}\right)\leq \epsilon_{i}$, then learner $i$'s implementation $\mathcal{A}_i$ of Algorithm 1 is $\epsilon_{i}$-LDP from time $t=0$ to $t=T$.
\end{lemma}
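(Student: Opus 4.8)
The plan is to establish $\epsilon_i$-LDP by combining the Laplace mechanism with sequential composition over the horizon $t=0,\dots,T$, exploiting the structural fact that learner $i$'s own injected noises $\zeta_{i,t}$ and $\vartheta_{i,t}$ enter only the messages it pushes out and never feed back into its internal state updates. First I would fix the received information $\theta_{-i}$, exactly as in the conditioning of Definition~\ref{definition2}, and read off from the update rules~\eqref{dynamics},~\eqref{dynamictheta}, and~\eqref{dynamicthetaz} that, given $\theta_{-i}$ and the data-independent initializations, the \emph{pre-noise} shared quantities $s_{i,t}$ and $\theta_{i,t}$ are deterministic functions of the dataset $\mathcal{D}_i$ alone. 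The key point is that in~\eqref{dynamics} the neighbor contributions $s_{j,t}+\zeta_{j,t}$ belong to the fixed $\theta_{-i}$ and the only data-dependent term is $\lambda_{t}\nabla f_{i,t}(\theta_{i,t})$, while learner $i$'s own $\zeta_{i,t}$ appears solely in its outgoing message; the analogous statement holds for $\theta_{i,t}$ in~\eqref{dynamictheta}, and $z_{i,t}$ in~\eqref{dynamicthetaz} is entirely data-independent. Hence the observation produced by $\mathcal{A}_i$ is precisely the sequence $\{\,s_{i,t}(\mathcal{D}_i)+\zeta_{i,t},\ \theta_{i,t}(\mathcal{D}_i)+\vartheta_{i,t}\,\}_{t=0}^{T}$: a deterministic signal corrupted by independent Laplace noise.

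Second, because the noise vectors are mutually independent across $t$ and between the two variables, the joint density of an observation $o\in\mathcal{O}_i$ factorizes into a product of $n$-dimensional Laplace densities. Forming the likelihood ratio against an adjacent dataset $\mathcal{D}'_i$ (Definition~\ref{d1}), each factor is $\exp\big((\|o-b\|_{1}-\|o-a\|_{1})/\nu\big)$, where $a$ and $b$ denote the pre-noise values under $\mathcal{D}_i$ and $\mathcal{D}'_i$; the reverse triangle inequality bounds this by $\exp(\|a-b\|_{1}/\nu)$. Invoking Definition~\ref{d3}, the swap of datasets cancels the common noise realization, so the $\ell_1$ differences $\|s_{i,t}(\mathcal{D}_i)-s_{i,t}(\mathcal{D}'_i)\|_{1}$ and $\|\theta_{i,t}(\mathcal{D}_i)-\theta_{i,t}(\mathcal{D}'_i)\|_{1}$ are exactly what the sensitivities $\Delta_{i,t,s}$ and $\Delta_{i,t,\theta}$ upper-bound. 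Multiplying the per-step factors yields
\[
\frac{p(o\mid\mathcal{D}_{i})}{p(o\mid\mathcal{D}'_{i})}\leq\exp\!\left(\sum_{t=1}^{T}\Big(\frac{\Delta_{i,t,s}}{\nu_{i,t,\zeta}}+\frac{\Delta_{i,t,\theta}}{\nu_{i,t,\vartheta}}\Big)\right)\leq e^{\epsilon_{i}},
\]
the last step being the hypothesis. Integrating this pointwise density ratio over an arbitrary measurable $\mathcal{O}_i$ promotes it to the probability inequality of Definition~\ref{definition2}, which is the claimed $\epsilon_i$-LDP.

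The step I expect to be the main obstacle is the first: rigorously arguing that conditioning on $\theta_{-i}$ decouples learner $i$'s own noise from its internal dynamics, so that its outputs genuinely form a ``deterministic signal plus independent noise'' sequence. This conditional determinism is what simultaneously legitimizes the product factorization of the joint density and validates interpreting $\Delta_{i,t,s}$ and $\Delta_{i,t,\theta}$ as sensitivities of deterministic maps (with the noise realization cancelling under the dataset swap). Once this is in place, the composition-plus-Laplace-mechanism calculation is routine. A minor bookkeeping point is the indexing: at $t=0$ the shared variables coincide with the data-independent initializations, so $\Delta_{i,0,s}=\Delta_{i,0,\theta}=0$ and those factors are unity, which is why the budget sum legitimately starts at $t=1$.
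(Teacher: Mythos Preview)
Your proposal is correct and is exactly the standard Laplace-mechanism-plus-sequential-composition argument that the paper invokes: the paper's own proof consists solely of the sentence ``The lemma can be obtained following the same line of reasoning of Lemma~2 in~\cite{rawinfer2},'' so your fleshed-out version is precisely what that citation would unpack. Your structural observation that learner $i$'s own noises $\zeta_{i,t},\vartheta_{i,t}$ do not feed back into its state updates (only into outgoing messages) is the key conditional-determinism point, and it is borne out by the update rules~\eqref{dynamics}--\eqref{dynamictheta}; the paper's later proof of Theorem~\ref{t4} implicitly relies on the same fact when treating $\|s_{i,t}-s'_{i,t}\|_{1}$ as a deterministic sensitivity.
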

\begin{proof}
The lemma can be obtained following the same line of reasoning of Lemma~2 in~\cite{rawinfer2}.
\end{proof}

For privacy analysis, we also need the following lemma:
\begin{lemma}\label{l12}
Denote $\{v_{t}\}$ as a nonnegative sequence. If there exists a sequence $\beta_{t}=\frac{\beta_{0}}{(t+1)^{q}}$ with some $\beta_{0}>0$ and $q>0$ such that $v_{t+1}\leq (1-c)v_{t}+\beta_{t}$ holds for all $c\in(0,1)$, then we always have $v_{t}\leq C_{2}\beta_{t}$ for all $t\in{\mathbb{N}}$, where the constant $C_{2}$ is given by $C_{2}=(\frac{4q}{e\ln(\frac{2}{2-c})})^{q}(\frac{v_{0}(1-c)}{\beta_{0}}+\frac{2}{c}).$
\end{lemma}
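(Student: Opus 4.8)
The plan is to solve the linear recursion explicitly and then show that every resulting term is dominated by $C_2\beta_t$. First I would unroll $v_{t+1}\le (1-c)v_t+\beta_t$ down to the initial value, obtaining
\[
v_t\le (1-c)^t v_0+\sum_{k=0}^{t-1}(1-c)^{t-1-k}\beta_k,
\]
so that the problem reduces to bounding a pure geometric term and a convolution of a geometric kernel with the polynomially decaying sequence $\beta_k=\beta_0/(k+1)^q$ by a multiple of $\beta_t=\beta_0/(t+1)^q$.

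The main analytic tool would be a $q$-analogue of Lemma~\ref{l6}: for any $\gamma\in(0,1)$ and $q>0$, maximizing $t^q\gamma^t$ over $t>0$ (the maximizer is $t^\star=q/(-\ln\gamma)$) gives $\gamma^t\le\big(\tfrac{q}{-e\ln\gamma}\big)^q t^{-q}$, i.e. a geometric sequence lies pointwise below a polynomial one up to an explicit constant. I would state and prove this exactly as in Lemma~\ref{l6} (via convexity of $-q\ln x-x\ln\gamma$), then apply it to the homogeneous term $(1-c)^t v_0$ to convert its geometric decay into the rate $t^{-q}$, which is what lets it be absorbed into the coefficient $\tfrac{v_0(1-c)}{\beta_0}$ of $C_2$.

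For the convolution sum I would split the index set at $k=\lceil t/2\rceil$. On the ``recent'' block $k\ge\lceil t/2\rceil$ I bound $\beta_k\le\beta_{\lceil t/2\rceil}\le 2^q\beta_t$ and sum the geometric kernel by $\sum_{j\ge0}(1-c)^j=1/c$, producing the $\tfrac{2}{c}$-type contribution. On the ``old'' block $k<\lceil t/2\rceil$ the kernel is at most $(1-c)^{t/2}$, so after bounding $\sum_k\beta_k$ crudely I again invoke the $q$-analogue of Lemma~\ref{l6}; here I would use the comparison $(1-c)^{t/2}\le(1-c/2)^t$ (valid since $(1-c/2)^2\ge1-c$) so that the relevant log-rate becomes $-\ln(1-c/2)=\ln\tfrac{2}{2-c}$, exactly the quantity appearing inside $C_2$. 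The stray powers of two coming from the $t/2$ split and from $(k+1)^{-q}$ versus $(t+1)^{-q}$ are what assemble into the factor $\big(\tfrac{4q}{e\ln(2/(2-c))}\big)^q$.

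The routine parts are the two invocations of the generalized decay lemma; the delicate part, and the main obstacle, is the bookkeeping of constants, specifically choosing the split point and the rate comparison $(1-c)^{t/2}\le(1-c/2)^t$ so that the log-rate collapses to $\ln\tfrac{2}{2-c}$ rather than $\tfrac12\ln(1-c)$, and tracking all powers of $2$ so that they coalesce into the stated $(4q/\cdots)^q$. I would also verify separately that the small-$t$ terms are covered (in particular $t=0$, where the claim reads $v_0\le C_2\beta_0$), which is precisely why the initial-condition contribution $\tfrac{v_0(1-c)}{\beta_0}$ is carried inside $C_2$. Note that a naive induction on $v_t\le C_2\beta_t$ fails, since $\beta_{t+1}<\beta_t$ would force $(1-c)C_2+1\le C_2(\tfrac{t+1}{t+2})^q$, which cannot hold uniformly in $t$; this is why the explicit unrolling route is preferable.
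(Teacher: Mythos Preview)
Your plan is sound in outline---unroll, use a $q$-analogue of Lemma~\ref{l6}, and exploit $(1-c)\le(1-c/2)^2$---and these are exactly the ingredients the paper uses. However, the paper does \emph{not} split the convolution at $k=\lceil t/2\rceil$. Instead it proves the uniform pointwise estimate
\[
\frac{\beta_{t-p}}{\beta_t}\Big(1-\tfrac{c}{2}\Big)^{p}\le \frac{1}{c_0},\qquad c_0=\Big(\tfrac{e\ln(2/(2-c))}{2q}\Big)^{q},
\]
by first showing $\frac{\beta_t}{\beta_{t-p}}=\big(\frac{t-p+1}{t+1}\big)^q\ge(2p)^{-q}$ via a mediant inequality, and then combining with the $q$-version of Lemma~\ref{l6} applied at rate $1-c/2$. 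With this in hand the convolution becomes $\sum_p\beta_{t-p}(1-c)^p\le\beta_t\sum_p\frac{\beta_{t-p}}{\beta_t}(1-\tfrac{c}{2})^{2p}\le\frac{\beta_t}{c_0}\sum_p(1-\tfrac{c}{2})^p\le\frac{2\beta_t}{cc_0}$, and the homogeneous term $(1-c)^tv_0$ is handled by the same inequality with $p=t$. The final factor $2^q$ (turning $(2q/\cdots)^q$ into $(4q/\cdots)^q$) comes from the passage $\beta_{t-1}\le 2^q\beta_t$ at the very end.

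Your midpoint-split argument can be made to work, but watch the ``old'' block: bounding $\sum_{k<t/2}\beta_k$ ``crudely'' by a constant is only legitimate when $q>1$; for $q\in(0,1]$ that sum is of order $t^{1-q}$ (or $\log t$), so you would need to apply the decay lemma with exponent $q+1$ rather than $q$, which scrambles the constants and no longer yields exactly the stated $C_2$. The paper's ratio trick sidesteps this entirely and is what makes the constants fall out cleanly for every $q>0$.
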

\begin{proof}
See Appendix E.
\end{proof}

\begin{assumption}\label{a5}
There exists some positive constant $c_{l}$ such that $\|\nabla l(\theta_{i,t},\xi_{i})\|_{1} \leq c_{l}$ holds for all time $t\in{\mathbb{N}}$.
\end{assumption}

Assumption~\ref{a5} is commonly used in DP design for distributed optimization and learning~\cite{DOwu,lihuaqing,Youkeyou}.

Without loss of generality, we consider adjacent datasets $\mathcal{D}_{i}$ and ${\mathcal{D}}'_{i}$ that differ in the $k$-th element, i.e., $\xi_{i,k}$ in $\mathcal{D}_{i}$ and ${\xi}'_{i,k}$ in ${\mathcal{D}}'_{i}$ are different. For the sake of clarity, the parameters learned over $\mathcal{D}_{i}$ and ${\mathcal{D}}'_{i}$ are denoted as $\theta_{i,t}$ and ${\theta}'_{i,t}$, respectively. 

\begin{theorem}\label{t4}
Under Assumptions~\ref{a1}-\ref{a5}, if each element of $\vartheta_{i,t}$ and $\zeta_{i,t}$ follows the Laplace distributions~$\text{Lap}(\nu_{i,t,\vartheta})$ and~$\text{Lap}(\nu_{i,t,\zeta})$, respectively, with $(\sigma_{i,t,\vartheta})^2\!\!=\!2(\nu_{i,t,\vartheta})^2$ and $(\sigma_{i,t,\zeta})^2\!\!=\!2(\nu_{i,t,\zeta})^2$ satisfying Assumption 4, then $\theta_{i,t}$~(resp. $F(\theta_{i,t})$ in the general convex case) in Algorithm~1 converges in mean square to the optimal solution $\theta^*$~to the optimization problem~\eqref{primal} (resp. in mean to $F(\theta^*)$). Furthermore,

1) For any finite number of iterations $T$, learner $i$'s implementation of Algorithm~1 is LDP with a cumulative privacy budget bounded by $\epsilon_{i}\!\!=\!\!\epsilon_{i,s}\!+\!\epsilon_{i,\theta}$, where $\epsilon_{i,s}\!\!\leq\!\!\sum_{t=1}^{T}\frac{\sqrt{2}\varrho_{t,s}(t+1)^{\varsigma_{i,\zeta}}}{\sigma_{i,0,\zeta}}$ and $\epsilon_{i,\theta}\!\leq\!\sum_{t=1}^{T}\frac{\sqrt{2}\varrho_{t,\theta}(t+1)^{\varsigma_{i,\vartheta}}}{\sigma_{i,0,\vartheta}}$
with  $\varrho_{t,s}\!=\!2c_{l}\sum_{p=1}^{t}(1-\min_{i\in[m]}\{|C_{ii}|\})^{t-p}\lambda_{p-1}$, $\varrho_{0,s}=0$, and $\varrho_{t,\theta}\!=\!\sum_{q=1}^{t}(1-\min_{i\in[m]}\{|R_{ii}|\})^{t-q}(c_{z}\gamma_{z}^{q-1}+\frac{1}{|u_{i}|})(\varrho_{q,s}+\varrho_{q-1,s})$.

2) The cumulative privacy budget is finite even when the number of iterations $T$ tends to infinity.
\end{theorem}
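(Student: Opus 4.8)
The plan is to separate the two assertions. The convergence claim is essentially immediate: since each coordinate of $\vartheta_{i,t}$ and $\zeta_{i,t}$ is Laplace with $(\sigma_{i,t,\vartheta})^2=2(\nu_{i,t,\vartheta})^2$ and $(\sigma_{i,t,\zeta})^2=2(\nu_{i,t,\zeta})^2$, the prescribed parameters make the noise variances obey Assumption~\ref{a4} exactly. Hence Theorem~\ref{t2} (strongly convex, $\mu>0$) and Theorem~\ref{t3} (general convex, $\mu\geq0$) apply verbatim, yielding mean-square convergence of $\theta_{i,t}$ to $\theta^*$ and mean convergence of $F(\theta_{i,t})$ to $F(\theta^*)$. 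The substance of the theorem is therefore the privacy budget, which I would obtain by bounding the two per-step sensitivities of Definition~\ref{d3} and then invoking Lemma~\ref{l11}.

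First I would bound $\Delta_{i,t,s}$. Fixing the noise realizations and all messages $\theta_{-i,t}$ that learner $i$ receives, the only driver of divergence between the runs on adjacent $\mathcal{D}_i$ and ${\mathcal{D}}'_i$ is learner $i$'s own empirical gradient. Writing the update~\eqref{dynamics} for the two datasets and subtracting, the received neighbor terms cancel and I am left with the scalar recursion $\Delta_{i,t+1,s}\leq(1-|C_{ii}|)\Delta_{i,t,s}+\lambda_t\|\nabla f_{i,t}(\theta_{i,t})-\nabla f'_{i,t}(\theta_{i,t})\|_1$, where I used $1+C_{ii}=1-|C_{ii}|$ from Assumption~\ref{a1}. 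Because $\mathcal{D}_i$ and ${\mathcal{D}}'_i$ differ only in the $k$-th sample while $f_{i,t}$ averages the per-sample losses with weight $1/(t+1)$, Assumption~\ref{a5} bounds the forcing by $2c_l\lambda_t/(t+1)$. Unrolling the contraction $1-|C_{ii}|$ from $\Delta_{i,0,s}=0$ gives the geometric-convolution bound $\Delta_{i,t,s}\leq\varrho_{t,s}$, with per-step forcing $\Theta(t^{-(1+v)})$.

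Next I would propagate this into $\Delta_{i,t,\theta}$. Subtracting the update~\eqref{dynamictheta} across the two runs, the neighbor terms again cancel and the forcing becomes the difference of $\frac{s_{i,t+1}-s_{i,t}}{m[z_{i,t}]_i}$, whose $\ell_1$-sensitivity is at most $\frac{1}{m[z_{i,t}]_i}(\Delta_{i,t+1,s}+\Delta_{i,t,s})$. Lemma~\ref{l2} lets me replace $\frac{1}{m[z_{i,t}]_i}$ by $\frac{1}{|u_i|}+c_z\gamma_z^t$, and the previous step replaces the $s$-sensitivities by $\varrho_{t+1,s}+\varrho_{t,s}$; unrolling the contraction $1-|R_{ii}|$ then produces exactly $\varrho_{t,\theta}$, so $\Delta_{i,t,\theta}\leq\varrho_{t,\theta}$. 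Plugging both sensitivities into Lemma~\ref{l11} and substituting $\nu_{i,t,\zeta}=\sigma_{i,0,\zeta}/(\sqrt2(t+1)^{\varsigma_{i,\zeta}})$ together with the analogous expression for $\nu_{i,t,\vartheta}$ converts $\sum_{t=1}^T(\Delta_{i,t,s}/\nu_{i,t,\zeta}+\Delta_{i,t,\theta}/\nu_{i,t,\vartheta})\leq\epsilon_i$ into the explicit finite-$T$ budget of Part~1).

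For Part~2) I would show the two series converge as $T\to\infty$. Viewing the $s$-recursion as $v_{t+1}\leq(1-|C_{ii}|)v_t+\beta_t$ with $\beta_t=\Theta(t^{-(1+v)})$, Lemma~\ref{l12} gives $\varrho_{t,s}=O(t^{-(1+v)})$; applied to the $\theta$-recursion, whose forcing is $O(t^{-(1+v)})$ plus a geometrically small piece, the same lemma gives $\varrho_{t,\theta}=O(t^{-(1+v)})$. Each summand then behaves like $t^{\varsigma_{i,\zeta}-1-v}$ (resp.\ $t^{\varsigma_{i,\vartheta}-1-v}$), and since Assumption~\ref{a4} guarantees $\varsigma_{i,\zeta},\varsigma_{i,\vartheta}<v$, these exponents are strictly below $-1$, so both series converge and $\epsilon_i$ stays finite for every $T$. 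The main obstacle I anticipate is the $\theta$-sensitivity step: the $\theta$-trajectory is genuinely coupled to the $s$-trajectory and to the eigenvector-estimation error, so the forcing term must be assembled carefully, controlling both $\Delta_{i,t+1,s}+\Delta_{i,t,s}$ and the $c_z\gamma_z^t$ factor, to land on the closed form $\varrho_{t,\theta}$. Getting the decay right there is exactly what makes Part~2) work, and it hinges on retaining the $1/(t+1)$ empirical-averaging factor rather than a crude $2c_l$ bound on the gradient difference.
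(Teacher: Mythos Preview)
Your overall outline is right, but there is a genuine gap in the sensitivity recursions that breaks both Part~1) and Part~2).

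When you subtract the two copies of~\eqref{dynamics}, the gradient term is $\lambda_t\bigl(\nabla f_{i,t}(\theta_{i,t})-\nabla f'_{i,t}(\theta'_{i,t})\bigr)$, not $\lambda_t\bigl(\nabla f_{i,t}(\theta_{i,t})-\nabla f'_{i,t}(\theta_{i,t})\bigr)$. Once $t\geq k$ the two $\theta$-trajectories have already diverged, so the forcing has \emph{two} pieces: the single-sample change $\xi_{i,k}\neq\xi'_{i,k}$, which contributes $2c_l\lambda_t/(t+1)$ as you wrote, \emph{and} the different evaluation points $\theta_{i,t}\neq\theta'_{i,t}$, which by Assumption~\ref{a3}(iii) contributes an additional term of order $L\lambda_t\,\Delta_{i,t,\theta}$. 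Your recursion drops this second piece, so the claimed forcing $\Theta(t^{-(1+v)})$ for $\Delta_{i,t,s}$ is unsupported. In fact, for Part~1) the paper does \emph{not} use the $1/(t+1)$ factor at all: it bounds every summand in $\frac{1}{t+1}\sum_{p=0}^t\|\nabla l(\theta_{i,t},\xi_{i,p})-\nabla l(\theta'_{i,t},\xi'_{i,p})\|_1$ by $2c_l$ via Assumption~\ref{a5}, giving forcing $2c_l\lambda_t$ and hence exactly the stated $\varrho_{t,s}$.

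The same oversight invalidates your Part~2). To recover the $1/(t+1)$ factor you cannot ignore the evaluation-point dependence; you must carry it as a coupling term. The paper obtains the genuinely coupled pair
\[
\Delta_{i,t+1,s}\leq(1-c_C)\Delta_{i,t,s}+\tfrac{\sqrt{n}L\,t\lambda_t}{t+1}\,\Delta_{i,t,\theta}+\tfrac{2c_l\lambda_t}{t+1},\qquad
\Delta_{i,t+1,\theta}\leq(1-c_R)\Delta_{i,t,\theta}+\bigl(c_z\gamma_z^{t}+\tfrac{1}{|u_i|}\bigr)(\Delta_{i,t+1,s}+\Delta_{i,t,s}),
\]
and then eliminates the cross-terms by a Lyapunov-type combination: multiply the first inequality by a sufficiently large constant $C_0$ and add, so that $\Delta_{i,t,\theta}+(C_0-\tfrac{1}{|u_i|})\Delta_{i,t,s}$ satisfies a single recursion $v_{t+1}\leq(1-C_3)v_t+O(\lambda_t/(t+1))$ for all large $t$. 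Only then does Lemma~\ref{l12} yield the $O(t^{-(1+v)})$ decay and the summability of $\epsilon_{i,s}$ and $\epsilon_{i,\theta}$. You correctly flagged the $\theta$-step as the delicate one and the $1/(t+1)$ factor as essential, but the coupling is bidirectional---the $s$-recursion also sees $\Delta_{i,t,\theta}$---and handling that is the missing idea.
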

\begin{proof}
The convergence result follows naturally from Theorem~\ref{t2} (resp. Theorem~\ref{t3}).

1) To prove the statements on privacy, we first analyze the sensitivity of learner $i$'s implementation under Algorithm 1. 
	
According to the definition of sensitivity in~\eqref{sensitive}, we have $s_{j,t}+\zeta_{j,t}={s}'_{j,t}+{\zeta}'_{j,t}$ and $\theta_{j,t}+\vartheta_{j,t}={\theta}'_{j,t}+{\vartheta}'_{j,t}$ for all $t\geq 0$ and $j\in{\mathcal{N}_{i}}$. Since we assume that only the $k$-th data point is different between $\mathcal{D}_{i}$ and ${\mathcal{D}}'_{i}$, when $t<k$, we have $s_{i,t}={s}'_{i,t}$ and $\theta_{i,t}={\theta}'_{i,t}$. However, when $t\geq k$, since the difference in loss functions kicks in at time $k$, i.e., $l(\theta,\xi_{i,k})\neq l(\theta,{\xi}'_{i,k})$, we have $s_{i,t}\neq{s}'_{i,t}$ and $\theta_{i,t}\neq{\theta}'_{i,t}$. Hence, for learner $i$'s implementation of Algorithm 1, we have
\begin{equation}
	\vspace{-0.3em}
\begin{aligned}
&\|s_{i,t+1}-{s}'_{i,t+1}\|_{1}=\big{\|}(1+C_{ii})(s_{i,t}-{s}'_{i,t})\\
&\quad+\frac{\lambda_{t}}{t+1}\sum_{p=k}^{t}(\nabla l(\theta_{i,t},\xi_{i,p})-\nabla l({\theta}'_{i,t},{\xi}'_{i,p}))\big{\|}_{1},\label{4T1}
\end{aligned}
\vspace{-0.3em}
\end{equation}
for all $t\geq k$. Letting $c_{C}\!=\!\min\{|C_{ii}|\},~i\!\in\![m]$, the sensitivity $\Delta_{i,t,s}$ satisfies
\begin{equation}
\begin{aligned}
&\Delta_{i,t+1,s}\leq (1-c_{C})\Delta_{i,t,s}\\
&\quad+\frac{\lambda_{t}}{t+1}\sum_{p=k}^{t}\big\|\nabla l(\theta_{i,t},\xi_{i,p})-\nabla l({\theta}'_{i,t},{\xi}'_{i,p})\big\|_{1}\\
&\leq (1-c_{C})\Delta_{i,t,s}\\
&\quad+\frac{\lambda_{t}}{t+1}\sum_{p=0}^{t}\big\|\nabla l(\theta_{i,t},\xi_{i,p})-\nabla l({\theta}'_{i,t},{\xi}'_{i,p})\big\|_{1},
\label{4T2}
\end{aligned}
\end{equation}
where we used $\sum_{p=0}^{k-1}\nabla l(\theta_{i,t},\xi_{i,p})=\sum_{p=0}^{k-1}\nabla l({\theta}'_{i,t},{\xi}'_{i,p})$ in the second inequality.

By using Assumption~\ref{a5} and the relation $\Delta_{i,0,s}=0$, we iterate~\eqref{4T2} from $t=1$ to $t=T$ to obtain
\begin{equation}
	\vspace{-0.3em}
\Delta_{i,t,s}
\leq 2c_{l}\sum_{p=1}^{t}(1-c_{C})^{t-p}\lambda_{p-1},\label{4T3}
\vspace{-0.3em}
\end{equation}

Similarly, we use dynamics~\eqref{dynamictheta} to obtain
\begin{equation}
\begin{aligned}
&\|\theta_{i,t+1}-{\theta}'_{i,t+1}\|_{1}=\big{\|}(1+R_{ii})(\theta_{i,t}-{\theta}'_{i,t})\\
&\quad-\frac{1}{m[z_{i,t}]_{i}}(s_{i,t+1}-{s}'_{i,t+1})+\frac{1}{m[z_{i,t}]_{i}}(s_{i,t}-{s}'_{i,t})\big{\|}_{1},\nonumber
\end{aligned}
\end{equation}
for all $t\geq k$. Letting $c_{R}=\min\{|R_{ii}|\},~i\in[m]$ and using~Lemma~\ref{l3},  the sensitivity $\Delta_{i,t,\theta}$ satisfies
\begin{equation}
\begin{aligned}
\Delta_{i,t+1,\theta}&\leq (1-c_{R})\Delta_{i,t,\theta}+c_{z}\gamma_{z}^{t}\Delta_{i,t+1,s}+c_{z}\gamma_{z}^{t}\Delta_{i,t,s}\\
&\quad+\frac{1}{|u_{i}|}\Delta_{i,t+1,s}+\frac{1}{|u_{i}|}\Delta_{i,t,s},
\label{4T5}
\end{aligned}
\end{equation}
for all $t\geq k$. Since $\theta_{i,t}={\theta}'_{i,t}$ and $s_{i,t}=s'_{i,t}$ are valid for all $t<k$,~we have \eqref{4T5} for all $t>0$. By using the relation $\Delta_{i,0,\theta}=0$ and iterating~\eqref{4T5} from $t=1$ to $t=T$, we obtain
\begin{equation}
\Delta_{i,t,\theta}
\leq\sum_{q=1}^{t}(1-c_{R})^{t-q}(c_{z}\gamma_{z}^{q-1}+\frac{1}{|u_{i}|})(\Delta_{i,q,s}+\Delta_{i,q-1,s}).\label{4T6}
\end{equation}

The inequalities~\eqref{4T3} and~\eqref{4T6} imply that for learner $i$, the $T$-iteration cumulative privacy budgets $\epsilon_{i,s}$ and $\epsilon_{i,\theta}$ are bounded by $\sum_{t=1}^{T}\frac{\sqrt{2}\varrho_{t,s}(t+1)^{\varsigma_{i,\zeta}}}{\sigma_{i,0,\zeta}}$ and $\sum_{t=1}^{T}\frac{\sqrt{2}\varrho_{t,\theta}(t+1)^{\varsigma_{i,\vartheta}}}{\sigma_{i,0,\vartheta}}$, respectively, with $\varrho_{t,s}$ and $\varrho_{t,\theta}$ given in the theorem statement.

2) By leveraging inequality~\eqref{4T2} and the relation $\xi_{i,p}={\xi}'_{i,p}$ for $p\neq k$, we have 
\begin{flalign}
&\Delta_{i,t+1,s}\leq (1-c_{C})\Delta_{i,t,s}\nonumber\\
&\quad+\frac{\lambda_{t}}{t+1}\|\nabla l(\theta_{i,t},\xi_{i,k})-\nabla l({\theta}'_{i,t},{\xi}'_{i,k})\|_{1}\nonumber\\
&\quad+\frac{\lambda_{t}}{t+1}\sum_{p=0,p\neq k}^{t}\|\nabla l(\theta_{i,t},\xi_{i,p})-\nabla l({\theta}'_{i,t},\xi_{i,p})\|_{1},\label{4T7}
\end{flalign}
for all $t\geq k$. The Lipschitz property in Assumption~\ref{a3}(iii) implies that for the same data $\xi_{i,p}$, we can rewrite~\eqref{4T7} as follows:
\begin{equation}
\Delta_{i,t+1,s}\leq (1-c_{C})\Delta_{i,t,s}+\frac{\sqrt{n}Lt\lambda_{t}}{t+1}\Delta_{i,t,\theta}+\frac{2c_{l}\lambda_{t}}{t+1}.\label{4T8}
\end{equation}
where in the derivation we have used Assumption~\ref{a5}.

By substituting~\eqref{4T8} into~\eqref{4T5}, we have
\begin{flalign}
&\Delta_{i,t+1,\theta}\leq \left(1-c_{R}+\frac{\sqrt{n}Lc_{z}(t\gamma_{z}^{t}\lambda_{t})}{t+1}\right)\Delta_{i,t,\theta}+\frac{1}{|u_{i}|}\Delta_{i,t+1,s}\nonumber\\
&\quad+(2-c_{C})c_{z}\gamma_{z}^{t}\Delta_{i,t,s}+\frac{2c_{l}c_{z}\gamma_{z}^{t}\lambda_{t}}{t+1}+\frac{1}{|u_{i}|}\Delta_{i,t,s}.\label{4T9}
\end{flalign}

Select positive constants $C_{3}<\min\{\frac{c_{R}}{2},\frac{c_{C}}{2}\}$ and $C_{0}\geq\min\{\frac{4}{|u_{i}|(c_{C}-2C_{3})},\frac{1}{|u_{i}|}\}$. We multiply both sides of~\eqref{4T8} by $C_{0}$ and combine~\eqref{4T8} and~\eqref{4T9} to obtain
\begin{equation}
	\begin{aligned}
		&\Delta_{i,t+1,\theta}+\Big(C_{0}-\frac{1}{|u_{i}|}\Big)\Delta_{i,t+1,s}\\
		&\leq \left(1-c_{R}+\frac{\sqrt{n}Lc_{z}(t\gamma_{z}^{t}\lambda_{t})}{t+1}+\frac{C_{0}\sqrt{n}Lt\lambda_{t}}{t+1}\right)\Delta_{i,t,\theta}\\
		&\quad+\left((2-c_{C})c_{z}\gamma_{z}^{t}+C_{0}(1-c_{C})+\frac{1}{|u_{i}|}\right)\Delta_{i,t,s}\\
		&\quad+\frac{2c_{l}c_{z}\gamma_{z}^{t}\lambda_{t}+2C_{0}c_{l}\lambda_{t}}{t+1}.
	\end{aligned}
\end{equation}
Since $\lambda_{t}$ and $\gamma_{z}^{t}$ are decaying sequences, there must exist some $T_{0}\geq 0$ such that  $\frac{c_{R}}{2}\geq\frac{\sqrt{n}Lc_{z}(t\gamma_{z}^{t}\lambda_{t})+C_{0}\sqrt{n}Lt\lambda_{t}}{t+1}$ and $(2-c_{C})c_{z}\gamma_{z}^{t}\leq\frac{C_{0}c_{C}}{2}$ hold for all $t\geq T_{0}$. Hence, we arrive at
\begin{equation}
	\begin{aligned}
		&\Delta_{i,t+1,\theta}+\Big(C_{0}-\frac{1}{|u_{i}|}\Big)\Delta_{i,t+1,s}\\
		&\leq \left(1-\frac{c_{R}}{2}\right)\Delta_{i,t,\theta}+\left(C_{0}\left(1-\frac{c_{C}}{2}\right)+\frac{1}{|u_{i}|}\right)\Delta_{i,t,s}\\
		&\quad+\frac{2c_{l}c_{z}\gamma_{z}^{t}\lambda_{t}+2C_{0}c_{l}\lambda_{t}}{t+1}\\
		&\leq (1-C_{3})\left(\Delta_{i,t,\theta}+\left(C_{0}-\frac{1}{|u_{i}|}\right)\Delta_{i,t,s}\right)\\
		&\quad+\frac{2c_{l}c_{z}\gamma_{z}^{t}\lambda_{t}+2C_{0}c_{l}\lambda_{t}}{t+1},\label{4333}
	\end{aligned}
\end{equation}
for all $t\geq T_{0}$, where we used  $(1-C_{3})(C_{0}-\frac{1}{|u_{i}|})>C_{0}(1-\frac{c_{C}}{2})+\frac{1}{|u_{i}|}$ according to the definitions of $C_{0}$ and $C_{3}$.

We further define a constant $C_{4}>0$ as follows:
\begin{equation}
	\small
	\begin{aligned}	
		C_{4}&=\max\Bigg\{\left(\frac{4(1+v)}{e\ln(\frac{2}{2-C_{3}})}\right)^{1+v}\frac{2}{1-C_{3}},\\
		&\quad\max_{0\leq t\leq T_{0},i\in[m]}\Bigg\{\frac{\left(\Delta_{i,t,\theta}+\left(C_{0}-\frac{1}{|u_{i}|}\right)\Delta_{i,t,s}\right)(t+1)}{2c_{l}c_{z}\gamma_{z}^{t}\lambda_{t}+2C_{0}c_{l}\lambda_{t}}\Bigg\}\Bigg\}.\nonumber
	\end{aligned}
\end{equation}
Combining Lemma~\ref{l12} and~\eqref{4333}, we obtain
\begin{equation}
	\Delta_{i,t+1,\theta}+(C_{0}-\frac{1}{|u_{i}|})\Delta_{i,t+1,s}\leq C_{4}\frac{2c_{l}c_{z}\gamma_{z}^{t}\lambda_{0}+2C_{0}c_{l}\lambda_{0}}{(t+1)^{1+v}},\nonumber
\end{equation}
for all $t>0$. By using Lemma~\ref{l11}, we arrive at
\begin{equation}
	\epsilon_{i,\theta}\leq \sum_{t=1}^{T}\frac{2\sqrt{2}C_{4}c_{l}\lambda_{0}(c_{z}\gamma_{z}^{t}+C_{0})}{\sigma_{i,0,\vartheta}(t+1)^{1+v-\varsigma_{i,\vartheta}}},\nonumber
\end{equation}
and
\begin{equation}
	\epsilon_{i,s}\leq \sum_{t=1}^{T}\frac{2\sqrt{2}C_{4}c_{l}\lambda_{0}(c_{z}\gamma_{z}^{t}+C_{0})}{(C_{0}-\frac{1}{|u_{i}|})\sigma_{i,0,\zeta}(t+1)^{1+v-\varsigma_{i,\zeta}}},\nonumber
\end{equation}
implying that $\epsilon_{i}=\epsilon_{i,\theta}+\epsilon_{i,s}$ is finite even when $T$ tends to infinity since $v>\max\{\varsigma_{i,\vartheta},\varsigma_{i,\zeta}\}$ always holds.
\end{proof}
Theorem~\ref{t4} proves that the privacy budget $\epsilon_{i}$ is finite even when the number of iterations $T$ tends to infinity, thereby establishing rigorous privacy protection in the infinite time horizon. We have thus shown that Algorithm~1 can simultaneously ensure accurate learning and rigorous $\epsilon_{i}$-LDP for each learner. This is fundamentally different from existing DP solutions for distributed learning and optimization~\cite{DOwu,DMOD,DOLA,lihuaqing,Youkeyou,distributedonlineDP3}, which allow the cumulative privacy budget to grow to infinity, implying diminishing privacy protection as the number of iterations tends to infinity.

\begin{remark}
Compared with the privacy analysis in our prior work~\cite{ziji1} for undirected graphs, which only involves a single optimization variable, the privacy analysis here is much more complicated due to the involvement of two optimization variables $s_{i,t}$ and $\theta_{i,t}$, whose dynamics are strongly coupled.
\end{remark}

\section{Numerical Experiments}\label{experiment}
We evaluated the performance of Algorithm 1 through three machine-learning applications: linear regression using the ``Mushrooms" dataset and image classification using the ``MNIST" and ``CIFAR-10" datasets, respectively. In each experiment, we compared Algorithm 1 with existing DP solutions for distributed learning and optimization, including the DiaDSP algorithm~\cite{Dingtie}, the DP-oriented gradient tracking based algorithm~\cite{Tailoring}, the distributed online stochastic subgradient algorithm~\cite{DOwu}, the distributed online optimization algorithm~\cite{Youkeyou}, and the LDOL algorithm~\cite{ziji1}. For a fair comparison, we set the privacy budget for these algorithms as the maximum $\epsilon_{i}$ across all learners used in our Algorithm 1, which corresponds to the weakest level of privacy protection among all learners. Additionally, we evaluated the conventional Push-Pull gradient-tracking algorithm~\cite{pushpull} (i.e., algorithm~\eqref{classGTnoise}) under the same DP noises as those used in Algorithm 1. The interaction pattern associated with the weight matrix $R$ was consistent across all experiments and is depicted in Fig.~1. The weight matrix $C$ was set as the transpose of $R$.
\begin{figure}
	\centering 
	\includegraphics[width=0.28\textwidth]{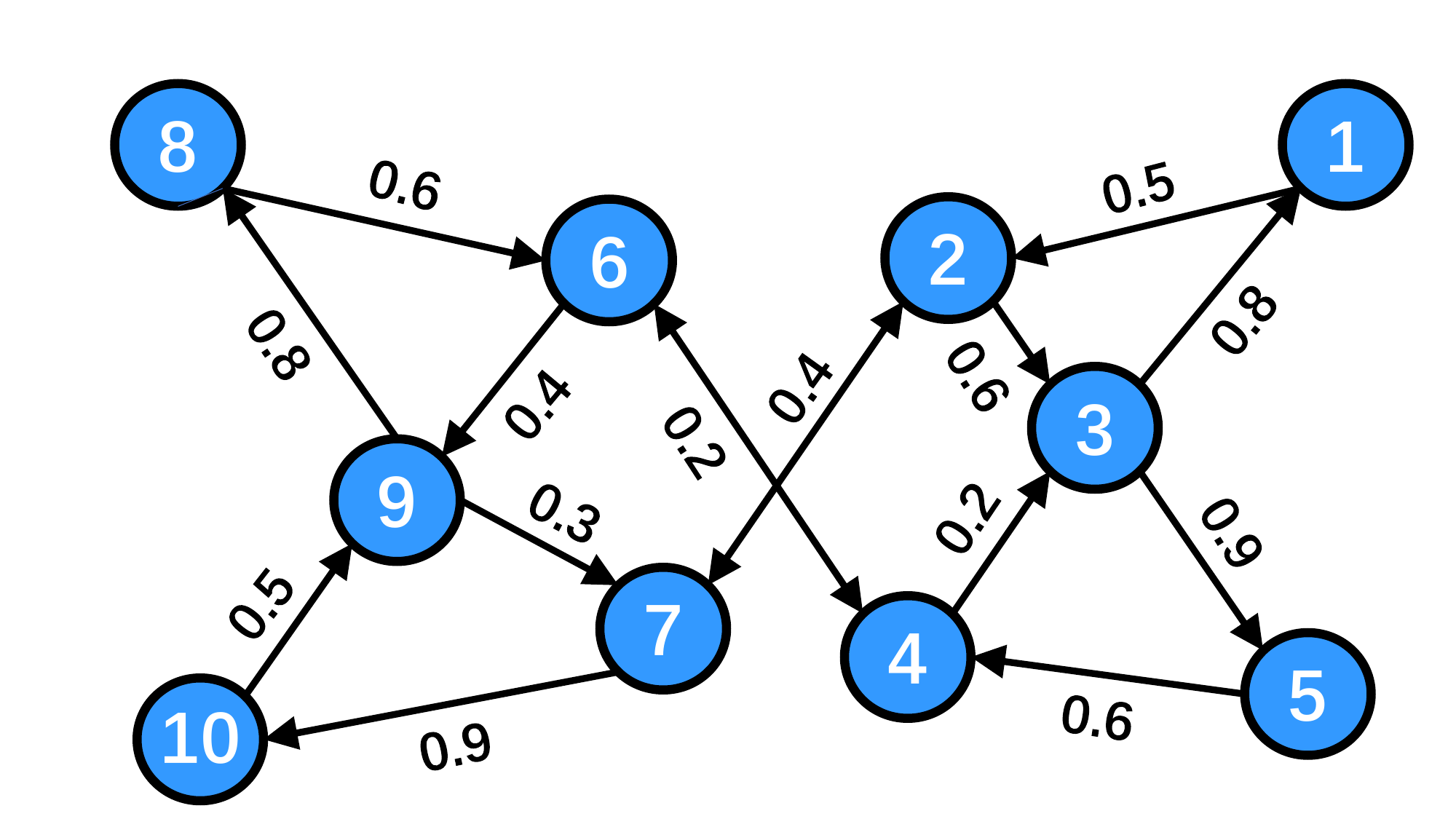} 
	\caption{The interaction graph $\mathcal{G}_{R}$ of ten learners.} 
	\label{Fig1} 
\end{figure}
\subsection{Logistic regression using the ``Mushrooms" dataset}
We first evaluated the performance of Algorithm 1 using $l_{2}$-logistic regression based classification of the ``Mushrooms" dataset~\cite{mushrooms}. The loss function for learner $i$ is given by 
\begin{equation} l(\theta,\xi_{i,t})=\frac{1}{N_{i,t}}\sum_{s=1}^{N_{i,t}}(1-b_{i,s}a_{i,s}^{T}\theta-\log(s((a_{i,s})^{T}\theta))+\frac{r_{i,t}}{2}\|\theta\|_{2}^2,\nonumber
\end{equation}
where~$N_{i,t}$ is the number of samples at time $t$, $s(a)$ is the sigmoid function defined as $s(a)=\frac{1}{1+e^{-a}}$, $(a_{i,t},b_{i,t})\in{\mathbb{R}^{n}\!\times\!\mathbb{R}}$ is a data point, and $r_{i,t}\!>\!0$ is a regularization parameter proportional to~$N_{i,t}$. In each iteration, we randomly selected $10$ samples and distributed them among the $10$ learners. 

In this experiment, we set the stepsize as $\lambda_{t}=\frac{1}{(t+1)^{0.6}}$ and the DP-noise variances as $\nu_{i,t,\vartheta}=\frac{1}{(t+1)^{\varsigma_{i,\vartheta}}}$ and $\nu_{i,t,\zeta}=\frac{1}{(t+1)^{\varsigma_{i,\zeta}}}$ with $\varsigma_{i,\vartheta}=\varsigma_{i,\zeta}=0.5+0.01i$ for $i=1,2,\cdots,10$ The optimal solution $\theta^*$ was obtained using a centralized gradient descent algorithm in the absence of DP noises. In our comparison, we employed the same stepsize and DP noises for the conventional Push-Pull gradient-tracking algorithm~\cite{pushpull}. For other algorithms, we selected near-optimal stepsizes, ensuring that doubling the stepsizes would lead to non-convergent behaviors for these algorithms. In particular, the weakening factor for the LDOL algorithm was set to $\gamma_{t}=\frac{1}{(t+1)^{0.7}}$, in accordance with the guidelines provided in~\cite{ziji1}. The weakening factors for the algorithm in~\cite{Tailoring} were set to $\gamma_{1,t}=\frac{1}{(t+1)^{0.95}}$ and $\gamma_{2,t}=\frac{1}{(t+1)^{0.75}}$, in line with the guidelines provided in~\cite{Tailoring}.

Fig.~2(a) shows the evolution of the average tracking errors $\frac{1}{10}\sum_{i=1}^{10}\|\theta_{i,t}-\theta^*\|_{2}$ and Fig.~2(b) depicts the evolution of the average objective function loss $\frac{1}{10}\sum_{i=1}^{10}F(\theta_{i,t})-F(\theta^*)$ [see Theorem~\ref{t2}]. Clearly, our Algorithm 1 outperforms existing results in terms of learning accuracy. Moreover, it can be seen that the DP noise indeed accumulates in the conventional Push-Pull algorithm in~\cite{pushpull}, leading to non-convergent learning results [see Sec.~\ref{motivation}].
\begin{figure}\label{mushrooms}
	\centering
	\subfigure[``Mushrooms", Average tracking error]{\includegraphics[height=4.8cm,width=8.5cm]{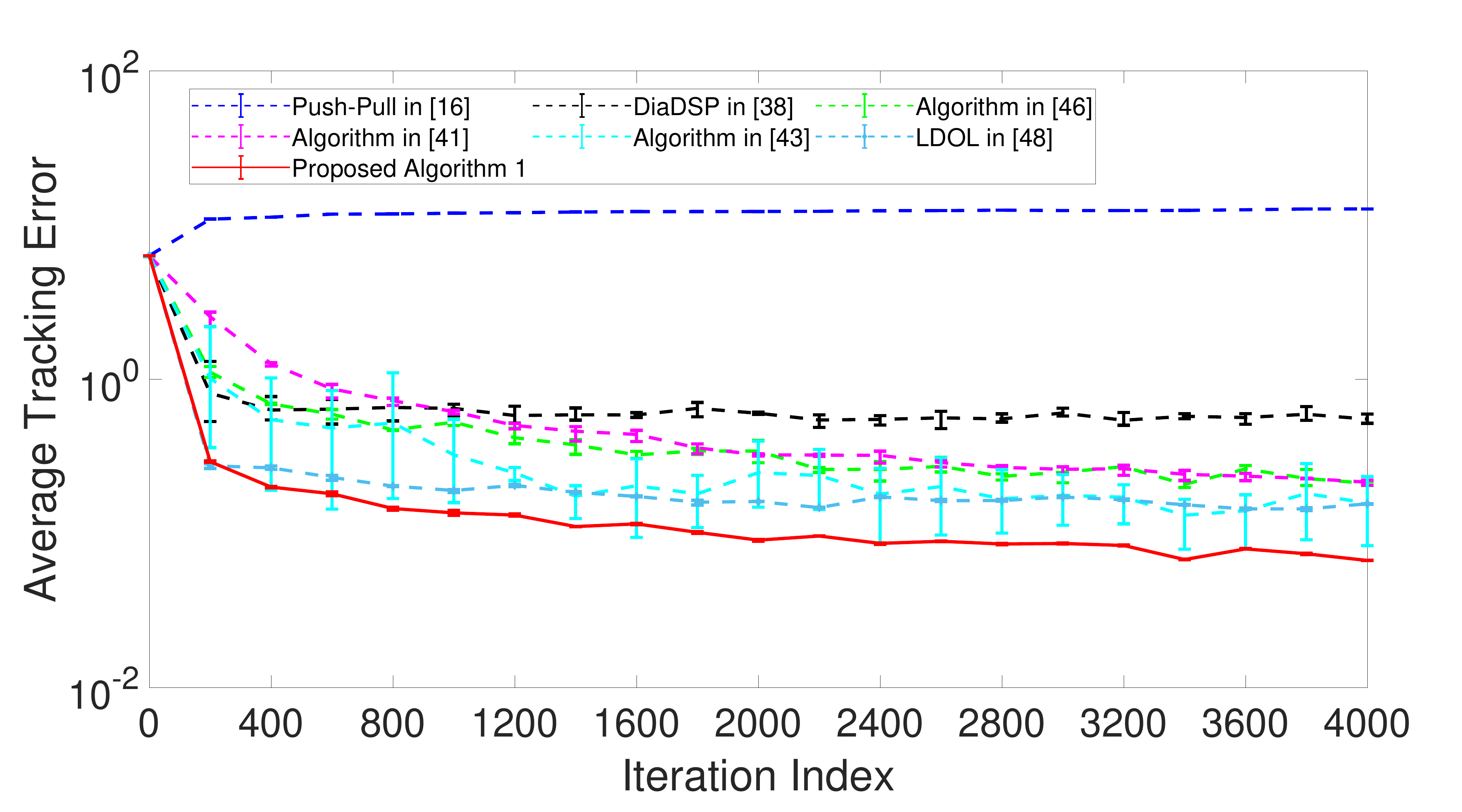}}\label{Mushroomtracking}
	\subfigure[``Mushrooms", Average tracking loss]{\includegraphics[height=4.8cm,width=8.5cm]{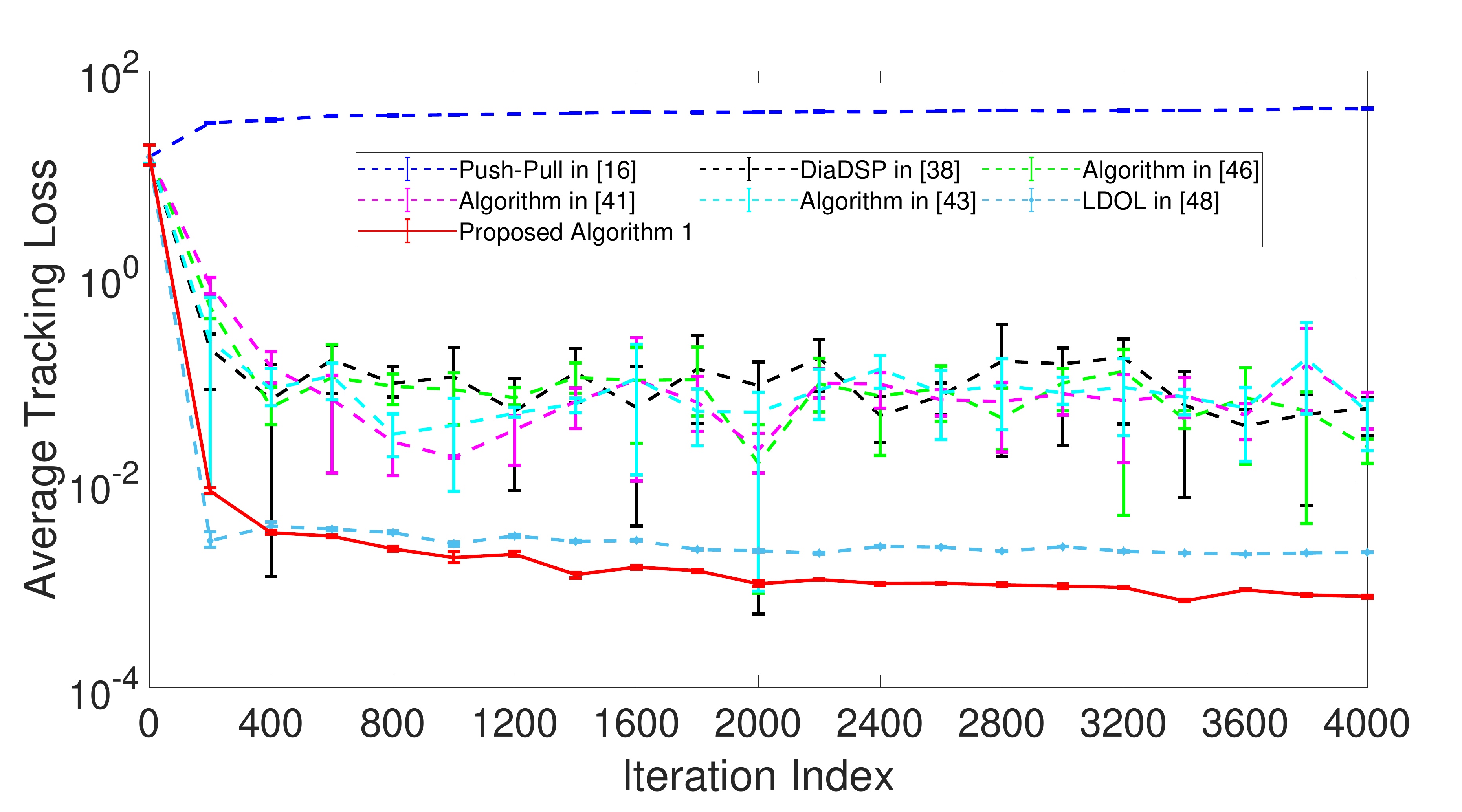}}\label{Mushroomcost}
	\caption{Comparison of online logistic regression results by using the ``Mushrooms" dataset.}
\end{figure}
\subsection{Neural-network training using the ``MNIST" dataset}
In the second experiment, we evaluated Algorithm 1 by training a convolutional neural-network (CNN) ResNet-18 on the ``MNIST" dataset~\cite{mnist}. During each iteration, each learner was trained on $40$ randomly selected images. 

In this experiment, we chose the stepsize as $\lambda_{t}=\frac{0.6}{(t+1)^{0.6}}$ and the DP-noise variances as $\nu_{i,t,\vartheta}=\frac{0.01}{(t+1)^{\varsigma_{i,\vartheta}}}$ and $\nu_{i,t,\zeta}=\frac{0.01}{(t+1)^{\varsigma_{i,\zeta}}}$ with $\varsigma_{i,\vartheta}=\varsigma_{i,\zeta}=0.5+0.01i$ for $i=1,2,\cdots,10$. We used the best stepsizes that we could find for the existing algorithms used in the comparison. The weakening factors for the algorithms in~\cite{ziji1} and~\cite{Tailoring} remained consistent with those employed in the previous logistic regression experiment.

Fig.~3 shows that the conventional Push-Pull algorithm~\cite{pushpull}, the DiaDSP algorithm~\cite{Dingtie}, and the algorithm in~\cite{DOwu} are incapable of effectively training the CNN model under persistent DP-noise injections. Moreover, our Algorithm 1 has better training and testing accuracies than the DP-oriented gradient tracking based algorithm~\cite{Tailoring}, the distributed online optimization algorithm~\cite{Youkeyou}, and the LDOL algorithm~\cite{ziji1}.
\begin{figure}\label{mnist}
	\centering
	\subfigure[``MNIST", Training accuracy]{\includegraphics[height=4.8cm,width=8.5cm]{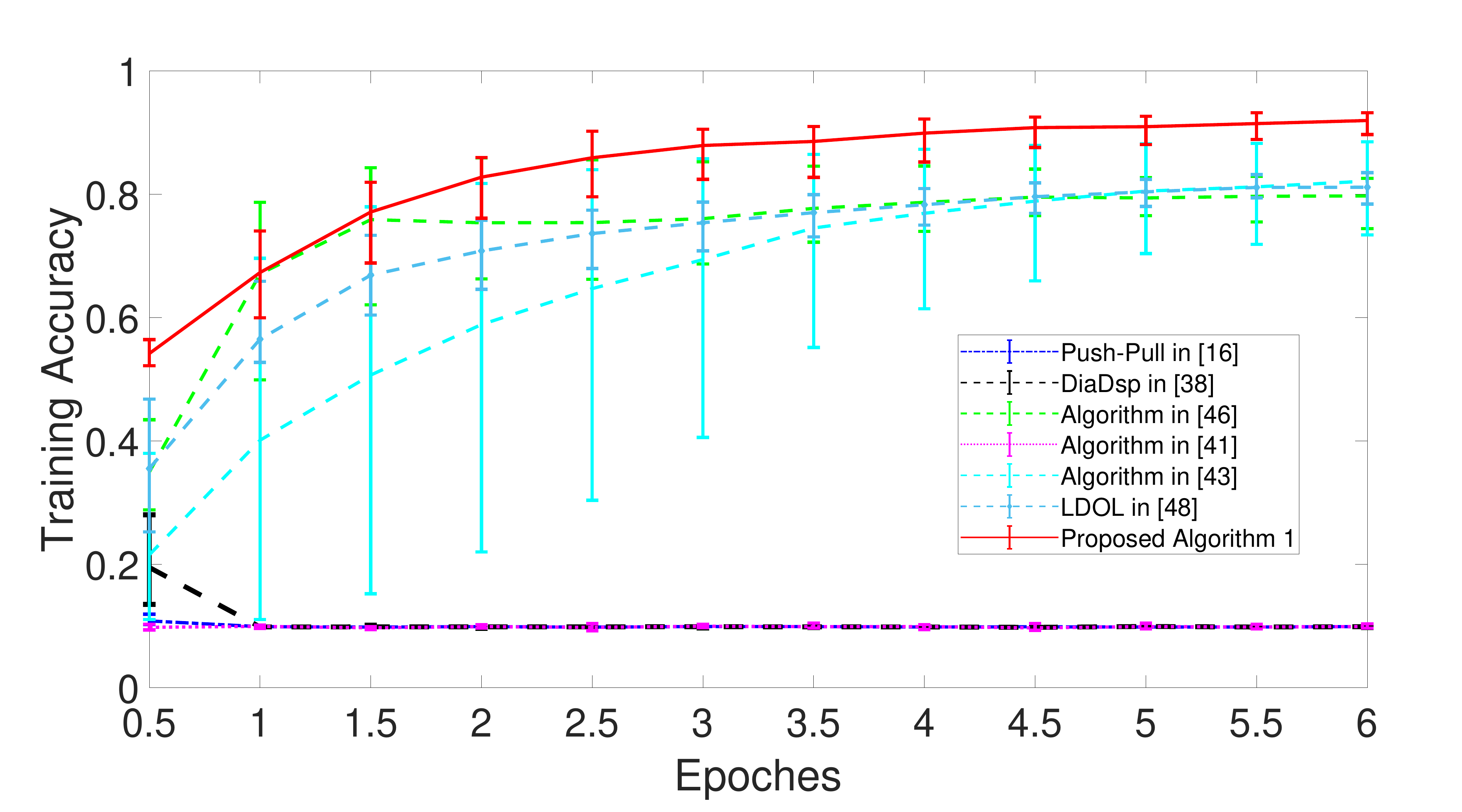}}\label{mnisttrain}
	\subfigure[``MNIST", Testing accuracy]{\includegraphics[height=4.8cm,width=8.5cm]{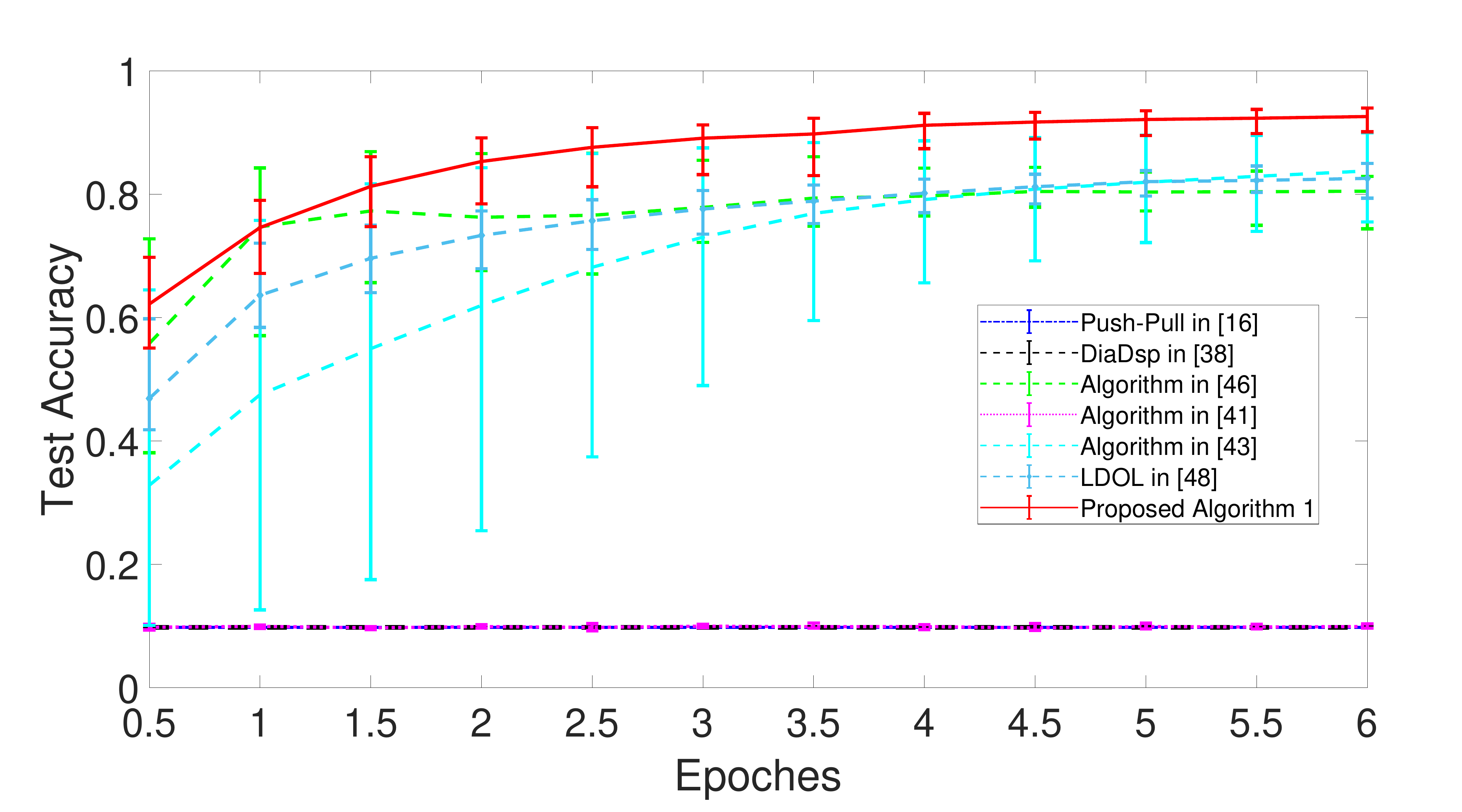}}\label{mnisttest}
	\caption{Comparison of CNN classification results by using the ``MNIST" dataset.}
\end{figure}
\subsection{Neural-network training using the ``CIFAR-10" dataset}
The third experiment evaluated Algorithm~1 using a CNN model and the ``CIFAR-10" dataset~\cite{cifar10}, which provides a greater diversity and complexity than the ``MNIST" dataset. The CNN architecture and parameters were the same as those used in the previous experiment on the ``MNIST" dataset.

The results are summarized in Fig. 4, which once again confirms the advantage of our proposed algorithm over existing counterparts in terms of both learning and testing accuracies.
\begin{figure}\label{cifar10}
	\centering
	\subfigure[``CIFAR-10", Training Accuracy]{\includegraphics[height=4.8cm,width=8.5cm]{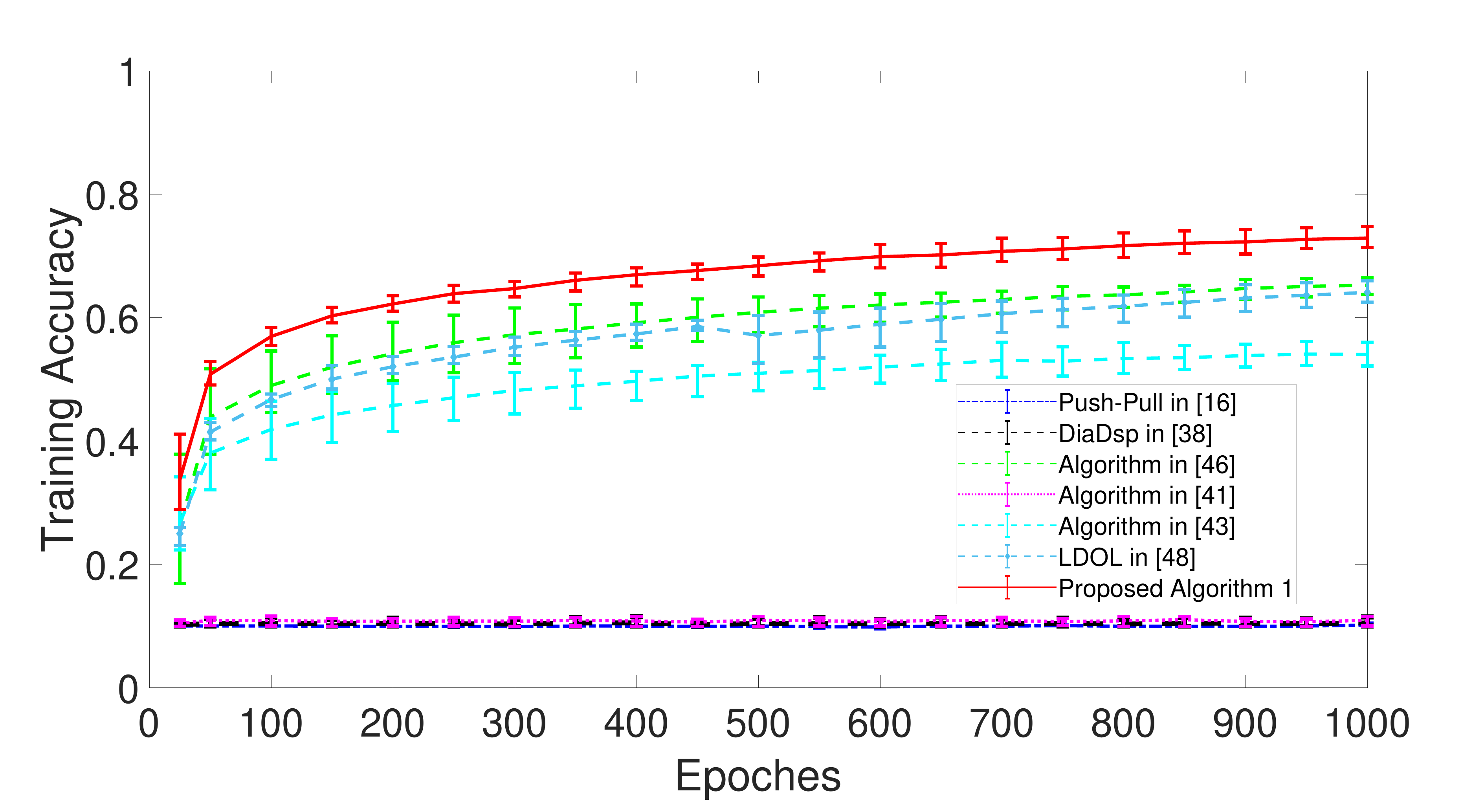}}\label{cifartrain}
	\subfigure[``CIFAR-10", Testing Accuracy]{\includegraphics[height=4.8cm,width=8.5cm]{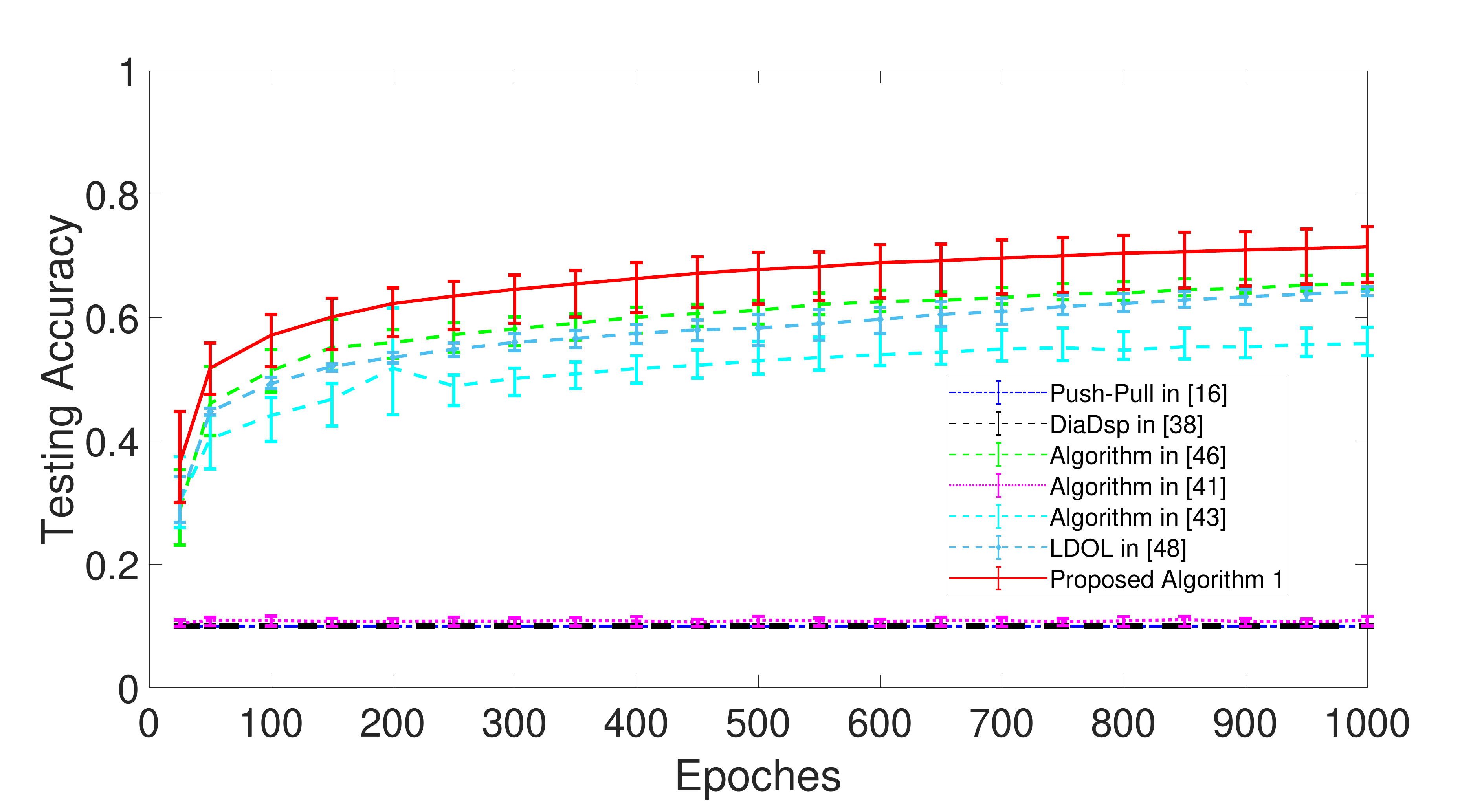}}\label{cifartest}
	\caption{Comparison of CNN classification results by using the ``CIFAR-10" dataset.}
\end{figure}
\section{Conclusions}\label{conclusion}
In this study, we proposed a distributed learning algorithm under the constraints of differential privacy and sequentially arriving data samples. We proved that the proposed algorithm converges in mean square to the accurate optimal solution, even in the presence of persistent DP noises and general directed graphs. Simultaneously, we also proved that the proposed algorithm can ensure rigorous $\epsilon_{i}$-LDP with a finite cumulative privacy budget, even when the number of iterations grows to infinity. To the best of our knowledge, this is the first algorithm that is able to simultaneously achieve provable convergence and rigorous $\epsilon_{i}$-LDP (with a finite cumulative privacy budget) in distributed online learning over directed graphs. Experimental comparisons using multiple benchmark machine-learning applications confirm the advantage of our proposed algorithm over existing counterparts.


\section*{Appendix}
For the convenience of derivation, we define  $\bar{\zeta}_{C,t}=\frac{\mathbf{1}^{T}\boldsymbol{\zeta}_{C,t}}{m}$, $\bar{\vartheta}_{R,t}=\frac{u^{T}\boldsymbol{\vartheta}_{R,t}}{m}$,
$\nabla\bar{f}_{t}(\boldsymbol{\theta}_{t})=\frac{\mathbf{1}^{T}\nabla\boldsymbol{f}_{t}(\boldsymbol{\theta}_{t})}{m}$, $\nabla\bar{f}(\boldsymbol{\theta}_{t})=\frac{\mathbf{1}^{T}\nabla\boldsymbol{f}(\boldsymbol{\theta}_{t})}{m}$, $U=\text{diag}(u_{1},\cdots,u_{m})$, $\bar{\mathbf{C}}=\mathbf{C}-\frac{\omega\mathbf{1}^{T}}{m}$, and $\bar{\mathbf{R}}=\mathbf{R}-\frac{\mathbf{1}u^{T}}{m}$.
\subsection{Proof of Lemma~\ref{l7}}
Left multiplying both sides of~\eqref{dcompacts} by $\frac{1}{m}\mathbf{1}^{T}$ and using the relation $\mathbf{1}^{T}C=\mathbf{0}$, we obtain
\begin{equation}
\begin{aligned}
\bar{s}_{t+1}=\frac{\mathbf{1}^{T}}{m}\big(\boldsymbol{s}_{t}+\boldsymbol{\zeta}_{C,t}+\lambda_{t}\nabla\boldsymbol{f}_{t}(\boldsymbol{\theta}_{t})\big).\label{1T1S1}
\end{aligned}
\end{equation}
Combing~\eqref{1T1S1} with~\eqref{dcompacts} and $\bar{\mathbf{C}}\omega=\mathbf{0}$ leads to
\begin{equation}
\boldsymbol{s}_{t+1}-\omega\bar{s}_{t+1}=\bar{\mathbf{C}}(\boldsymbol{s}_{t}-\omega\bar{s}_{t})+\Pi_{\omega}\boldsymbol{\zeta}_{C,t}+\lambda_{t}\Pi_{\omega}\nabla\boldsymbol{f}_{t}(\boldsymbol{\theta}_{t}),\label{1T1S2}
\end{equation}
where we have used the definition $\Pi_{\omega}=I-\frac{\omega\mathbf{1}^{T}}{m}$. Taking the norm $\|\cdot\|_{C}$ on both sides of~\eqref{1T1S2} yields
\begin{equation}
	\begin{aligned}
&\|\boldsymbol{s}_{t+1}-\omega\bar{s}_{t+1}\|_{C}^2\\
&\leq \left(\|\bar{\mathbf{C}}\|_{C}\|\boldsymbol{s}_{t}-\omega\bar{s}_{t}\|_{C}+\lambda_{t}\|\Pi_{\omega}\|_{C}\|\nabla\boldsymbol{f}_{t}(\boldsymbol{\theta}_{t})\|_{C}\right)^2\\
&\quad+\|\Pi_{\omega}\boldsymbol{\zeta}_{C,t}\|_{C}^2+2\left\langle\bar{\mathbf{C}}(\boldsymbol{s}_{t}-\omega\bar{s}_{t})\right.\\
&\left.\quad+\lambda_{t}\Pi_{\omega}\nabla\boldsymbol{f}_{t}(\boldsymbol{\theta}_{t}),\Pi_{\omega}\boldsymbol{\zeta}_{C,t}\right\rangle_{C},\label{1T1S3}
\end{aligned}
\end{equation}
where $\langle\cdot,\cdot\rangle_{C}$ denotes the inner product induced~\footnote{Since one can verify $\|\boldsymbol{s}_{t}\|_{C}=\|\tilde{C}\boldsymbol{s}_{t}\|_{2}$ where $\tilde{C}$ is discussed in Sec.~\ref{supportlemma}, we have the norm $\|\cdot\|_{C}$ satisfying the Parallelogram law, implying that it has an associated inner product $\langle\cdot,\cdot\rangle_{C}$. The detailed discussions can be found in~\cite{Tailoring,spectral}.} by the norm $\|\cdot\|_{C}$.

By using the definition $\|\bar{\mathbf{C}}\|_{C}=1-\rho_{C}<1$ and the inequality $(a+b)^2\leq (1+\epsilon)a^2+(1+\epsilon^{-1})b^2$ for any scalars $a$, $b$, and $\epsilon>0$ (setting $\epsilon=\frac{1}{1-\rho_{C}}-1$, implying $1+\epsilon^{-1}=\frac{1}{\rho_{C}}$), we obtain the following relationship from~\eqref{1T1S3}:
\begin{equation}
	\begin{aligned}
&\|\boldsymbol{s}_{t+1}-\omega\bar{s}_{t+1}\|_{C}^2\\
&\leq (1-\rho_{C})\|\boldsymbol{s}_{t}-\omega\bar{s}_{t}\|_{C}^2+\frac{\lambda_{t}^2}{\rho_{C}}\|\Pi_{\omega}\|_{C}^2\|\nabla\boldsymbol{f}_{t}(\boldsymbol{\theta}_{t})\|_{C}^2\\
&\quad+\|\Pi_{\omega}\boldsymbol{\zeta}_{C,t}\|_{C}^2+2\left\langle\bar{\mathbf{C}}(\boldsymbol{s}_{t}-\omega\bar{s}_{t})\right.\\
&\left.\quad+\lambda_{t}\Pi_{\omega}\nabla\boldsymbol{f}_{t}(\boldsymbol{\theta}_{t}),\Pi_{\omega}\boldsymbol{\zeta}_{C,t}\right\rangle_{C}.\label{1T1S4}
\end{aligned}
\end{equation}

Given $\mathbb{E}[\boldsymbol{\zeta}_{C,t}]=0$ according to Assumption~\ref{a4}, we take the expectation on both sides of~\eqref{1T1S4} to obtain
\begin{equation}
\mathbb{E}\big[\|\boldsymbol{s}_{t+1}-\omega\bar{s}_{t+1}\|_{C}^2\big]\leq (1-\rho_{C})\mathbb{E}\big[\|\boldsymbol{s}_{t}-\omega\bar{s}_{t}\|_{C}^2\big]+\Phi_{s,t},\label{1T1S5}
\end{equation}
where the term $\Phi_{s,t}$ is given by
\begin{equation} \Phi_{s,t}=\frac{\lambda_{t}^2}{\rho_{C}}\|\Pi_{\omega}\|_{C}^2\mathbb{E}[\|\nabla\boldsymbol{f}_{t}(\boldsymbol{\theta}_{t})\|_{C}^2]+\|\Pi_{\omega}\|_{C}^2\mathbb{E}[\|\boldsymbol{\zeta}_{C,t}\|_{C}^2].\label{phis}
\end{equation} 

We proceed to characterize the $\Phi_{s,t}$ in~\eqref{1T1S5}. 

By using the definition $\nabla f_{i}(\theta_{i,t})=\mathbb{E}_{\xi_{i}\sim\mathcal{P}_{i}}[\nabla l(\theta_{i,t},\xi_{i})]$, Assumption~\ref{a2}(ii), and Assumption~\ref{a3}(ii), we have
\begin{equation}
\begin{aligned}
&\mathbb{E}\left[\|\nabla l(\theta_{i,t},\xi_{i})\|_{2}^2\right]\\
&=\mathbb{E}\left[\left\|\nabla l(\theta_{i,t},\xi_{i})-\mathbb{E}\left[\nabla l(\theta_{i,t},\xi_{i})\right]+\mathbb{E}\left[\nabla l(\theta_{i,t},\xi_{i})\right]\right\|_{2}^2\right]\\
&\leq 2\mathbb{E}\left[\|\nabla l(\theta_{i,t},\xi_{i})-\nabla f_{i}(\theta_{i,t})\|_{2}^2\right]+2\mathbb{E}\left[\|\nabla f_{i}(\theta_{i,t})\|_{2}^2\right]\\
&\leq 2(\kappa^2+D^2).\label{lgradient}
\end{aligned}
\end{equation}
Given $\mathbb{E}[\|\nabla \boldsymbol{f}_{t}(\boldsymbol{\theta}_{t})\|_{F}^2]\leq\sum_{i=1}^{m}\frac{1}{t+1}\sum_{k=0}^{t}\mathbb{E}[\|\nabla l(\theta_{i,t},\xi_{i,k})\|_{2}^2]$, we obtain
\begin{flalign}
\mathbb{E}\big[\|\nabla \boldsymbol{f}_{t}(\boldsymbol{\theta}_{t})\|_{C}^2\big]\leq 2m\delta_{C,F}^2(\kappa^2+D^2).\label{D14}
\end{flalign}

Substituting~\eqref{D14} into $\Phi_{s,t}$ in~\eqref{phis} and using $\mathbb{E}[\|\boldsymbol{\zeta}_{C,t}\|_{C}^2]\leq \delta_{C,F}^2\sum_{i,j}(C_{ij}\sigma_{j,t,\zeta})^2$, we arrive at
\begin{equation}
\begin{aligned}
\Phi_{s,t}&=\frac{2m\delta_{C,F}^2\|\Pi_{\omega}\|_{C}^2\|(\kappa^2+D^2)\lambda_{t}^2}{\rho_{C}}\\
&\quad+\delta_{C,F}^2\|\Pi_{\omega}\|_{C}^2\sum_{i,j}(C_{ij}\sigma_{j,t,\zeta})^2. \label{1T1Sdelta}
\end{aligned}
\end{equation}

Now, we iterate~\eqref{1T1S5} from $0$ to $t$ to obtain
\begin{equation}
	\begin{aligned}
&\mathbb{E}\big[\|\boldsymbol{s}_{t+1}-\omega\bar{s}_{t+1}\|_{C}^2\big]\leq (1-\rho_{C})^{t+1}\mathbb{E}\big[\|\boldsymbol{s}_{0}-\omega\bar{s}_{0}\|_{C}^2\big]\\
&\quad+\sum_{p=0}^{t-1}(1-\rho_{C})^{t-p}\Phi_{s,p}+\Phi_{s,t}.\label{1T1S6}
	\end{aligned}
\end{equation}

When $t=0$, using the definitions of $\sigma_{\zeta}^{+}$ and $\lambda_{t}$, we have
\begin{equation}
\mathbb{E}\big[\|\boldsymbol{s}_{1}-\omega\bar{s}_{1}\|_{C}^2\big]\leq (1-\rho_{C})\mathbb{E}\big[\|\boldsymbol{s}_{0}-\omega\bar{s}_{0}\|_{C}^2\big]+\tau_{s1}+\tau_{s2}.\label{1T1S8}
\end{equation}
where the constants $\tau_{s1}$ and $\tau_{s2}$ are given in the statement of Lemma~\ref{l7}.

When $t>0$, we estimate each item on the right hand side of~\eqref{1T1S6}. 

Lemma~\ref{l6} and $(t+1)^{-2}< t^{-2}$ for all $t>0$ imply 
\begin{equation}
(1-\rho_{C})^{t+1}\leq \frac{4(t+1)^{-2}}{(e\ln(1-\rho_{C}))^2}< \frac{4t^{-2}}{(e\ln(1-\rho_{C}))^2}.\label{1T1S6i}
\end{equation}

Then we estimate the second item on the right hand side of~\eqref{1T1S6}. Combining~\eqref{1T1Sdelta} with the definitions 
$\sigma_{\zeta}^{+}\!=\!\max_{i\in[m]}\{\sigma_{i,0,\zeta}\}$,
$\varsigma_{\zeta}\!=\!\min_{i\in[m]}\{\varsigma_{i,\zeta}\}$, and $\lambda_{t}=\frac{\lambda_{0}}{(t+1)^{v}}$, we have
\begin{equation}
\begin{aligned}
&\sum_{p=0}^{t-1}(1-\rho_{C})^{t-p}\Phi_{s,p}\\
&=\tau_{s1}\sum_{p=0}^{t-1}\frac{(1-\rho_{C})^{t-p}}{(p+1)^{2v}}+\tau_{s2}\sum_{p=0}^{t-1}\frac{(1-\rho_{C})^{t-p}}{(p+1)^{2\varsigma_{\zeta}}}.\label{1T1S7}
\end{aligned}
\end{equation}
where the constants $\tau_{s1}$ and $\tau_{s2}$ are given in the statement of Lemma~\ref{l7}.

We further analyze each item on the right hand side of~\eqref{1T1S7}:

1) For scalars $a,b,c,d\!>\!0$ satisfying $\frac{c}{a}\!>\!\frac{d}{b}$, the relationship $\frac{d}{b}\!<\!\frac{c+d}{a+b}\!<\!\frac{c}{a}$ always holds. This inequality implies $\frac{1}{(t-p)^2}\!<\!(\frac{p+1}{t})^2$ for all $p\in[0,t)$. Using this inequality, Lemma~\ref{l6}, and the relation $(\frac{p+1}{t})^2\!<\!(\frac{p+1}{t})^{2v}$ (where $\frac{p+1}{t}\!\in\!(0,1]$), we obtain
\begin{equation}
\begin{aligned}
\frac{(1-\frac{\rho_{C}}{2})^{t-p}}{(p+1)^{2v}}&\leq \frac{4}{(e\ln(1-\frac{\rho_{C}}{2}))^2(t-p)^{2}}\frac{1}{(p+1)^{2v}}\\
&<\frac{4}{(e\ln(1-\frac{\rho_{C}}{2}))^2}\Big(\frac{p+1}{t}\Big)^{2v}\frac{1}{(p+1)^{2v}}\\
&=\frac{4t^{-2v}}{(e\ln(1-\frac{\rho_{C}}{2}))^2}.\label{1T1S6ii}
\end{aligned}
\end{equation}
By applying inequality~\eqref{1T1S6ii}, $1-\rho_{C}\leq (1-\frac{\rho_{C}}{2})^2$, and $\sum_{p=0}^{t-1}(1-\frac{\rho_{C}}{2})^{t-p}<\frac{1-(1-\frac{\rho_{C}}{2})^{t}}{1-(1-\frac{\rho_{C}}{2})}\leq \frac{2}{\rho_{C}}$ to the first term on the right hand side of~\eqref{1T1S7}, we obtain
\begin{equation}
	\begin{aligned}
\sum_{p=0}^{t-1}\frac{(1-\rho_{C})^{t-p}}{(p+1)^{2v}}&< \left(\sum_{p=0}^{t-1}\left(1-\frac{\rho_{C}}{2}\right)^{t-p}\right)\frac{4t^{-2v}}{(e\ln(1-\frac{\rho_{C}}{2}))^2}\\
&\leq \frac{8t^{-2v}}{\rho_{C}(e\ln(1-\frac{\rho_{C}}{2}))^2}.\label{1T1S6iii}
\end{aligned}
\end{equation}

2) Using an argument similar to the derivation of~\eqref{1T1S6iii}, the second term on the right hand side of~\eqref{1T1S7} satisfies
\begin{equation}
	\vspace{-0.2em}
\sum_{p=0}^{t-1}\frac{(1-\rho_{C})^{t-p}}{(p+1)^{2\varsigma_{\zeta}}}< \frac{8t^{-2\varsigma_{\zeta}}}{\rho_{C}(e\ln(1-\frac{\rho_{C}}{2}))^2}.\label{1T1S6iv}
	\vspace{-0.2em}
\end{equation}

According to the definition of $\Phi_{s,t}$ in~\eqref{1T1Sdelta}, we obtain that the third term on the right hand side of~\eqref{1T1S6} satisfies
\begin{equation}
	\vspace{-0.2em}
\Phi_{s,t}\leq \tau_{s1}t^{-2v}+\tau_{s2}t^{-2\varsigma_{\zeta}}.\label{third}
	\vspace{-0.2em}
\end{equation}

By substituting~\eqref{1T1S6iii} and~\eqref{1T1S6iv} into~\eqref{1T1S7}, and then substituting~\eqref{1T1S8}-\eqref{1T1S7} and~\eqref{third} into~\eqref{1T1S6}, we arrive at~\eqref{l7result} in Lemma~\ref{l7}.
\subsection{Proof of Lemma~\ref{l8}}
Left multiplying both sides of~\eqref{dcompacttheta} by $\frac{u^{T}}{m}$ and using the relations $u^{T}U^{-1}=\mathbf{1}^{T}$ and $u^{T}\mathbf{R}=u^{T}$, we obtain
\begin{equation}
	\vspace{-0.3em}
\begin{aligned}
\bar{\theta}_{t+1}&=\frac{u^{T}}{m}\big(\mathbf{R}\boldsymbol{\theta}_{t}+\boldsymbol{\vartheta}_{R,t}-(U^{-1}+Z_{t}^{-1}-U^{-1})(\boldsymbol{s}_{t+1}-\boldsymbol{s}_{t})\big)\\
&=\frac{u^{T}\boldsymbol{\theta}_{t}}{m}+\frac{u^{T}\boldsymbol{\vartheta}_{R,t}}{m}-\frac{\mathbf{1}^{T}(\boldsymbol{s}_{t+1}-\boldsymbol{s}_{t})}{m}\\
&\quad-\frac{u^{T}(Z_{t}^{-1}-U^{-1})}{m}(\boldsymbol{s}_{t+1}-\boldsymbol{s}_{t}).\label{1T2S1}
\end{aligned}
\vspace{-0.3em}
\end{equation}

Using~\eqref{dcompacttheta},~\eqref{1T2S1}, and $\bar{\mathbf{R}}\mathbf{1}=(I+R-\frac{\mathbf{1}u^{T}}{m})\mathbf{1}=\mathbf{0}$ yields
\begin{equation}
\begin{aligned}
&\boldsymbol{\theta}_{t+1}-\mathbf{1}\bar{\theta}_{t+1}\\
&=\bar{\mathbf{R}}(\boldsymbol{\theta}_{t}\!-\!\mathbf{1}\bar{\theta}_{t})+(I-\frac{\mathbf{1}u^{T}}{m})\boldsymbol{\vartheta}_{R,t}-(U^{-1}\!-\!\frac{\mathbf{1}\mathbf{1}^{T}}{m})(\boldsymbol{s}_{t+1}-\boldsymbol{s}_{t})\\
&\quad\quad -\Big(Z_{t}^{-1}-U^{-1}-\frac{\mathbf{1}u^{T}(Z_{t}^{-1}-U^{-1})}{m}\Big)(\boldsymbol{s}_{t+1}-\boldsymbol{s}_{t})\\
&=\bar{\mathbf{R}}(\boldsymbol{\theta}_{t}-\mathbf{1}\bar{\theta}_{t})+\Pi_{u}\boldsymbol{\vartheta}_{R,t}-\Pi_{U}(\boldsymbol{s}_{t+1}-\boldsymbol{s}_{t})\!-\!\Pi_{U}^{e}(\boldsymbol{s}_{t+1}-\boldsymbol{s}_{t}),\label{1T2S2}
\end{aligned}
\end{equation}
where $\Pi_{u}$, $\Pi_{U}$, and $\Pi_{U}^{e}$ are defined in Sec.~\ref{learningaccuracy}.

We proceed to estimate the term $\boldsymbol{s}_{t+1}-\boldsymbol{s}_{t}$ in~\eqref{1T2S2}. Based on dynamics~\eqref{dcompacts} and the relation $Cv=\mathbf{0}$, we have
\begin{equation}
	\vspace{-0.3em}
\begin{aligned}
\boldsymbol{s}_{t+1}-\boldsymbol{s}_{t}&=\mathbf{C}\boldsymbol{s}_{t}+\boldsymbol{\zeta}_{C,t}+\lambda_{t}\nabla \boldsymbol{f}_{t}(\boldsymbol{\theta}_{t})-\boldsymbol{s}_{t}\\
&=C(\boldsymbol{s}_{t}-\omega\bar{s}_{t})+\boldsymbol{\zeta}_{C,t}+\lambda_{t}\nabla \boldsymbol{f}_{t}(\boldsymbol{\theta}_{t}).\label{1T2S3}
\end{aligned}
\vspace{-0.3em}
\end{equation}

Substituting~\eqref{1T2S3} into~\eqref{1T2S2} and using $\mathbf{1}^{T}C=\mathbf{0}^{T}$ lead to
\begin{equation}
\begin{aligned}
&\boldsymbol{\theta}_{t+1}-\mathbf{1}\bar{\theta}_{t+1}=\bar{\mathbf{R}}(\boldsymbol{\theta}_{t}-\mathbf{1}\bar{\theta}_{t})+\Pi_{u}\boldsymbol{\vartheta}_{R,t}\\
&\quad-\Pi_{U}C(\boldsymbol{s}_{t}-\omega\bar{s}_{t})-\Pi_{U}\boldsymbol{\zeta}_{C,t}-\Pi_{U}\lambda_{t}\nabla \boldsymbol{f}_{t}(\boldsymbol{\theta}_{t})\\
&\quad-\Pi_{U}^{e}C(\boldsymbol{s}_{t}-\omega\bar{s}_{t})-\Pi_{U}^{e}\boldsymbol{\zeta}_{C,t}-\Pi_{U}^{e}\lambda_{t}\nabla \boldsymbol{f}_{t}(\boldsymbol{\theta}_{t}).\label{1T2S4}
\end{aligned}
\end{equation}
Taking the norm $\|\cdot\|_{R}$ on both sides of~\eqref{1T2S4} yields
\begin{equation}
\begin{aligned}
&\|\boldsymbol{\theta}_{t+1}-\mathbf{1}\bar{\theta}_{t+1}\|_{R}^2\\
&\leq\left(\|\bar{\mathbf{R}}\|_{R}\|\boldsymbol{\theta}_{t}-\mathbf{1}\bar{\theta}_{t}\|_{R}+(\|\Pi_{U}C\|_{R}+\|\Pi_{U}^{e}C\|_{R})\|\boldsymbol{s}_{t}-\omega\bar{s}_{t}\|_{R}\right.\\
&\left.\quad+\lambda_{t}(\|\Pi_{U}\|_{R}+\|\Pi_{U}^{e}\|_{R})\|\nabla\boldsymbol{f}_{t}(\boldsymbol{\theta}_{t})\|_{R}\right)^2\\
&\quad+\big\|\Pi_{u}\boldsymbol{\vartheta}_{R,t}-(\Pi_{U}+\Pi_{U}^{e})\boldsymbol{\zeta}_{C,t}\big\|_{R}^2\\
&\quad+2\big\langle \bar{\mathbf{R}}(\boldsymbol{\theta}_{t}-\mathbf{1}\bar{\theta}_{t})-(\Pi_{U}+\Pi_{U}^{e})C(\boldsymbol{s}_{t}-\omega\bar{s}_{t})\\
&\quad-\lambda_{t}(\Pi_{U}+\Pi_{U}^{e})\nabla\boldsymbol{f}_{t}(\boldsymbol{\theta}_{t}),\Pi_{u}\boldsymbol{\vartheta}_{R,t}-(\Pi_{U}+\Pi_{U}^{e})\boldsymbol{\zeta}_{C,t}\big\rangle_{R},\nonumber
\end{aligned}
\end{equation}
where $\langle\cdot,\cdot\rangle_{R}$ denotes the inner product induced~\footnote{Since one can verify $\|\boldsymbol{\theta}_{t}\|_{R}=\|\tilde{R}\boldsymbol{\theta}_{t}\|_{2}$ where $\tilde{R}$ is discussed in Sec.~\ref{supportlemma}, we have the norm $\|\cdot\|_{R}$ satisfying the Parallelogram law, implying that it has an associated inner product $\langle\cdot,\cdot\rangle_{R}$. The detailed discussions can be found in~\cite{Tailoring,spectral}.} by the norm $\|\cdot\|_{R}$.

Using the definition $\|\bar{\mathbf{R}}\|_{R}=1-\rho_{R}$ and the relation $(a+b)^2\leq (1+\epsilon)a^2+(1+\epsilon^{-1})b^2$ for any scalars $a$, $b$ and $\epsilon>0$ (setting $\epsilon=\frac{1}{1-\rho_{R}}-1$, implying $1+\epsilon^{-1}=\frac{1}{\rho_{R}}$), we have
\begin{equation}
\begin{aligned}
&\|\boldsymbol{\theta}_{t+1}-\mathbf{1}\bar{\theta}_{t+1}\|_{R}^2\\
&\leq(1-\rho_{R})\|\boldsymbol{\theta}_{t}-\mathbf{1}\bar{\theta}_{t}\|_{R}^2\\
&\quad+\frac{4\|C\|_{R}^2\big(\|\Pi_{U}\|_{R}^2+\|\Pi_{U}^{e}\|_{R}^2\big)}{\rho_{R}}\|\boldsymbol{s}_{t}-\omega\bar{s}_{t}\|_{R}^2\\
&\quad+\frac{4\lambda_{t}^2\big(\|\Pi_{U}\|_{R}^2+\|\Pi_{U}^{e}\|_{R}^2\big)}{\rho_{R}}\|\nabla \boldsymbol{f}_{t}(\boldsymbol{\theta}_{t})\|_{R}^2\\
&\quad+\big\|\Pi_{u}\boldsymbol{\vartheta}_{R,t}-(\Pi_{U}+\Pi_{U}^{e})\boldsymbol{\zeta}_{C,t}\big\|_{R}^2\\
&\quad+2\big\langle \bar{\mathbf{R}}(\boldsymbol{\theta}_{t}-\mathbf{1}\bar{\theta}_{t})-(\Pi_{U}+\Pi_{U}^{e})C(\boldsymbol{s}_{t}-\omega\bar{s}_{t})\\
&\quad-\lambda_{t}(\Pi_{U}+\Pi_{U}^{e})\nabla\boldsymbol{f}_{t}(\boldsymbol{\theta}_{t}),\Pi_{u}\boldsymbol{\vartheta}_{R,t}-(\Pi_{U}+\Pi_{U}^{e})\boldsymbol{\zeta}_{C,t}\big\rangle_{R}.\label{1T2S6}
\end{aligned}
\end{equation}

By taking the expectation on both sides of~\eqref{1T2S6} and using inequality~\eqref{D14}, Lemma~\ref{l5}, and~Assumption~\ref{a4}, one has
\begin{equation}
\mathbb{E}\big[\|\boldsymbol{\theta}_{t+1}-\mathbf{1}\bar{\theta}_{t+1}\|_{R}^2\big]\leq(1-\rho_{R})\mathbb{E}\big[\|\boldsymbol{\theta}_{t}-\mathbf{1}\bar{\theta}_{t}\|_{R}^2\big]+\Phi_{\theta,t},\label{1T2S7}
\end{equation}
where
\begin{flalign}
\Phi_{\theta,t}&=\frac{4\delta_{R,C}^2\|C\|_{R}^2\big(\|\Pi_{U}\|_{R}^2+\|\Pi_{U}^{e}\|_{R}^2\big)}{\rho_{R}}\mathbb{E}\big[\|\boldsymbol{s}_{t}-\omega\bar{s}_{t}\|_{C}^2\big]\nonumber\\
&\quad+\frac{8m\delta_{R,F}^2(\kappa^2+D^2)\big(\|\Pi_{U}\|_{R}^2+\|\Pi_{U}^{e}\|_{R}^2\big)}{\rho_{R}}\lambda_{t}^2\nonumber\\
&\quad+2\|\Pi_{u}\|_{R}^2\delta_{R,F}^2\sum_{i,j}(R_{ij}\sigma_{j,t,\vartheta})^2\nonumber\\
&\quad+2\|\Pi_{U}+\Pi_{U}^{e}\|_{R}^2\delta_{R,F}^2\sum_{i,j}(C_{ij}\sigma_{j,t,\zeta})^2.\label{1T2Sdelta}
\end{flalign}

Iterating~\eqref{1T2S7} from $0$ to $t$, we obtain
\begin{equation}
\begin{aligned}
\mathbb{E}\big[\|\boldsymbol{\theta}_{t+1}-\mathbf{1}\bar{\theta}_{t+1}\|_{R}^2\big]&\leq (1-\rho_{R})^{t+1}\mathbb{E}\big[\|\boldsymbol{\theta}_{0}-\mathbf{1}\bar{\theta}_{0}\|_{R}^2\big]\\
&\quad+\sum_{p=0}^{t-1}(1-\rho_{R})^{t-p}\Phi_{\theta,p}+\Phi_{\theta,t}.\label{1T2S8}
\end{aligned}
\end{equation}

When $t=0$, using the definitions of $\lambda_{t}$ and $\sigma_{\vartheta}^{+}$ yields
\begin{equation}
\begin{aligned}
&\mathbb{E}\big[\|\boldsymbol{\theta}_{1}-\mathbf{1}\bar{\theta}_{1}\|_{R}^2\big]\leq(1-\rho_{R})\mathbb{E}\big[\|\boldsymbol{\theta}_{0}-\mathbf{1}\bar{\theta}_{0}\|_{R}^2\big]\\
&\quad+\tau_{\theta1}\mathbb{E}\big[\|\boldsymbol{s}_{0}-\omega\bar{s}_{0}\|_{C}^2\big]+\tau_{\theta2}+\tau_{\theta3}+\tau_{\theta4},\label{1T2S10}
\end{aligned}
\end{equation}
where the constants $\tau_{\theta1}$ to $\tau_{\theta4}$ are given in the statement of Lemma~\ref{l8}.

When $t>0$, we analyze each item on the right hand side of~\eqref{1T2S8}. Using an argument similar to the derivation of~\eqref{1T1S6i}, the first term on the right hand side of~\eqref{1T2S8} satisfies
\begin{equation}
(1-\rho_{R})^{t+1}\mathbb{E}\big[\|\boldsymbol{\theta}_{0}-\mathbf{1}\bar{\theta}_{0}\|_{R}^2\big]\leq \frac{4\mathbb{E}\big[\|\boldsymbol{\theta}_{0}-\mathbf{1}\bar{\theta}_{0}\|_{R}^2\big](t+1)^{-2}}{(e\ln(1-\rho_{R}))^2}.\label{1T2S9i}
\end{equation}

Next we characterize the second term on the right hand side of~\eqref{1T2S8}. Using~\eqref{1T2Sdelta} and the definitions of $\lambda_{t}$, $\sigma_{\vartheta}^{+}$, and $\varsigma_{\vartheta}$ yields
\begin{flalign}
&\sum_{p=0}^{t-1}(1-\rho_{R})^{t-p}\Phi_{\theta,p}=\tau_{\theta1}\sum_{p=0}^{t-1}(1-\rho_{R})^{t-p}\mathbb{E}\big[\|\boldsymbol{s}_{p}-\omega\bar{s}_{p}\|_{C}^2\big]\nonumber\\
&\quad+\sum_{p=0}^{t-1}(1-\rho_{R})^{t-p}\Big(\frac{\tau_{\theta2}}{(p+1)^{2v}}+\frac{\tau_{\theta3}}{(p+1)^{2\varsigma_{\vartheta}}}+\frac{\tau_{\theta4}}{(p+1)^{2\varsigma_{\zeta}}}\Big).\label{1T2S9}
\end{flalign}

We estimate an upper bound on the right hand side of~\eqref{1T2S9}:

According to~\eqref{l7result} in Lemma~\ref{l7}, we obtain
\begin{flalign}
&\sum_{p=0}^{t-1}(1-\rho_{R})^{t-p}\mathbb{E}[\|\boldsymbol{s}_{p}-\omega\bar{s}_{p}\|_{C}^2]\leq(1-\rho_{R})^{t}\mathbb{E}[\|\boldsymbol{s}_{0}-\omega\bar{s}_{0}\|_{C}^2]\nonumber\\
&\quad+\sum_{p=1}^{t-1}(1-\rho_{R})^{t-p}\left(c_{\boldsymbol{s}1}p^{-2}+c_{\boldsymbol{s}2}p^{-2v}+c_{\boldsymbol{s}3}p^{-2\varsigma_{\zeta}}\right).\label{1T2S9ii1}
\end{flalign}
By using~\eqref{1T2S9i} and an argument similar to the derivation of~\eqref{1T1S6iii}, we obtain
\begin{flalign}
&\sum_{p=0}^{t-1}(1-\rho_{R})^{t-p}\mathbb{E}\left[\|\boldsymbol{s}_{p}-\omega\bar{s}_{p}\|_{C}^2\right]\leq \frac{4\mathbb{E}\big[\|\boldsymbol{s}_{0}-\omega\bar{s}_{0}\|_{C}^2\big]t^{-2}}{(e\ln(1-\rho_{R}))^2}\nonumber\\
&\quad+\frac{8\left(c_{\boldsymbol{s}1}t^{-2}+c_{\boldsymbol{s}2}t^{-2v}+c_{\boldsymbol{s}3}t^{-2\varsigma_{\zeta}}\right)}{\rho_{R}(e\ln(1-\frac{\rho_{R}}{2}))^2}.\label{1T2S9ii2}
\end{flalign}
Similarly, using again an argument similar to the derivation of~\eqref{1T1S6iii}, and then substituting~\eqref{1T2S9ii2} into~\eqref{1T2S9}, we arrive at
\begin{flalign}
&\sum_{p=0}^{t-1}(1-\rho_{R})^{t-p}\Phi_{\theta,p}\leq\frac{4\mathbb{E}\big[\|\boldsymbol{s}_{0}-\omega\bar{s}_{0}\|_{C}^2\big]t^{-2}}{(e\ln(1-\rho_{R}))^2}\nonumber\\
&\quad+\frac{8\tau_{\theta1}\left(c_{\boldsymbol{s}1}t^{-2}+c_{\boldsymbol{s}2}t^{-2v}+c_{\boldsymbol{s}3}t^{-2\varsigma_{\zeta}}\right)}{\rho_{R}(e\ln(1-\frac{\rho_{R}}{2}))^2}\nonumber\\
&\quad+\frac{8\left(\tau_{\theta2}t^{-2v}+\tau_{\theta3}t^{-2\varsigma_{\vartheta}}+\tau_{\theta4}t^{-2\varsigma_{\zeta}}\right)}{\rho_{R}(e\ln(1-\frac{\rho_{R}}{2}))^2}.\label{1T2S12}
\end{flalign}

According to the definition of $\Phi_{\theta,t}$ in~\eqref{1T2Sdelta}, we obtain that the third term on the right hand side of~\eqref{1T2S8} satisfies
\begin{equation}
\begin{aligned}
\Phi_{\theta,t}&\leq \tau_{\theta1}\left(c_{\boldsymbol{s}1}t^{-2}+c_{\boldsymbol{s}2}t^{-2v}+c_{\boldsymbol{s}3}t^{-2\varsigma_{\zeta}}\right)\\
&+\tau_{\theta2}t^{-2v}+\tau_{\theta3}t^{-2\varsigma_{\vartheta}}+\tau_{\theta4}t^{-2\varsigma_{\zeta}}.\label{third2}
\end{aligned}
\end{equation}

Substituting~\eqref{1T2S10},~\eqref{1T2S9i},~\eqref{1T2S12}, and~\eqref{third2} into~\eqref{1T2S8}, we arrive at~\eqref{l8result} in Lemma~\ref{l8}.

\subsection{Proof of Theorem~\ref{t1}}
By substituting~\eqref{1s} into~\eqref{1T2S1}, we have
\begin{flalign}
&\bar{\theta}_{t+1}-\theta_{t}^*
\!=\!\bar{\theta}_{t}-\lambda_{t}\nabla\bar{f}(\mathbf{1}\bar{\theta}_{t})-\theta_{t}^*+\lambda_{t}\nabla\bar{f}(\mathbf{1}\bar{\theta}_{t})-\lambda_{t}\nabla\bar{f}_{t}(\boldsymbol{\theta}_{t})\nonumber\\
&\quad+\bar{\vartheta}_{R,t}-\bar{\zeta}_{C,t}-\frac{u^{T}(Z_{t}^{-1}-U^{-1})}{m}(\boldsymbol{s}_{t+1}-\boldsymbol{s}_{t}).\label{1T3S1}
\end{flalign}
By taking the norm $\|\cdot\|_{2}$ and then expectation on both sides of \eqref{1T3S1}, we obtain the following inequality by using $\mathbb{E}[\bar{\vartheta}_{R,t}]=\mathbb{E}[\bar{\vartheta}_{C,t}]=0$ from Assumption~\ref{a4}:
\begin{equation}
	\begin{aligned}
		&\mathbb{E}[\|\bar{\theta}_{t+1}-\theta_{t}^*\|_{2}^{2}]\leq \mathbb{E}\Big[\Big\|\bar{\theta}_{t}-\lambda_{t}\nabla\bar{f}(\mathbf{1}\bar{\theta}_{t})-\theta_{t}^*+\lambda_{t}\nabla\bar{f}(\mathbf{1}\bar{\theta}_{t})\\
		&\quad-\lambda_{t}\nabla\bar{f}_{t}(\boldsymbol{\theta}_{t})-\frac{u^{T}(Z_{t}^{-1}-U^{-1})}{m}(\boldsymbol{s}_{t+1}-\boldsymbol{s}_{t})\Big\|_2^2\Big]\nonumber\\
		&\quad+\mathbb{E}[\|\bar{\vartheta}_{R,t}-\bar{\zeta}_{C,t}\|_{2}^2].
	\end{aligned}
\end{equation}

Since the relation $(x+y)^2\leq a_{t}x^2+b_{t}y^2$ holds for all $a_{t}>1$ and $b_{t}>1$ satisfying $(a_{t}-1)(b_{t}-1)=1$, we set $a_{t}=1+\frac{\lambda_{t}\mu}{2}$ and $b_{t}=1+\frac{2}{\lambda_{t}\mu}$ to obtain
\begin{equation}
	\begin{aligned}
		&\mathbb{E}[\|\bar{\theta}_{t+1}-\theta_{t}^*\|_{2}^{2}]\leq \Big(1+\frac{\lambda_{t}\mu}{2}\Big)\mathbb{E}\big[\|\bar{\theta}_{t}-\lambda_{t}\nabla\bar{f}(\mathbf{1}\bar{\theta}_{t})-\theta_{t}^*\|_2^2\big]\\
		&\quad+\Big(1+\frac{2}{\lambda_{t}\mu}\Big)\mathbb{E}\Big[\Big\|\lambda_{t}\nabla\bar{f}(\mathbf{1}\bar{\theta}_{t})-\lambda_{t}\nabla\bar{f}_{t}(\boldsymbol{\theta}_{t})\\
		&\quad-\frac{u^{T}(Z_{t}^{-1}-U^{-1})}{m}(\boldsymbol{s}_{t+1}-\boldsymbol{s}_{t})\Big\|_{2}^2\Big]+\mathbb{E}\big[\|\bar{\vartheta}_{R,t}-\bar{\zeta}_{C,t}\|_{2}^2\big].\label{1T3S3}
	\end{aligned}
\end{equation}

We analyze the first item on the right hand side of~\eqref{1T3S3}. The definition $\nabla\bar{f}(\mathbf{1}\bar{\theta}_{t})=\frac{\mathbf{1}^{T}\nabla\boldsymbol{f}(\mathbf{1}\bar{\theta}_{t})}{m}$ implies $\nabla\bar{f}(\mathbf{1}\bar{\theta}_{t})=\nabla F(\bar{\theta}_{t})$. Combing this relation and Assumption~\ref{a2}(ii)-(iii) yields
\begin{equation}
\begin{aligned}
&\|\bar{\theta}_{t}-\lambda_{t}\nabla\bar{f}(\mathbf{1}\bar{\theta}_{t})-\theta^*_t\|_{2}^2\leq (1-\lambda_{t}\mu)\|\bar{\theta}_{t}-\theta^*_t\|_{2}^2\\
&\quad-2\lambda_t(F(\bar{\theta}_{t})-F(\theta^*_{t}))+\lambda_t^2D^2.\label{1T3S4}
\end{aligned}
\end{equation}
Using the mean value theorem and~\eqref{approximateresult2} in Lemma~\ref{l9}, one has
\begin{equation}
	\begin{aligned}
		&\mathbb{E}\big[F(\bar{\theta}_{t})-F(\theta^*_{t})\big]\geq \mathbb{E}\big[F(\theta^*)-F(\theta^*_{t})\big]\\
		&=\mathbb{E}\big[\big\langle-\nabla F(\chi), \theta^*-\theta^*_{t}\big\rangle_{2}\big]\geq -\frac{2D\kappa}{\mu\sqrt{t+1}},\label{1aaT3S4}
	\end{aligned}
\end{equation}
where $\chi$ is given by $\chi=a\theta^*+(1-a)\theta^*_{t}$ for some $a\in(0,1).$

We proceed to estimate the second term on the right hand side of~\eqref{1T3S3}. Using Assumption~\ref{a3}(ii) and the definitions of $\nabla\bar{f}_{t}(\boldsymbol{\theta}_{t})$, $\nabla\bar{f}(\boldsymbol{\theta}_{t})$,~and $\nabla f_{i}(\theta_{i,t})$, we have
\begin{equation}
	\begin{aligned}
		&\mathbb{E}\big[\big\|\nabla\bar{f}_{t}(\boldsymbol{\theta}_{t})-\nabla\bar{f}(\boldsymbol{\theta}_{t})\big\|_2^2\big]\\
		&\!\leq\!\frac{1}{m}\sum_{i=1}^{m}\mathbb{E}\Big[\Big\|\frac{1}{t+1}\sum_{k=0}^{t}\nabla l(\theta_{i,k},\xi_{i,k})-\nabla f_{i}(\theta_{i,t})\Big\|_{2}^2\Big]\\
		&\!\leq\!\frac{1}{m(t+1)^2}\sum_{i=1}^{m}\sum_{k=0}^{t}\mathbb{E}\big[\|\nabla l(\theta_{i,t},\xi_{i,k})-\nabla f_{i}(\theta_{i,t})\|_{2}^2\big]\!\leq\! \frac{\kappa^2}{t+1},\label{1C8}
	\end{aligned}
\end{equation}
which further implies the following inequality by using Assumption~\ref{a3}(iii) and Lemma~\ref{l5}:
\begin{flalign}
	&\mathbb{E}\big[\|\nabla\bar{f}(\mathbf{1}\bar{\theta}_{t})-\nabla\bar{f}_{t}(\boldsymbol{\theta}_{t})\|_2^2\big]\nonumber\\
	&\leq2\mathbb{E}\big[\|\nabla\bar{f}_{t}(\boldsymbol{\theta}_{t})-\nabla\bar{f}(\boldsymbol{\theta}_{t})\|_2^2\big]\!+\!2\mathbb{E}\big[\|\nabla\bar{f}(\boldsymbol{\theta}_{t})-\nabla\bar{f}(\mathbf{1}\bar{\theta}_{t})\|_2^2\big]\nonumber\\
	&\leq\frac{2\kappa^2}{t+1}+2\delta_{F,R}^2L^2\mathbb{E}\left[\|\boldsymbol{\theta}_{t}-\mathbf{1}\bar{\theta}_{t}\|_{R}^2\right].\label{1aaT3aaS4}
\end{flalign}

Substituting~\eqref{1aaT3S4} into~\eqref{1T3S4}, and then plugging~\eqref{1T3S4} and~\eqref{1aaT3aaS4} into~\eqref{1T3S3} yield
\begin{flalign}
	&\mathbb{E}\big[\|\bar{\theta}_{t+1}-\theta_{t}^*\|_{2}^{2}\big]\leq \Big(1-\frac{\lambda_{t}\mu}{2}\Big)\mathbb{E}\big[\|\bar{\theta}_{t}-\theta_{t}^*\|_{2}^2\big]\nonumber\\
	&\quad+2\Big(1+\frac{2}{\lambda_{t}\mu}\Big)\Bigg(\frac{2\lambda_t^2\kappa^2}{t+1}+2\lambda_t^2\delta_{F,R}^2L^2\mathbb{E}\left[\|\boldsymbol{\theta}_{t}-\mathbf{1}\bar{\theta}_{t}\|_{R}^2\right]\nonumber\\
	&\quad+\mathbb{E}\Big[\Big\|\frac{u^{T}(Z_{t}^{-1}\!-\!U^{-1})}{m}(\boldsymbol{s}_{t+1}\!-\!\boldsymbol{s}_{t})\Big\|_{2}^2\Big]\Bigg)\!+\!\mathbb{E}\big[\|\bar{\vartheta}_{R,t}\!-\!\bar{\zeta}_{C,t}\|_{2}^2\big]\nonumber\\	&\quad+\Big(1+\frac{\lambda_{0}\mu}{2}\Big)\Big(\frac{4D\lambda_t\kappa}{\mu\sqrt{t+1}}+\lambda_t^2D^2\Big),\label{1T3S5}
\end{flalign}
where we used $\lambda_{t}\leq \lambda_{0}$ and $\big(1+\frac{\lambda_{t}\mu}{2}\big)\big(1-\lambda_{t}\mu\big)\leq 1-\frac{\lambda_{t}\mu}{2}$.

Combining~\eqref{1T3S5} and the inequality 
$\|\bar{\theta}_{t+1}-\theta_{t+1}^*\|_{2}^2\leq (1+\frac{\lambda_{t}\mu}{4})\|\bar{\theta}_{t+1}-\theta_{t}^*\|_{2}^2 +(1+\frac{4}{\lambda_{t}\mu})\|\theta_{t+1}^{*}-\theta_{t}^*\|_{2}^2$, one obtains
\begin{flalign}
	&\mathbb{E}\big[\|\bar{\theta}_{t+1}-\theta_{t+1}^*\|_{2}^{2}\big]\leq \Big(1-\frac{\lambda_{t}\mu}{4}\Big)\mathbb{E}\big[\|\bar{\theta}_{t}-\theta_{t}^*\|_{2}^2\big]+\frac{\hat{c}_{\bar{\theta}0}}{\lambda_{t}}\Bigg(\frac{2\lambda_t^2\kappa^2}{t+1}\nonumber\\
	&\quad+\!2\lambda_t^2\delta_{F,R}^2L^2\mathbb{E}\left[\|\boldsymbol{\theta}_{t}\!-\!\mathbf{1}\bar{\theta}_{t}\|_{R}^2\right]\!+\!\frac{\|u\|_{2}^2c_{z}^2\gamma_{z}^{2t}}{ m^2}\mathbb{E}\Big[\|\boldsymbol{s}_{t+1}\!-\!\boldsymbol{s}_{t}\|_{F}^2\Big]\!\Bigg)\nonumber\\
	&\quad+\Big(1+\frac{\lambda_{t}\mu}{4}\Big)\mathbb{E}\big[\big\|\bar{\vartheta}_{R,t}\!-\!\bar{\zeta}_{C,t}\big\|_{2}^2\big]\!+\!\frac{\hat{c}_{\bar{\theta}0}\mu}{4}\Big(\frac{4D\lambda_t\kappa}{\mu\sqrt{t+1}}\!+\!\lambda_t^2D^2\Big)\nonumber\\
	&\quad+\Big(1+\frac{4}{\lambda_{t}\mu}\Big)\mathbb{E}\big[\|\theta_{t+1}^{*}\!-\!\theta_{t}^*\|_{2}^2\big],\label{1T3S6}
\end{flalign}
where we defined $ \hat{c}_{\bar{\theta}0}\triangleq\frac{\lambda^2_{0}\mu^2+6\lambda_{0}\mu+8}{2\mu}$ and used Lemma~\ref{l2}.

We further characterize the term $\|\boldsymbol{s}_{t+1}-\boldsymbol{s}_{t}\|_{F}^2$ in~\eqref{1T3S6} by using
~\eqref{D14},~\eqref{1T2S3} and Lemma~\ref{l5}:
\begin{equation}
\begin{aligned}
\|\boldsymbol{s}_{t+1}-\boldsymbol{s}_{t}\|_{F}^2&\leq 3\delta_{F,C}^2\|C\|_{C}^{2}\|\boldsymbol{s}_{t}-\omega\bar{s}_{t}\|_{C}^2\\
&\quad+3\|\boldsymbol{\zeta}_{C,t}\|_{F}^2+6m(\kappa^2+D^2)\lambda_{t}^2.\label{1T3S7}
\end{aligned}
\end{equation}
Plugging~\eqref{1T3S7} into~\eqref{1T3S6}, we arrive at
\begin{equation}
	\mathbb{E}\big[\|\bar{\theta}_{t+1}-\theta_{t+1}^*\|_{2}^{2}\big]\le \left(1-\frac{\lambda_{t}\mu}{4}\right)\mathbb{E}\big[\|\bar{\theta}_{t}-\theta_{t}^*\|_{2}^2\big]+\Phi_{\bar{\theta},t},\label{1T3S8}
\end{equation}
where $\Phi_{\bar{\theta},t}$ is given by
\begin{flalign}
	&\Phi_{\bar{\theta},t}=\hat{c}_{\bar{\theta}0}\Bigg(\frac{2\lambda_t\kappa^2}{t+1}+2\lambda_t\delta_{F,R}^2L^2\mathbb{E}\left[\|\boldsymbol{\theta}_{t}-\mathbf{1}\bar{\theta}_{t}\|_{R}^2\right]\nonumber\\
	&\quad+\frac{3\delta_{F,C}^2\|C\|_{C}^{2}\|u\|_{2}^2c_{z}^2}{ m^2}\frac{\gamma^{2t}_{z}}{\lambda_{t}}\mathbb{E}\left[\|\boldsymbol{s}_{t}-\omega\bar{s}_{t}\|_{C}^2\right]\nonumber\\
	&\quad+\frac{3\|u\|_{2}^2c_{z}^2}{ m^2}\frac{\gamma_{z}^{2t}}{\lambda_{t}}\mathbb{E}\left[\|\boldsymbol{\zeta}_{C,t}\|_{F}^2\right]+\frac{6(\kappa^2+D^2)\|u\|_{2}^2c_{z}^2}{m}\gamma_{z}^{2t}\lambda_{t}\Bigg)\nonumber\\
	&\quad +2\Big(1\!+\!\frac{\lambda_{0}\mu}{4}\Big)\mathbb{E}\big[\|\bar{\vartheta}_{R,t}\|_{2}^2\!+\!\|\bar{\zeta}_{C,t}\|_{2}^2\big]+\frac{\hat{c}_{\bar{\theta}0}\mu}{4}\Big(\frac{4D\kappa}{\mu}\frac{\lambda_t}{\sqrt{t+1}}\nonumber\\
	&\quad+D^2\lambda_t^2\Big)+\Big(1+\frac{4}{\lambda_{t}\mu}\Big)\mathbb{E}\big[\|\theta_{t+1}^{*}-\theta_{t}^*\|_{2}^2\big].\label{1T3Sdelta}
\end{flalign}

By iterating~\eqref{1T3S8} from $0$ to $t$, we arrive at
\begin{equation}
	\begin{aligned}
		&\mathbb{E}[\|\bar{\theta}_{t+1}-\theta_{t+1}^*\|_{2}^{2}]\leq \prod_{p=0}^{t}\left(1-\frac{\lambda_{p}\mu}{4}\right)\mathbb{E}[\|\bar{\theta}_{0}-\theta_{0}^*\|_{2}^{2}]\\
		&\quad+\sum_{p=1}^{t}\prod_{q=p}^{t}\left(1-\frac{\lambda_{q}\mu}{4}\right)\Phi_{\bar{\theta},p-1}+\Phi_{\bar{\theta},t}.\label{1T3S9}
	\end{aligned}
\end{equation}
Since $\ln(1-u)\leq -u$ holds for all $u\in(0,1)$, we always have
$\prod_{p=0}^{t}(1-\frac{\lambda_{p}\mu}{4})\leq e^{-\frac{\mu}{4}\sum_{p=0}^{t}\lambda_{p}}$.
Hence, inequality~\eqref{1T3S9} can be rewritten as follows:
\begin{equation}
	\begin{aligned}
		&\mathbb{E}[\|\bar{\theta}_{t+1}-\theta_{t+1}^*\|_{2}^{2}]\leq e^{-\frac{\mu}{4}\sum_{p=0}^{t}\lambda_{p}}\mathbb{E}\left[\|\bar{\theta}_{0}-\theta_{0}^*\|_{2}^{2}\right]\\
		&\quad+\sum_{p=1}^{t}\Phi_{\bar{\theta},p-1}e^{-\frac{\mu}{4}\sum_{q=p}^{t}\lambda_{q}}+\Phi_{\bar{\theta},t}.\label{1T3S11}
	\end{aligned}
\end{equation}

We estimate the first term on the right hand side of~\eqref{1T3S11}. Since $\frac{\lambda_{0}}{(p+1)^{v}}\geq \frac{\lambda_{0}}{(t+1)^{v}}$ holds for all $t\geq p$ and $(t+1)^{v}\leq 2^{v}t^{v}$ holds for all $t>0$, we have
\begin{equation}
	\sum_{p=0}^{t}\lambda_{p}=\sum_{p=0}^{t}\frac{\lambda_{0}}{(p+1)^{v}}\geq \frac{\lambda_{0}}{(t+1)^{v}}(t+1)\geq  \frac{\lambda_{0}}{2^{v}t^{v-1}},\label{jizhun}
\end{equation}
which implies $e^{\frac{\mu}{4}\sum_{p=0}^{t}\lambda_{p}}\geq e^{\frac{\mu}{4}\frac{\lambda_{0}}{2^{v}t^{v-1}}}.$ Using Taylor expansion $e^{x}=\sum_{n=0}^{\infty}\frac{x^{n}}{n!}$, we have that there must exist some $n_{0}\in{\mathbb{N}^{+}}$ such that $e^{x}\geq \frac{x^{n_{0}}}{n_{0}!}$ holds when $x$ is nonnegative. By setting $n_{0}\triangleq\lceil \frac{1}{1-v} \rceil$, we have $(1-v)n_{0}\geq 1$, which further implies 
\begin{flalign}
	e^{\frac{\mu}{4}\sum_{p=0}^{t}\lambda_{p}}\!\geq\!\frac{1}{n_{0}!}\Big(\frac{\mu\lambda_{0}}{4\times2^{v}}\Big)^{n_{0}}t^{(1-v)n_{0}}\!\geq\!\frac{\Big(\frac{\mu\lambda_{0}}{4\times2^{v}}\Big)^{\frac{1}{1-v}}t}{(\frac{1}{1-v}+1)!}.\label{1T0ii}
\end{flalign}
Substituting~\eqref{1T0ii} into the first term on the right hand side of~\eqref{1T3S11}, we arrive at
\begin{equation}
	e^{-\frac{\mu}{4}\sum_{p=0}^{t}\lambda_{p}}\mathbb{E}\big[\|\bar{\theta}_{0}-\theta_{0}^*\|_{2}^{2}\big]\leq c_{\bar{\theta}1}t^{-1},\label{1T0result}
\end{equation}
where the constant $c_{\bar{\theta}1}$ is given by
\begin{equation}
	c_{\bar{\theta}1}=(\frac{2-v}{1-v})!\big(\frac{\mu\lambda_{0}}{4\times2^{v}}\big)^{\frac{1}{v-1}}\mathbb{E}[\|\bar{\theta}_{0}-\theta_{0}^*\|_{2}^{2}].\label{cbart1}
	\vspace{-0.5em}
\end{equation}

We proceed to analyze the second and third terms on the right hand side of~\eqref{1T3S11}. We select a constant $\alpha\!\in\!(v,\frac{1+v}{2})$. Since $e^{-\frac{\mu}{4}\sum_{q=\lceil t-t^{\alpha}\rceil+1}^{t}\lambda_{q}}<1$ is valid and $e^{-\frac{\mu}{4}\sum_{q=p}^{t}\lambda_{q}}\leq e^{-\frac{\mu}{4}\sum_{q=\lceil t-t^\alpha\rceil}^{t}\lambda_{q}}$ holds for all $p\in[1,\lceil t-t^{\alpha}\rceil]$, we obtain
\begin{flalign}
	&\sum_{p=1}^{t}\Phi_{\bar{\theta},p-1}e^{-\frac{\mu}{4}\sum_{q=p}^{t}\lambda_{q}}\!+\!\Phi_{\bar{\theta},t}\!\leq\!\sum_{p=1}^{\lceil t-t^\alpha\rceil}\!\!\Phi_{\bar{\theta},p-1}e^{-\frac{\mu}{4}\sum_{q=\lceil t-t^\alpha\rceil}^{t}\lambda_{q}}\nonumber\\
	&\quad+\sum_{p=\lceil t-t^{\alpha}\rceil+1}^{t}\!\!\!\Phi_{\bar{\theta},p-1}e^{-\frac{\mu}{4}\sum_{q=p}^{t}\lambda_{q}}+\Phi_{\bar{\theta},t}\nonumber\\
	&\leq \sum_{p=0}^{\lfloor t-t^\alpha\rfloor}\Phi_{\bar{\theta},p}e^{-\frac{\mu}{4}\sum_{q=\lceil t-t^{\alpha}\rceil}^{t}\lambda_{q}}+\sum_{p=\lceil t-t^{\alpha}\rceil}^{t}\Phi_{\bar{\theta},p}. \label{1T3S12}
\end{flalign}

We now analyze the first term on the right hand side of~\eqref{1T3S12}. To this end, we first characterize the term $e^{-\frac{\mu}{4}\sum_{q=\lceil t-t^{\alpha}\rceil}^{t}\lambda_{q}}$. Given $\frac{1}{(q+1)^{v}}\geq\frac{1}{(t+1)^{v}}$ for all $q\in[\lceil t-t^{\alpha} \rceil,t]$, we have
\begin{flalign}
	\sum_{q=\lceil t-t^{\alpha} \rceil}^{t}\lambda_{q}&=\!\!\sum_{q=\lceil t-t^{\alpha} \rceil}^{t}\frac{\lambda_{0}}{(q+1)^{v}}\geq \frac{\lambda_{0}}{(t+1)^{v}}(t-\lceil t-t^{\alpha} \rceil+1),\nonumber\\
	&\geq \frac{\lambda_{0}t^{\alpha}}{(t+1)^{v}}\geq \frac{\lambda_{0}t^{\alpha-v}}{2^{v}},\label{aiai}
\end{flalign}
where we have used $\lceil t-t^{\alpha} \rceil\leq t-t^{\alpha}+1$ and $(t+1)^{v}\leq 2^{v}t^{v}$.

Inequality~\eqref{aiai} implies $e^{\frac{\mu}{4}\sum_{q=\lceil t-t^{\alpha} \rceil}^{t}\lambda_{q}}\geq e^{\frac{\mu}{4}\frac{\lambda_{0}t^{\alpha-v}}{2^{v}}}.$
Using an argument similar to the derivation of~\eqref{1T0ii}, we set $n_{0}=\lceil \frac{1}{\alpha-v} \rceil$ (i.e., $(\alpha-v)n_{0}\geq 1$) for the Taylor expansion to obtain
\begin{equation}
	\begin{aligned}
		e^{\frac{\mu}{4}\sum_{q=\lceil t-t^{\alpha} \rceil}^{t}\lambda_{q}}\geq\frac{1}{(\frac{1}{\alpha-v}+1)!}\Big(\frac{\mu\lambda_{0}}{4\times2^{v}}\Big)^{\frac{1}{\alpha-v}}t.\label{1T122ii}
	\end{aligned}
\end{equation}

Substituting~\eqref{1T122ii} into the first term on the right hand side of~\eqref{1T3S12} leads to
\begin{equation}
	\sum_{p=0}^{\lfloor t-t^\alpha\rfloor}\Phi_{\bar{\theta},p}e^{-\frac{\mu}{4}\sum_{q=\lceil t-t^{\alpha}\rceil}^{t}\lambda_{q}}<\Big(\Phi_{\bar{\theta},0}+\sum_{p=1}^{\infty}\Phi_{\bar{\theta},p}\Big)ct^{-1},\label{1T13}
\end{equation}
where the constant $c$ is given by $c=(\frac{\alpha-v+1}{\alpha-v})!\left(\frac{\mu\lambda_{0}}{4\times2^{v}}\right)^{\frac{1}{v-\alpha}}$.

To proceed, we need to estimate an upper bound on $\Phi_{\bar{\theta},t}$. To this end, we first prove the following relations:

1) By using~\eqref{l8result} and $t^{-2v}\leq 4^{v}(t+1)^{-2v}$, we have
\begin{equation}
	\begin{aligned}
		&\lambda_{t}\mathbb{E}\left[\|\boldsymbol{\theta}_{t}-\mathbf{1}\bar{\theta}_{t}\|_{R}^2\right]\leq\lambda_{0}\Big(\frac{4c_{\boldsymbol{\theta}1}}{(t+1)^{v+2}}+\frac{4^{v}c_{\boldsymbol{\theta}2}}{(t+1)^{3v}}\\
		&\quad+\frac{4^{\varsigma_{\vartheta}}c_{\boldsymbol{\theta}3}}{(t+1)^{v+2\varsigma_{\vartheta}}}+\frac{4^{\varsigma_{\zeta}}c_{\boldsymbol{\theta}4}}{(t+1)^{v+2\varsigma_{\zeta}}}\Big).\label{f1T3S12i}
	\end{aligned}
\end{equation}

2) Lemma~\ref{l6} implies $\gamma_{z}^{2t}\leq \frac{16}{(e\ln(\gamma_{z}))^4t^4}\leq \frac{16\times2^{4}}{(e\ln(\gamma_{z}))^4(t+1)^4}$. Combing this relationship with~\eqref{l7result} in Lemma~\ref{l7}, we obtain
\begin{flalign}
	&\frac{\gamma_{z}^{2t}}{\lambda_{t}}\mathbb{E}\left[\|\boldsymbol{s}_{t}-\omega\bar{s}_{t}\|_{C}^2\right]\leq \frac{16\times2^{4}(c_{\boldsymbol{s}1}t^{-2}+c_{\boldsymbol{s}2}t^{-2v}+c_{\boldsymbol{s}3}t^{-2\varsigma_{\zeta}})}{\lambda_{0}(e\ln(\gamma_{z}))^4(t+1)^{4-v}}\nonumber\\
	&\leq \frac{16\times2^{6}c_{\boldsymbol{s}1}}{\lambda_{0}(e\ln(\gamma_{z}))^4(t+1)^{6-v}}+\frac{16\times2^{4+2v}c_{\boldsymbol{s}2}}{\lambda_{0}(e\ln(\gamma_{z}))^4(t+1)^{4+v}}\nonumber\\
	&\quad+\frac{16\times2^{4+2\varsigma_{\zeta}}c_{\boldsymbol{s}3}}{\lambda_{0}(e\ln(\gamma_{z}))^4(t+1)^{4-v+2\varsigma_{\zeta}}}.
\end{flalign}

3) Using the definitions $\sigma_{\zeta}^{+}\!=\!\max_{i\in[m]}\{\sigma_{i,0,\zeta}\}$ and
$\varsigma_{\zeta}\!=\!\min_{i\in[m]}\{\varsigma_{i,\zeta}\}$, we have
\begin{equation}
	\frac{\gamma_{z}^{2t}}{\lambda_{t}}\|\boldsymbol{\zeta}_{C,t}\|_{F}^2\leq \frac{16\times2^{4}\sum_{i,j}(C_{ij})^2(\sigma_{\zeta}^{+})^2}{\lambda_{0}(e\ln(\gamma_{z}))^4(t+1)^{4-v+2\varsigma_{\zeta}}}.
\end{equation}

4) Lemma~\ref{l6} implies $\gamma_{z}^{2t}\lambda_{t}\leq \frac{16\times2^{4}\lambda_{0}}{(e\ln(\gamma_{z}))^4(t+1)^{4+v}}.$ 

5) Using~\eqref{L1result} in Lemma~\ref{l9} yields
\begin{equation}
	\frac{\mathbb{E}\left[\|\theta_{t+1}^{*}-\theta_{t}^*\|_{2}^2\right]}{\lambda_{t}}\!\leq\!\frac{16(\kappa^2\!+\!D^2)}{\lambda_{0}(t+1)^{2-v}}\left(\frac{2}{\mu^2}\!+\!\frac{1}{L^2}\right).\label{f1T3S12ii}
\end{equation}

Substituting~\eqref{f1T3S12i}-\eqref{f1T3S12ii} into~\eqref{1T3Sdelta} and using the definitions of $\lambda_{t}$, $\sigma_{\vartheta}^{+}$, $\varsigma_{\vartheta}$, $\sigma_{\zeta}^{+}$, and
$\varsigma_{\zeta}$, we obtain
\begin{equation}
\begin{aligned}
&\Phi_{\bar{\theta},t}\leq \tau_{\bar{\theta}1}(t+1)^{-1-v}+\tau_{\bar{\theta}2}(t+1)^{-v-2}+\tau_{\bar{\theta}3}(t+1)^{-3v}\\
&\quad+\tau_{\bar{\theta}4}(t+1)^{-v-2\varsigma_{\vartheta}}+\tau_{\bar{\theta}5}(t+1)^{-v-2\varsigma_{\zeta}}+\tau_{\bar{\theta}6}(t+1)^{-6+v}\\
&\quad +\tau_{\bar{\theta}7}(t+1)^{-4-v}+\tau_{\bar{\theta}8}(t+1)^{-4+v-2\varsigma_{\zeta}}\\
&\quad +\tau_{\bar{\theta}9}(t+1)^{-4+v-2\varsigma_{\zeta}}+\tau_{\bar{\theta}10}(t+1)^{-4-v}+\tau_{\bar{\theta}11}(t+1)^{-2\varsigma_{\vartheta}}\\
&\quad +\tau_{\bar{\theta}12}(t+1)^{-2\varsigma_{\zeta}}+\tau_{\bar{\theta}13}(t+1)^{-v-\frac{1}{2}}+\tau_{\bar{\theta}14}(t+1)^{-2v}\\
&\quad +\tau_{\bar{\theta}15}(t+1)^{-2}+\tau_{\bar{\theta}16}(t+1)^{-2+v},\label{deltas}
\end{aligned}
\end{equation}
with the constants $\small\tau_{\bar{\theta}1}\!\!=\!\!2\hat{c}_{\bar{\theta}0}\kappa^2\lambda_0$, $\small\tau_{\bar{\theta}2}\!\!=\!\!\frac{4c_{\boldsymbol{\theta}1}\delta_{F,R}^2L^2\tau_{\bar{\theta}1}}{\kappa^2}$, $\tau_{\bar{\theta}3}\!\!=\!\!\frac{4^{v}c_{\boldsymbol{\theta}2}\tau_{\bar{\theta}2}}{4c_{\boldsymbol{\theta}1}}$, $\tau_{\bar{\theta}4}\!\!=\!\!\frac{4^{\varsigma_{\vartheta}}c_{\boldsymbol{\theta}3}\tau_{\bar{\theta}2}}{4c_{\boldsymbol{\theta}1}}$,~$\tau_{\bar{\theta}5}\!\!=\!\!\frac{4^{\varsigma_{\zeta}}c_{\boldsymbol{\theta}4}\tau_{\bar{\theta}2}}{4c_{\boldsymbol{\theta}1}}$, $\tau_{\bar{\theta}6}\!\!=\!\!\frac{3\times 2^{10}c_{\boldsymbol{s}1}\hat{c}_{\bar{\theta}0}\delta_{F,C}^2\|C\|_{C}^{2}\|u\|_{2}^2c_{z}^2}{m^2\lambda_{0}(e\ln(\gamma_{z}))^4}$, $\tau_{\bar{\theta}7}\!\!=\!\!\frac{4^{v}c_{\boldsymbol{s}2}\tau_{\bar{\theta}6}}{4c_{\boldsymbol{s}1}}$, $\tau_{\bar{\theta}8}\!\!=\!\!\frac{ 4^{\varsigma_{\zeta}}c_{\boldsymbol{s}3}\tau_{\bar{\theta}6}}{4c_{\boldsymbol{s}1}}$, $\small \tau_{\bar{\theta}9}\!=\!\frac{3\times 2^{8}\hat{c}_{\bar{\theta}0}\|u\|_{2}^2c_{z}^2\sum_{i,j}(C_{ij})^2(\sigma_{\zeta}^{+})^2}{m^2\lambda_{0}(e\ln(\gamma_{z}))^4}$, $\small\tau_{\bar{\theta}10}\!=\!\frac{3\times 2^{9}\hat{c}_{\bar{\theta}0}(\kappa^2+D^2)\|u\|_{2}^2c_{z}^2\lambda_{0}}{m(e\ln(\gamma_{z}))^4}$, $\small\tau_{\bar{\theta}11}\!=\!\frac{(4+\lambda_{0}\mu)\sum_{i,j}(R_{ij})^2(\sigma_{\vartheta}^{+})^2}{2}$,~$\small\tau_{\bar{\theta}12}\!=\!\frac{(4+\lambda_{0}\mu)\sum_{i,j}(C_{ij})^2(\sigma_{\zeta}^{+})^2}{2}$, $\small\tau_{\bar{\theta}13}\!=\!\hat{c}_{\bar{\theta}0}D\kappa\lambda_{0}$, $\small\tau_{\bar{\theta}14}\!=\!\frac{\hat{c}_{\bar{\theta}0}\mu\lambda_{0}^2D^2}{4}$, $\small\tau_{\bar{\theta}15}\!=\!\frac{16(\kappa^2\!+\!D^2)(2L^2+\mu^2)}{\mu^2L^2}$, and $\small\tau_{\bar{\theta}16}\!=\!\frac{4\tau_{\bar{\theta}15}}{\lambda_{0}\mu}$ (in which $\hat{c}_{\bar{\theta}0}$ is defined in~\eqref{1T3S6}).

By plugging~\eqref{deltas} into $\sum_{p=1}^{\infty}\Phi_{\bar{\theta},p}$, we can estimate the second term on the right hand side of~\eqref{1T13}. To illustrate this idea, we use $\sum_{p=1}^{\infty}\tau_{\bar{\theta}1}(p+1)^{-1-v}$ as an example:
\begin{equation}
	\sum_{p=1}^{\infty}\frac{\tau_{\bar{\theta}1}}{(t+1)^{1+v}}\!\leq\! \int_{1}^{\infty}\frac{\tau_{\bar{\theta}1}}{x^{1+v}}dx\!\leq\! \frac{\tau_{\bar{\theta}1}}{(1+v-1)2^{1-(1+v)}}.\label{example1}
\end{equation}
Applying an argument similar to the derivation of~\eqref{example1} to the other items on the right hand of $\sum_{p=1}^{\infty}\Phi_{\bar{\theta},p}$ yields
\begin{flalign}
	&\sum_{p=1}^{\infty}\Phi_{\bar{\theta},p}\leq \frac{\tau_{\bar{\theta}1}2^{v}}{v}+\frac{\tau_{\bar{\theta}2}2^{v+1}}{v+1}+\frac{\tau_{\bar{\theta}3}2^{3v-1}}{3v-1}+\frac{\tau_{\bar{\theta}4}2^{2\varsigma_{\vartheta}+v-1}}{v+2\varsigma_{\vartheta}-1}\nonumber\\
	&+\frac{\tau_{\bar{\theta}5}2^{2\varsigma_{\zeta}+v-1}}{v+2\varsigma_{\zeta}-1}+\frac{\tau_{\bar{\theta}6}2^{5-v}}{5-v}+\frac{\tau_{\bar{\theta}7}2^{v+3}}{v+3}+\frac{\tau_{\bar{\theta}8}2^{2\varsigma_{\zeta}-v+3}}{2\varsigma_{\zeta}-v+3}\nonumber\\ &+\frac{\tau_{\bar{\theta}9}2^{2\varsigma_{\zeta}-v+3}}{2\varsigma_{\zeta}-v+3}+\frac{\tau_{\bar{\theta}10}2^{v+3}}{v+3}+\frac{\tau_{\bar{\theta}11}2^{2\varsigma_{\vartheta}-1}}{2\varsigma_{\vartheta}-1}+\frac{\tau_{\bar{\theta}12}2^{2\varsigma_{\zeta}-1}}{2\varsigma_{\zeta}-1}\nonumber\\
	&+\frac{\tau_{\bar{\theta}13}2^{v-\frac{1}{2}}}{v-\frac{1}{2}}+\frac{\tau_{\bar{\theta}14}2^{2v-1}}{2v-1}+2\tau_{\bar{\theta}15}+\frac{\tau_{\bar{\theta}16}2^{1-v}}{1-v}\triangleq c'.\label{deltasp}
\end{flalign}

Substituting~$\Phi_{\bar{\theta},0}=\sum_{i=1}^{16}\tau_{\bar{\theta}i}$ and~\eqref{deltasp} into~\eqref{1T13} yields that the first term on the right hand side of~\eqref{1T3S12} satisfies
\begin{equation}
	\sum_{p=0}^{\lfloor t-t^\alpha\rfloor}\Phi_{\bar{\theta},p}e^{-\frac{\mu}{4}\sum_{q=\lceil t-t^{\alpha}\rceil}^{t}\lambda_{q}}<c_{\bar{\theta}2}t^{-1},\label{1Tdiyibu}
\end{equation}
where the constant $c_{\bar{\theta}2}$ is given by
\begin{equation}
	\small
	c_{\bar{\theta}2}\!=\!\left(\frac{\alpha-v+1}{\alpha-v}\right)!\left(\frac{\mu\lambda_{0}}{4\times2^{v}}\right)^{\frac{1}{v-\alpha}}\left(\sum_{i=1}^{16}\tau_{\bar{\theta}i}+c'\right).\label{cbart2}
\end{equation}

By plugging~\eqref{deltas} into $\sum_{p=\lceil t-t^{\alpha}\rceil}^{t}\Phi_{\bar{\theta},p}$, we can estimate the second term on the right hand side of~\eqref{1T3S12}. To illustrate this idea, we use $\sum_{p=\lceil t-t^{\alpha}\rceil}^{t}\tau_{\bar{\theta}2}(p+1)^{-v-2}$ as an example:

Since the relation $\frac{1}{(p+1)^{v+2}}\leq\frac{1}{(\lceil t-t^{\alpha} \rceil+1)^{v+2}}$ holds for all $p\in[\lceil t-t^{\alpha} \rceil,t]$ and any $\alpha\in(v,\frac{1+v}{2})$, we have
\begin{equation}
	\sum_{p=\lceil t-t^{\alpha} \rceil}^{t}\frac{1}{(p+1)^{v+2}}\leq \frac{1}{(\lceil t-t^{\alpha} \rceil+1)^{v+2}}(t-\lceil t-t^{\alpha} \rceil+1).\nonumber
\end{equation}

We next prove that the relation~$\lceil t-t^{\alpha} \rceil+1\geq t(1-\alpha)$ is valid for all~$\alpha\in(0,1)$. To this end, we consider function $f(t)=\alpha t-t^{\alpha}+1$, whose derivative satisfies $f'(t)=\alpha-\alpha t^{\alpha-1}$. For any $t>1$, we have $f'(t)>0$ and  
$f(t)\geq0$, which imply $t-t^{\alpha}+1\geq t(1-\alpha)$ and further
\begin{equation}
	\begin{aligned}
		\sum_{p=\lceil t-t^{\alpha} \rceil}^{t}\frac{1}{(p+1)^{v+2}}\!\leq\! \frac{t^{\alpha}+1}{t^{v+2}(1-\alpha)^{v+2}}\!\leq\! \frac{2t^{\alpha-(v+2)}}{(1-\alpha)^{v+2}},\label{1T10ii}
	\end{aligned}
\end{equation}
where we used $t^{\alpha}+1\leq 2t^{\alpha}$ for all $t>0$ in the last inequality.

Using an argument similar to the derivation of~\eqref{1T10ii} to the other items on the right hand side of $\sum_{p=\lceil t-t^{\alpha}\rceil}^{t}\Phi_{\bar{\theta},p}$ yields
\begin{flalign}
	&\sum_{p=\lceil t-t^{\alpha}\rceil}^{t}\Phi_{\bar{\theta},p}\leq 
	\frac{2\tau_{\bar{\theta}1}t^{\alpha-(v+1)}}{(1-\alpha)^{v+1}}+\frac{2\tau_{\bar{\theta}2}t^{\alpha-(v+2)}}{(1-\alpha)^{v+2}}+\frac{2\tau_{\bar{\theta}3}t^{\alpha-3v}}{(1-\alpha)^{3v}}\nonumber\\
	&\quad+\frac{2\tau_{\bar{\theta}4}t^{\alpha-(v+2\varsigma_{\vartheta})}}{(1-\alpha)^{v+2\varsigma_{\vartheta}}}+\frac{2\tau_{\bar{\theta}5}t^{\alpha-(v+2\varsigma_{\zeta})}}{(1-\alpha)^{v+2\varsigma_{\zeta}}}+\frac{2\tau_{\bar{\theta}6}t^{\alpha-(6-v)}}{(1-\alpha)^{6-v}}\nonumber\\
	&\quad+\frac{2(\tau_{\bar{\theta}7}+\tau_{\bar{\theta}10})t^{\alpha-(v+4)}}{(1-\alpha)^{v+4}}+\frac{2(\tau_{\bar{\theta}8}+\tau_{\bar{\theta}9})t^{\alpha-(4-v+2\varsigma_{\zeta})}}{(1-\alpha)^{4-v+2\varsigma_{\zeta}}}\nonumber\\
	&\quad+\frac{2\tau_{\bar{\theta}11}t^{\alpha-2\varsigma_{\vartheta}}}{(1-\alpha)^{2\varsigma_{\vartheta}}}+\frac{2\tau_{\bar{\theta}12}t^{\alpha-2\varsigma_{\zeta}}}{(1-\alpha)^{2\varsigma_{\zeta}}}+\frac{2\tau_{\bar{\theta}13}t^{\alpha-(v+\frac{1}{2})}}{(1-\alpha)^{v+\frac{1}{2}}}\nonumber\\
	&\quad +\frac{2\tau_{\bar{\theta}14}t^{\alpha-2v}}{(1-\alpha)^{2v}}+\frac{2\tau_{\bar{\theta}15}t^{\alpha-2}}{(1-\alpha)^{2}}+\frac{2\tau_{\bar{\theta}16}t^{\alpha-(2-v)}}{(1-\alpha)^{2-v}}.\label{1Tdierbu}
\end{flalign}

Substituting~\eqref{1Tdiyibu} and~\eqref{1Tdierbu} into~\eqref{1T3S12} and then plugging~\eqref{1T0result} and~\eqref{1T3S12} into~\eqref{1T3S11}, we arrive at
\begin{flalign}
	&\mathbb{E}[\|\bar{\theta}_{t+1}-\theta_{t+1}^*\|_{2}^{2}]\nonumber\\
	&\leq (c_{\bar{\theta}1}+c_{\bar{\theta}2})t^{-1}+c_{\bar{\theta}3}t^{\alpha-(v+1)}\nonumber\\
	&\quad+c_{\bar{\theta}4}t^{\alpha-(v+2)}+c_{\bar{\theta}5}t^{\alpha-3v}+c_{\bar{\theta}6}t^{\alpha-(v+2\varsigma_{\vartheta})}+c_{\bar{\theta}7}t^{\alpha-(v+2\varsigma_{\zeta})}\nonumber\\
	&\quad +c_{\bar{\theta}8}t^{\alpha-(6-v)}+c_{\bar{\theta}9}t^{\alpha-(v+4)}
	+c_{\bar{\theta}10}t^{\alpha-(4-v+2\varsigma_{\zeta})}\nonumber\\
	&\quad +c_{\bar{\theta}11}t^{\alpha-2\varsigma_{\vartheta}}+c_{\bar{\theta}12}t^{\alpha-2\varsigma_{\zeta}}+c_{\bar{\theta}13}t^{\alpha-(v+\frac{1}{2})}\nonumber\\
	&\quad+c_{\bar{\theta}14}t^{\alpha-2v}+c_{\bar{\theta}15}t^{\alpha-2}+c_{\bar{\theta}16}t^{\alpha-(2-v)},\label{1111}
\end{flalign}
for all $t>0$, where the constants $c_{\bar{\theta}1}$ is given in~\eqref{cbart1}, $c_{\bar{\theta}2}$ is given in~\eqref{cbart2}, and  
$c_{\bar{\theta}3}$ to $c_{\bar{\theta}16}$ are given by
\begin{equation}
	\small
	\left\{\begin{aligned}
		&c_{\bar{\theta}3}\!=\!\frac{2\tau_{\bar{\theta}1}}{(1-\alpha)^{v+1}},~ c_{\bar{\theta}4}\!=\!\frac{2\tau_{\bar{\theta}2}}{(1-\alpha)^{v+2}},~c_{\bar{\theta}5}\!=\!\frac{2\tau_{\bar{\theta}3}}{(1-\alpha)^{3v}},\\
		&c_{\bar{\theta}6}\!=\!\frac{2\tau_{\bar{\theta}4}}{(1-\alpha)^{v+2\varsigma_{\vartheta}}},~c_{\bar{\theta}7}=\frac{2\tau_{\bar{\theta}5}}{(1-\alpha)^{v+2\varsigma_{\zeta}}},~c_{\bar{\theta}8}=\frac{2\tau_{\bar{\theta}6}}{(1-\alpha)^{6-v}},\\
		&c_{\bar{\theta}9}\!=\!\frac{2(\tau_{\bar{\theta}7}+\tau_{\bar{\theta}10})}{(1-\alpha)^{v+4}},~c_{\bar{\theta}10}\!=\!\frac{2(\tau_{\bar{\theta}8}+\tau_{\bar{\theta}9})}{(1-\alpha)^{4-v+2\varsigma_{\zeta}}},~c_{\bar{\theta}11}=\frac{2\tau_{\bar{\theta}11}}{(1-\alpha)^{2\varsigma_{\vartheta}}},\\
		&c_{\bar{\theta}12}=\frac{2\tau_{\bar{\theta}12}}{(1-\alpha)^{2\varsigma_{\zeta}}},~c_{\bar{\theta}13}\!=\!\frac{2\tau_{\bar{\theta}13}}{(1-\alpha)^{v+\frac{1}{2}}},~c_{\bar{\theta}14}=\frac{2\tau_{\bar{\theta}14}}{(1-\alpha)^{2v}},\\
		&c_{\bar{\theta}15}=\frac{2\tau_{\bar{\theta}15}}{(1-\alpha)^{2}},~c_{\bar{\theta}16}\!=\!\frac{2\tau_{\bar{\theta}16}}{(1-\alpha)^{2-v}}.\label{cbart315}
	\end{aligned}
	\right.
\end{equation}

By plugging $\Phi_{\bar{\theta},0}=\sum_{i=1}^{16}\tau_{\bar{\theta}i}$ into~\eqref{1T3S8}, we obtain $\mathbb{E}\big[\|\bar{\theta}_{1}-\theta_{1}^*\|_{2}^{2}\big]\leq c_{\bar{\theta}17}$, where $c_{\bar{\theta}17}$ is given by
\begin{equation}
	c_{\bar{\theta}17}=\big(1-\frac{\lambda_{0}\mu}{4}\big)\mathbb{E}\big[\|\bar{\theta}_{0}-\theta_{0}^*\|_{2}^2\big]+\sum_{i=1}^{16}\tau_{\bar{\theta}i}.\label{cbart16}
	\vspace{-0.2em}
\end{equation}
Combing $\mathbb{E}\big[\|\bar{\theta}_{1}-\theta_{1}^*\|_{2}^{2}\big]\leq c_{\bar{\theta}17}$ and~\eqref{1111}, we arrive at
\vspace{-0.2em}
\begin{flalign}
	&\mathbb{E}[\|\bar{\theta}_{t}-\theta_{t}^*\|_{2}^{2}]\nonumber\\
	&\leq \max\{c_{\bar{\theta}1},c_{\bar{\theta}2},c_{\bar{\theta}17}\}t^{-1}+c_{\bar{\theta}3}t^{\alpha-(v+1)}\nonumber\\
	&\quad+c_{\bar{\theta}4}t^{\alpha-(v+2)}+c_{\bar{\theta}5}t^{\alpha-3v}+c_{\bar{\theta}6}t^{\alpha-(v+2\varsigma_{\vartheta})}+c_{\bar{\theta}7}t^{\alpha-(v+2\varsigma_{\zeta})}\nonumber\\
	&\quad +c_{\bar{\theta}8}t^{\alpha-(6-v)}+c_{\bar{\theta}9}t^{\alpha-(v+4)}
	+c_{\bar{\theta}10}t^{\alpha-(4-v+2\varsigma_{\zeta})}\nonumber\\
	&\quad +c_{\bar{\theta}11}t^{\alpha-2\varsigma_{\vartheta}}+c_{\bar{\theta}12}t^{\alpha-2\varsigma_{\zeta}}+c_{\bar{\theta}13}t^{\alpha-(v+\frac{1}{2})}\nonumber\\
	&\quad+c_{\bar{\theta}14}t^{\alpha-2v}+c_{\bar{\theta}15}t^{\alpha-2}+c_{\bar{\theta}16}t^{\alpha-(2-v)},\label{l10result}
\end{flalign}
for all $t>0$. Here, the constant $\alpha$ satisfies $\alpha\in(v,\frac{1+v}{2})$ and $c_{\bar{\theta}1}$ to $c_{\bar{\theta}17}$ are defined in~\eqref{cbart1},~\eqref{cbart2},~\eqref{cbart315} and~\eqref{cbart16}.

By using Lemma~\ref{l4} and Lemma~\ref{l5}, we obtain
\begin{equation}
	\|\boldsymbol{\theta}_{t}-\mathbf{1}\theta_{t}^*\|_{F}^2\leq2\delta_{F,R}^2\|\boldsymbol{\theta}_{t}-\mathbf{1}\bar{\theta}_{t}\|_{R}^2+2m\|\bar{\theta}_{t}-\theta_{t}^*\|_{2}^2.\label{1tzong}
\end{equation}
Taking the expectation on both sides of~\eqref{1tzong} and combining the result with~\eqref{l8result} and~\eqref{l10result}, we arrive at~\eqref{t1result} in Theorem~\ref{t1}.

\subsection{Proof of Theorem 3}
For the convenience of derivation, we introduce an auxiliary variable $s\in[0,t]$. 

By plugging~\eqref{1s} into~\eqref{1T2S1}, one has
\begin{flalign}
&\mathbb{E}\big[\|\bar{\theta}_{t+1-s}-\theta^*\|_{2}^2\big]\nonumber\\
&\leq \mathbb{E}\big[\|\bar{\theta}_{t-s}-\theta^*+\lambda_{t-s}\nabla\bar{f}_{t-s}(\mathbf{1}\bar{\theta}_{t-s})-\lambda_{t-s}\nabla\bar{f}_{t-s}(\boldsymbol{\theta}_{t-s})\nonumber\\
&\quad-\frac{u^{T}(Z_{t-s}^{-1}-U^{-1})}{m}(\boldsymbol{s}_{t+1-s}-\boldsymbol{s}_{t-s})+\bar{\vartheta}_{R,t-s}-\bar{\zeta}_{C,t-s}\nonumber\\
&\quad-\lambda_{t-s}\nabla\bar{f}_{t-s}(\mathbf{1}\bar{\theta}_{t-s})\|_{2}^2\big].\label{3T2}
\end{flalign}
By using the relation $(a+b+c)^2=a^2+b^2+c^2+2ab+2bc+2ac$ for vectors $a,b,c$ and the property $\mathbb{E}[\bar{\vartheta}_{R,t-s}]=\mathbb{E}[\bar{\vartheta}_{C,t-s}]=0$ in Assumption~\ref{a4}, we rewrite~\eqref{3T2} as follows:
\begin{flalign}
	&\mathbb{E}\big[\|\bar{\theta}_{t+1-s}-\theta^*\|_{2}^2\big]\nonumber\\
	&\leq \mathbb{E}\big[\|\bar{\theta}_{t-s}\!-\!\theta^*\|_{2}^2\big]\!+\!\mathbb{E}\Big[\Big\|\lambda_{t-s}\nabla\bar{f}_{t-s}(\mathbf{1}\bar{\theta}_{t-s})\!-\!\lambda_{t-s}\nabla\bar{f}_{t-s}(\boldsymbol{\theta}_{t-s})\nonumber\\
	&\quad-\frac{u^{T}(Z_{t-s}^{-1}-U^{-1})}{m}(\boldsymbol{s}_{t+1-s}-\boldsymbol{s}_{t-s})\Big\|_{2}^2\Big]\nonumber\\
	&\quad+\mathbb{E}\big[\|\bar{\vartheta}_{R,t-s}-\bar{\zeta}_{C,t-s}\|_{2}^2\big]+\mathbb{E}\big[\|\lambda_{t-s}\nabla\bar{f}_{t-s}(\mathbf{1}\bar{\theta}_{t-s})\|_{2}^2\big]\nonumber\\
	&\quad+2\mathbb{E}\Big[\Big\langle\bar{\theta}_{t-s}-\theta^*,\lambda_{t-s}\nabla\bar{f}_{t-s}(\mathbf{1}\bar{\theta}_{t-s})-\lambda_{t-s}\nabla\bar{f}_{t-s}(\boldsymbol{\theta}_{t-s})\nonumber\\
	&\quad+\bar{\vartheta}_{R,t-s}-\bar{\zeta}_{C,t-s}-\frac{u^{T}(Z_{t-s}^{-1}-U^{-1})}{m}(\boldsymbol{s}_{t+1-s}-\boldsymbol{s}_{t-s})\Big\rangle_{2}\Big]\nonumber\\
	&\quad+2\lambda_{t-s}\mathbb{E}\Big[\Big\langle\nabla\bar{f}_{t-s}(\mathbf{1}\bar{\theta}_{t-s})\!-\!\nabla\bar{f}_{t-s}(\boldsymbol{\theta}_{t-s})\!+\!\bar{\vartheta}_{R,t-s}\!-\!\bar{\zeta}_{C,t-s}\nonumber\\
	&\quad-\frac{u^{T}(Z_{t-s}^{-1}-U^{-1})}{m}(\boldsymbol{s}_{t+1-s}\!-\!\boldsymbol{s}_{t-s}),\lambda_{t-s}\nabla\bar{f}_{t-s}(\mathbf{1}\bar{\theta}_{t-s})\Big\rangle_{2}\Big]\nonumber\\
	&\quad+2\mathbb{E}\big[\big\langle\bar{\theta}_{t-s}-\theta^*,\lambda_{t-s}\nabla\bar{f}_{t-s}(\mathbf{1}\bar{\theta}_{t-s})\big\rangle_{2}\big].\label{3T3}
\end{flalign}

Next we characterize each item on the right hand side of~\eqref{3T3}:

1) By using Assumption~\ref{a3}(iii) and~\eqref{1C8}, we have
\begin{equation}
\begin{aligned}
&\mathbb{E}\left[\|\lambda_{t-s}\nabla\bar{f}_{t-s}(\mathbf{1}\bar{\theta}_{t-s})-\lambda_{t-s}\nabla\bar{f}_{t-s}(\boldsymbol{\theta}_{t-s})\|_2^2\right]\\
&\leq3\lambda_{t-s}^2\mathbb{E}\left[\|\nabla\bar{f}(\mathbf{1}\bar{\theta}_{t-s})-\nabla\bar{f}(\boldsymbol{\theta}_{t-s})\|_2^2\right]\\
&\quad+3\lambda_{t-s}^2\mathbb{E}\left[\|\nabla\bar{f}_{t-s}(\mathbf{1}\bar{\theta}_{t-s})-\nabla\bar{f}(\mathbf{1}\bar{\theta}_{t-s})\|_2^2\right]\\
&\quad+3\lambda_{t-s}^2\mathbb{E}\left[\|\nabla\bar{f}_{t-s}(\boldsymbol{\theta}_{t-s}))-\nabla\bar{f}(\boldsymbol{\theta}_{t-s}))\|_2^2\right]\\
&\leq 3L^2\delta_{F,R}^2\lambda_{t-s}^2\mathbb{E}\left[\|\boldsymbol{\theta}_{t-s}-\mathbf{1}\bar{\theta}_{t-s}\|_{R}^2\right]+\frac{6\kappa^2\lambda_{t-s}^2}{t-s+1}. \label{3T3i1}
\end{aligned}
\end{equation}
By using Lemma~\ref{l2}, we plug~\eqref{3T3i1} into the second term on the right hand side of~\eqref{3T3} to obtain
\begin{flalign}
&\mathbb{E}\Big[\Big\|\lambda_{t-s}\nabla\bar{f}_{t-s}(\mathbf{1}\bar{\theta}_{t-s})-\lambda_{t-s}\nabla\bar{f}_{t-s}(\boldsymbol{\theta}_{t-s})\nonumber\\
&\quad-\frac{u^{T}(Z_{t-s}^{-1}-U^{-1})}{m}(\boldsymbol{s}_{t+1-s}-\boldsymbol{s}_{t-s})\Big\|_{2}^2\Big]\nonumber\\
&\leq 6L^2\delta_{F,R}^2\lambda_{t-s}^2\mathbb{E}\big[\big\|\boldsymbol{\theta}_{t-s}-\mathbf{1}\bar{\theta}_{t-s}\big\|_{R}^2\big]+\frac{12\kappa^2\lambda^2_{t-s}}{t-s+1}\nonumber\\
&\quad+\frac{2\|u\|_{2}^2c_{z}^2}{m^2}\gamma_{z}^{2(t-s)}\mathbb{E}\big[\big\|\boldsymbol{s}_{t+1-s}-\boldsymbol{s}_{t-s}\big\|_{F}^2\big].\label{3T3i}
\end{flalign}

2) Based on the definitions of $\bar{\vartheta}_{R,t-s}$ and $\bar{\zeta}_{C,t-s}$, we have
\begin{equation}
\begin{aligned}
&\mathbb{E}\left[\|\bar{\vartheta}_{R,t-s}-\bar{\zeta}_{C,t-s}\|_{2}^2\right]\\
&\leq \frac{2\|u\|_{2}^2(\sigma_{\vartheta}^{+})^2\sum_{i,j}(R_{ij})^2}{m^2(t-s+1)^{2\varsigma_{\vartheta}}}+\frac{2(\sigma_{\zeta}^{+})^2\sum_{i,j}(C_{ij})^2}{m^2(t-s+1)^{2\varsigma_{\zeta}}}.\label{3T3ii}
\end{aligned}
\end{equation}

3) Using an argument similar to the derivation of~\eqref{D14} yields
\begin{equation}
\begin{aligned}
	&\mathbb{E}\left[\|\lambda_{t-s}\nabla\bar{f}_{t-s}(\mathbf{1}\bar{\theta}_{t-s})\|_{2}^2\right]\\
	&\leq\frac{1}{t-s+1}\sum_{k=0}^{t-s}\mathbb{E}[\|\nabla l(\bar{\theta}_{t-s},\xi_{i,k})\|_{2}^2]\\
	&\leq 2(\kappa^2+D^2)\lambda_{t-s}^2.\label{3T3iii}
\end{aligned}
\end{equation}

4) By using Assumption~\ref{a4} and~\eqref{3T3i} and defining $a_{t-s}\triangleq\frac{1}{(t-s+1)^{r}}$ with $r\in(\frac{1}{2},v)$, we obtain
\begin{flalign}
&2\mathbb{E}\Big[\Big\langle\bar{\theta}_{t-s}-\theta^*,\lambda_{t-s}\nabla\bar{f}_{t-s}(\mathbf{1}\bar{\theta}_{t-s})-\lambda_{t-s}\nabla\bar{f}_{t-s}(\boldsymbol{\theta}_{t-s})\nonumber\\
&\quad+\bar{\vartheta}_{R,t-s}-\bar{\zeta}_{C,t-s}-\frac{u^{T}(Z_{t-s}^{-1}-U^{-1})}{m}(\boldsymbol{s}_{t+1-s}-\boldsymbol{s}_{t-s})\Big\rangle_{2}\Big]\nonumber\\
&\leq a_{t-s}\lambda_{t-s}\mathbb{E}\big[\big\|\bar{\theta}_{t-s}-\theta^*\big\|_{2}^{2}\big]\nonumber\\
&\quad+\frac{1}{a_{t-s}\lambda_{t-s}}\mathbb{E}\Big[\Big\|\lambda_{t-s}\nabla\bar{f}_{t-s}(\mathbf{1}\bar{\theta}_{t-s})-\lambda_{t-s}\nabla\bar{f}_{t-s}(\boldsymbol{\theta}_{t-s})\nonumber\\
&\quad-\frac{u^{T}(Z_{t-s}^{-1}-U^{-1})}{m}(\boldsymbol{s}_{t+1-s}-\boldsymbol{s}_{t-s})\Big\|_{2}^2\Big]\nonumber\\
&\leq a_{t-s}\lambda_{t-s}\mathbb{E}\big[\big\|\bar{\theta}_{t-s}-\theta^*\big\|_{2}^{2}\big]\nonumber\\
&\quad+\frac{6L^2\delta_{F,R}^2\lambda_{t-s}}{a_{t-s}}\mathbb{E}\big[\big\|\boldsymbol{\theta}_{t-s}-\mathbf{1}\bar{\theta}_{t-s}\big\|_{R}^2\big]\!+\!\frac{12\kappa^2\lambda_{t-s}}{(t-s+1)a_{t-s}}\nonumber\\
&\quad+\!\!\frac{2\|u\|_{2}^2c_{z}^2}{m^2}\frac{\gamma_{z}^{2(t-s)}}{a_{t-s}\lambda_{t-s}}\mathbb{E}\big[\big\|\boldsymbol{s}_{t+1-s}-\boldsymbol{s}_{t-s}\big\|_{F}^2\big].\label{3T3iv}
\end{flalign}

5) By utilizing Assumption~\ref{a4},~\eqref{3T3i},~\eqref{3T3iii}, and
the relation $2\langle a,b\rangle\leq \|a\|^2+\|b\|^2$ for any vectors $a$ and $b$, we have
\begin{equation}
\begin{aligned}
&2\mathbb{E}\Big[\Big\langle\lambda_{t-s}\nabla\bar{f}_{t-s}(\mathbf{1}\bar{\theta}_{t-s})-\lambda_{t-s}\nabla\bar{f}_{t-s}(\boldsymbol{\theta}_{t-s})+\bar{\vartheta}_{t-s}^{R}-\bar{\zeta}_{t-s}^{C}\\
&\quad-\frac{u^{T}(Z_{t-s}^{-1}\!-\!U^{-1})}{m}(\boldsymbol{s}_{t+1-s}\!-\!\boldsymbol{s}_{t-s}),\lambda_{t-s}\nabla\bar{f}_{t-s}(\mathbf{1}\bar{\theta}_{t-s})\Big\rangle_{2}\Big]\\
&\leq 2(\kappa^2+D^2)\lambda_{t-s}^2+\mathbb{E}\Big[\Big\|\lambda_{t-s}\nabla\bar{f}_{t-s}(\mathbf{1}\bar{\theta}_{t-s})+\bar{\vartheta}_{t-s}^{R}-\bar{\zeta}_{t-s}^{C}\\
&\quad-\!\lambda_{t-s}\nabla\bar{f}_{t-s}(\boldsymbol{\theta}_{t-s})\!-\!\frac{u^{T}(Z_{t-s}^{-1}-U^{-1})}{m}(\boldsymbol{s}_{t+1-s}\!-\!\boldsymbol{s}_{t-s})\Big\|_{2}^2\Big]\\
&\leq 2(\kappa^2+D^2)\lambda_{t-s}^2+6L^2\delta_{F,R}^2\lambda_{t-s}^2\mathbb{E}\big[\big\|\boldsymbol{\theta}_{t-s}-\mathbf{1}\bar{\theta}_{t-s}\big\|_{R}^2\big]\\
&\quad+\frac{2\|u\|_{2}^2c_{z}^2}{m^2}\gamma_{z}^{2(t-s)}\mathbb{E}\big[\big\|\boldsymbol{s}_{t+1-s}-\boldsymbol{s}_{t-s}\big\|_{F}^2\big]+\frac{12\kappa^2\lambda^2_{t-s}}{t-s+1}. \nonumber
\end{aligned}
\end{equation}

6) The last term on the right hand side of~\eqref{3T3} satisfies
\begin{equation}
\begin{aligned}
&2\mathbb{E}\big[\big\langle\bar{\theta}_{t-s}-\theta^*,\lambda_{t-s}\nabla\bar{f}_{t-s}(\mathbf{1}\bar{\theta}_{t-s})\big\rangle_{2}\big]\\
&= 2\lambda_{t-s}\mathbb{E}\big[\big\langle \bar{\theta}_{t-s}-\theta^*,\nabla\bar{f}(\mathbf{1}\bar{\theta}_{t-s})\big\rangle_{2}\big]\\
&\quad-2\lambda_{t-s}\mathbb{E}\big[\big\langle \bar{\theta}_{t-s}-\theta^*,\nabla\bar{f}(\mathbf{1}\bar{\theta}_{t-s})-\nabla\bar{f}_{t-s}(\mathbf{1}\bar{\theta}_{t-s})\big\rangle_{2}\big].\label{3T3iv1}
\end{aligned}
\end{equation}

Assumption~\ref{a2}(iii) and the relation $\nabla F(\bar{\theta}_{t-s})=\nabla\bar{f}(\mathbf{1}\bar{\theta}_{t-s})$ imply that the first term on the right hand side of~\eqref{3T3iv1} satisfies
\begin{equation}
\begin{aligned}
2\lambda_{t-s}\mathbb{E}\left[\left\langle\bar{\theta}_{t-s}-\theta^*,\nabla\bar{f}(\mathbf{1}\bar{\theta}_{t-s})\right\rangle_{2}\right]\!\geq\! 2\lambda_{t-s}(F(\bar{\theta}_{t-s})\!-\!F(\theta^*)).\label{3T3iv2}
\end{aligned}
\end{equation}

By employing the Young's inequality, the second term on the right hand side of~\eqref{3T3iv1} satisfies
\begin{flalign}
&-2\lambda_{t-s}\mathbb{E}\left[\left\langle \bar{\theta}_{t-s}-\theta^*,\nabla\bar{f}(\mathbf{1}\bar{\theta}_{t-s})-\nabla\bar{f}_{t-s}(\mathbf{1}\bar{\theta}_{t-s})\right\rangle_{2}\right]\nonumber\\
&\geq -\lambda_{t-s}a_{t-s}\mathbb{E}\left[\left\|\bar{\theta}_{t-s}-\theta^*\right\|_{2}^2\right]\nonumber\\
&\quad-\frac{\lambda_{t-s}}{a_{t-s}}\mathbb{E}\left[\left\|\nabla\bar{f}(\mathbf{1}\bar{\theta}_{t-s})-\nabla\bar{f}_{t-s}(\mathbf{1}\bar{\theta}_{t-s})\right\|_{2}^2\right].\label{3T3iv3}
\end{flalign}

We further characterize the second term on the right hand side of~\eqref{3T3iv3}. To this end, we define an auxiliary random variable 
$\psi(\bar{\theta}_{t-s})\triangleq\nabla l(\bar{\theta}_{t-s},\xi_{i})-\mathbb{E}\left[\nabla l(\bar{\theta}_{t-s},\xi_{i})\right].$
Since data samples $\xi_{i}$ is independently and identically distributed across iterations, we have that the following equation always holds:
\begin{equation}
\mathbb{E}\left[\sum_{k=0}^{t-s}\psi(\theta_{i,t-s},\xi_{i,k})\!\times\!\sum_{k=t-s+1}^{t+1}\psi(\theta_{i,t-s},\xi_{i,k})\right]=0.\label{3T3iv4}
\end{equation}
Given that Assumption~\ref{a3}(ii) implies $\mathbb{E}[\|\psi(\theta_{i,t-s},\xi_{i,k})\|^2]\leq\kappa^2$, we can obtain the following inequality based on~\eqref{3T3iv4}:
\begin{flalign}
&\mathbb{E}\big[\big\|\nabla f_{i}(\theta_{i,t-s})-\nabla f_{i,t-s}(\theta_{i,t-s})\big\|_{2}^2\big]\nonumber\\
&=\mathbb{E}\Bigg[\Bigg\|\frac{1}{t-s+1}\sum_{k=0}^{t-s}\psi(\theta_{i,t-s},\xi_{i,k})\Bigg\|_{2}^2\Bigg]\leq \frac{\kappa^2}{t-s+1}.\label{3T3iv5}
\end{flalign}
Substituting~\eqref{3T3iv5} into \eqref{3T3iv3} yields
\begin{flalign}
&-2\lambda_{t-s}\mathbb{E}\big[\big\langle\bar{\theta}_{t-s}-\theta^*,\nabla\bar{f}(\mathbf{1}\bar{\theta}_{t-s})-\nabla\bar{f}_{t-s}(\mathbf{1}\bar{\theta}_{t-s})\big\rangle_{2}\big]\nonumber\\
&\geq -\lambda_{t-s}a_{t-s}\mathbb{E}\big[\big\|\bar{\theta}_{t-s}-\theta^*\big\|_{2}^2\big]-\frac{\lambda_{t-s}\kappa^2}{a_{t-s}(t-s+1)}.\label{3T3ivjieguo}
\end{flalign}

Plugging inequalities~\eqref{3T3iv2} and~\eqref{3T3ivjieguo} into~\eqref{3T3iv1} and considering the relation $\mathbb{E}[F(\bar{\theta}_{t-s})-F(\theta^*)]\geq \mathbb{E}[F(\theta_{i,t+1})-F(\theta^*)]$, the last term on the right hand side of~\eqref{3T3} satisfies
\begin{equation}
\begin{aligned}
&2\mathbb{E}\big[\big\langle\bar{\theta}_{t-s}-\theta^*,\lambda_{t-s}\nabla\bar{f}_{t-s}(\mathbf{1}\bar{\theta}_{t-s})\big\rangle_{2}\big]\\
&\leq -2\lambda_{t-s}\mathbb{E}\big[F(\theta_{i,t+1})-F(\theta^*)\big]+\frac{\lambda_{t-s}\kappa^2}{a_{t-s}(t-s+1)}\\
&\quad+\lambda_{t-s}a_{t-s}\mathbb{E}\big[\big\|\bar{\theta}_{t-s}-\theta^*\big\|_{2}^2\big].\label{3T3ivv}
\end{aligned}
\end{equation}

Substituting~\eqref{3T3i}-\eqref{3T3ivv} into~\eqref{3T3}, we arrive at
\begin{equation}
\begin{aligned}
&\mathbb{E}\left[\|\bar{\theta}_{t+1-s}-\theta^*\|_{2}^{2}\right]\leq -2\lambda_{t-s}\mathbb{E}\big[F(\theta_{i,t+1})-F(\theta^*)\big]\\
&\quad +(1+2\lambda_{t-s}a_{t-s})\mathbb{E}\left[\|\bar{\theta}_{t-s}-\theta^*\|_{2}^{2}\right]+\Phi_{t-s},\label{3T4}
\end{aligned}
\end{equation}
where the term $\Phi_{t-s}$ is given by
\begin{flalign}
&\Phi_{t-s}=6L^2\delta_{F,R}^2\left(2\lambda_{t-s}^2+\frac{\lambda_{t-s}}{a_{t-s}}\right)\mathbb{E}\big[\big\|\boldsymbol{\theta}_{t-s}-\mathbf{1}\bar{\theta}_{t-s}\big\|_{R}^2\big]\nonumber\\
&\quad+\frac{2\|u\|_{2}^2c_{z}^2}{m^2}\Big(2\gamma_{z}^{2(t-s)}+\frac{\gamma_{z}^{2(t-s)}}{a_{t-s}\lambda_{t-s}}\Big)\mathbb{E}\left[\|\boldsymbol{s}_{t+1-s}-\boldsymbol{s}_{t-s}\|_{F}^2\right]\nonumber\\
&\quad+\frac{2\|u\|_{2}^2(\sigma_{\vartheta}^{+})^2\sum_{i,j}(R_{ij})^2}{m^2(t-s+1)^{2\varsigma_{\vartheta}}}+\frac{2(\sigma_{\zeta}^{+})^2\sum_{i,j}(C_{ij})^2}{m^2(t-s+1)^{2\varsigma_{\zeta}}}\nonumber\\
&\quad+4(\kappa^2+D^2)\lambda_{t-s}^2 +\frac{\kappa^2(24\lambda_{t-s}^2+\frac{13\lambda_{t-s}}{a_{t-s}})}{(t-s+1)}.\label{3delta}
\end{flalign}

We define $\bar{t}=t-s+1$ for all $\bar{t}\geq 1$ and drop the negative term $-2\lambda_{t-s}\mathbb{E}\big[F(\theta_{i,t+1})-F(\theta^*)\big]$ to rewrite~\eqref{3T4} as follows:
\begin{equation}
\mathbb{E}\left[\|\bar{\theta}_{\bar{t}}-\theta^*\|_{2}^{2}\right]\leq (1+2\lambda_{\bar{t}-1}a_{\bar{t}-1})\mathbb{E}\left[\|\bar{\theta}_{\bar{t}-1}-\theta^*\|_{2}^{2}\right]+\Phi_{\bar{t}-1}.\label{3T5}
\end{equation}

By iterating~\eqref{3T5} from $0$ to $\bar{t}-1$, one yields
\begin{flalign}
&\mathbb{E}[\|\bar{\theta}_{\bar{t}}-\theta^*\|_{2}^{2}]
\leq \left(\prod_{p=0}^{\bar{t}-1}\left(1+2\lambda_{p}a_{p}\right)\right)\mathbb{E}\left[\left\|\bar{\theta}_{0}-\theta^*\right\|_{2}^{2}\right]\nonumber\\
&\quad+\sum_{p=0}^{\bar{t}-1}\left(\prod_{q=p+1}^{\bar{t}-1}\left(1+2\lambda_{q}a_{q}\right)\right)\Phi_{p}\nonumber\\
&\leq\left(\prod_{p=0}^{\bar{t}-1}(1+2\lambda_{p}a_{p})\right)\left(\mathbb{E}\left[\left\|\bar{\theta}_{0}-\theta^*\right\|_{2}^{2}\right]+\sum_{p=0}^{\bar{t}-1}\Phi_{p}\right),\label{3T6}
\end{flalign}
where we have used $\prod_{p=1}^{\bar{t}-1}(1+2\lambda_{p}a_{p})\leq \prod_{p=0}^{\bar{t}-1}(1+2\lambda_{p}a_{p})$ in the last inequality.

By using the relation $\ln(1+u)\leq u$ valid for any $u\geq 0$ and the definition $a_{p}=\frac{1}{(p+1)^{r}}$ with $r\in(\frac{1}{2},v)$, we have 
\begin{equation}
\begin{aligned}
&\ln\left(\prod_{p=0}^{\bar{t}-1}(1+2\lambda_{p}a_{p})\right)=\sum_{p=0}^{\bar{t}-1}\ln\left(1+2\lambda_{p}a_{p}\right)\leq\sum_{p=0}^{\bar{t}-1}2\lambda_{p}a_{p}\\
&\leq2\lambda_{0}+\sum_{p=2}^{\bar{t}}\frac{2\lambda_{0}}{p^{v+r}}\leq 2\lambda_{0}+2\lambda_{0}\int_{p=1}^{\infty}\frac{1}{p^{v+r}}\leq\frac{2\lambda_0(r+v)}{r+v-1},\nonumber
\end{aligned}
\end{equation}
which implies $\prod_{p=0}^{\bar{t}-1}(1+2\lambda_{p}a_{p})\leq e^{\frac{2\lambda_0(r+v)}{r+v-1}}$. We use this relation and replace $\bar{t}$ with $t-s+1$ to rewrite~\eqref{3T6} as follows:
\begin{equation}
\mathbb{E}[\|\bar{\theta}_{t-s+1}-\theta^*\|_{2}^{2}]
\leq e^{\frac{2\lambda_0(r+v)}{r+v-1}}\Big(\mathbb{E}\left[\|\bar{\theta}_{0}-\theta^*\|_{2}^{2}\right]+\sum_{s=0}^{t}\Phi_{t-s}\Big),\label{3T8}
\end{equation}
where in the derivation we used $\sum_{p=0}^{t-s}\Phi_{p}=\sum_{s=0}^{t}\Phi_{t-s}$. 
 
We proceed to estimate an upper bound on $\sum_{s=0}^{t}\Phi_{t-s}$, where $\Phi_{t-s}$ is defined in~\eqref{3delta}.

1) The relation $a_{t-s}\lambda_{t-s}\leq\lambda_{0}$ implies $\lambda_{t-s}^2\leq\frac{\lambda_{t-s}}{a_{t-s}}\lambda_{0}$. Combing this relationship with \eqref{l8result} in Lemma~\ref{l8} and the inequality $(t-s+1)^{p}\leq 2^{p}(t-s)^{p}$, we obtain
\begin{equation}
\begin{aligned}
&\sum_{s=0}^{t}(2\lambda_{t-s}^2+\frac{\lambda_{t-s}}{a_{t-s}})\mathbb{E}\left[\|\boldsymbol{\theta}_{t-s}-\mathbf{1}\bar{\theta}_{t-s}\|_{R}^2\right]\\
&\leq (2\lambda_{0}+1)\sum_{s=0}^{t}\frac{\lambda_{t-s}}{a_{t-s}}\left(\frac{4c_{\boldsymbol{\theta}1}}{(t-s+1)^{2}}+\frac{4^{v}c_{\boldsymbol{\theta}2}}{(t-s+1)^{2v}}\right.\\
&\left.\quad+\frac{4^{\varsigma_{\vartheta}}c_{\boldsymbol{\theta}3}}{(t-s+1)^{2\varsigma_{\vartheta}}}+\frac{4^{\varsigma_{\zeta}}c_{\boldsymbol{\theta}4}}{(t-s+1)^{2\varsigma_{\zeta}}}\right)\\
&\leq \lambda_{0}(2\lambda_{0}+1)\left(\frac{4c_{\boldsymbol{\theta}1}(v-r+2)}{v-r+1}+\frac{4^{v}c_{\boldsymbol{\theta}2}(3v-r)}{3v-r-1}\right.\\
&\left.\quad+\frac{4^{\varsigma_{\vartheta}}c_{\boldsymbol{\theta}3}(v-r+2\varsigma_{\vartheta})}{v-r+2\varsigma_{\vartheta}-1}+\frac{4^{\varsigma_{\zeta}}c_{\boldsymbol{\theta}4}(v-r+2\varsigma_{\zeta})}{v-r+2\varsigma_{\zeta}-1}\right),\label{3deltai}
\end{aligned}
\end{equation}
where we have used the following inequality in the derivation of the second inequality:
\begin{flalign}
\sum_{s=0}^{t}\frac{1}{(t-s+1)^{p}}=1+\sum_{s=2}^{t+1}\frac{1}{s^{p}}\leq1+\int_{s=1}^{\infty}\frac{1}{s^{p}}\leq \frac{p}{p-1},\label{jifen}
\end{flalign}
for all $p>1$.

2) The relation $a_{t-s}\lambda_{t-s}\leq\lambda_{0}$ implies $\gamma_{z}^{2(t-s)}\leq\frac{\gamma_{z}^{2(t-s)}}{a_{t-s}\lambda_{t-s}}\lambda_{0}$. By using this relation, taking the squared norm of~\eqref{1T2S3}, and utilizing inequality~\eqref{D14}, we obtain
\begin{flalign}
&\sum_{s=0}^{t}\left(2\gamma_{z}^{2(t-s)}+\frac{\gamma_{z}^{2(t-s)}}{a_{t-s}\lambda_{t-s}}\right)\mathbb{E}\left[\|\boldsymbol{s}_{t+1-s}-\boldsymbol{s}_{t-s}\|_{F}^2\right]\nonumber\\
&\leq (2\lambda_{0}+1)\sum_{s=0}^{t}\frac{\gamma_{z}^{2(t-s)}}{a_{t-s}\lambda_{t-s}}\Big(3\delta_{F,C}^2\|C\|_{C}^{2}\|\boldsymbol{s}_{t-s}-v\bar{s}_{t-s}\|_{C}^2\nonumber\\
&\quad+\frac{3(\sum_{i,j}C_{ij})^2(\sigma_{\zeta}^{+})^2}{(t+1-s)^{2\varsigma_{\zeta}}}+\frac{6m(\kappa^2+D^2)\lambda_{0}^2}{(t+1-s)^{2v}}\Big).\label{3deltaii}
\end{flalign}
Lemma~\ref{l6} implies $\gamma_{z}^{2(t-s)}\leq\frac{16\times 2^{4}}{(\ln(\gamma_{z})e)^4(t-s+1)^4}$. Substituting this relationship and~\eqref{l7result} into~\eqref{3deltaii} and using an argument similar to the derivation of~\eqref{3deltai}, we have
\begin{equation}
\begin{aligned}
&\sum_{s=0}^{t}\Big(2\gamma_{z}^{2(t-s)}+\frac{\gamma_{z}^{2(t-s)}}{a_{t-s}\lambda_{t-s}}\Big)\mathbb{E}\left[\|\boldsymbol{s}_{t+1-s}-\boldsymbol{s}_{t-s}\|_{F}^2\right]\\
&\leq\frac{2^{8}(2\lambda_{0}+1)}{(\ln(\gamma_{z})e)^{4}\lambda_{0}}\Bigg(3\delta_{F,C}^2\|C\|_{C}^{2}\Big(\frac{4c_{\boldsymbol{s}1}(6-v-r)}{5-v-r}\\
&\quad+\frac{4^{v}c_{\boldsymbol{s}2}(4+v-r)}{3+v-r}+\frac{4^{\varsigma_{\zeta}}c_{\boldsymbol{s}3}(4+2\varsigma_{\zeta}-v-r)}{3+2\varsigma_{\zeta}-v-r}\Big)\\
&\quad+\frac{3(\sum_{i,j}C_{ij})^2(\sigma_{\zeta}^{+})^2(4+2\varsigma_{\zeta}-v-r)}{3+2\varsigma_{\zeta}-v-r}\\
&\quad+\frac{6m(\kappa^2+D^2)\lambda_{0}^2(4+v-r)}{3+v-r}\Bigg).\label{3deltaii1}
\end{aligned}
\end{equation}

3) Applying~\eqref{jifen} to the rest of terms on the right hand side of $\sum_{s=0}^{t}\Phi_{t-s}$, we have
\begin{equation}
\small
\left\{\begin{aligned}
&\sum_{s=0}^{t}\left(\frac{1}{(t-s+1)^{2\varsigma_{\vartheta}}}\!+\!\frac{1}{(t-s+1)^{2\varsigma_{\zeta}}}\right)\!\leq\! \frac{2\varsigma_{\vartheta}}{2\varsigma_{\vartheta}-1}\!+\!\frac{2\varsigma_{\zeta}}{2\varsigma_{\zeta}-1},\\
&\sum_{s=0}^{t}\left(\lambda_{t-s}^2+\frac{\lambda_{t-s}^2}{t-s+1}\right)\leq\frac{2v\lambda_{0}^2}{2v-1}+\frac{(2v+1)\lambda_{0}^2}{2v},\\
&\sum_{s=0}^{t}\frac{\lambda_{t-s}}{a_{t-s}(t-s+1)}\leq \frac{(v+1-r)\lambda_{0}}{v-r}.\label{3deltaiii}
	\end{aligned}
\right.
\end{equation}

Substituting~\eqref{3deltai}-\eqref{3deltaiii} into~$\sum_{s=0}^{t}\Phi_{t-s}$, we arrive at
\begin{flalign}
\sum_{s=0}^{t}\Phi_{t-s}=\bar{c}_{\boldsymbol{\theta}1}+\bar{c}_{\boldsymbol{\theta}2}+\bar{c}_{\boldsymbol{\theta}3},\label{3T9}
\end{flalign}
where the constants $\bar{c}_{\boldsymbol{\theta}1}$, $\bar{c}_{\boldsymbol{\theta}1}$, and $\bar{c}_{\boldsymbol{\theta}3}$ are given in the statement of Theorem~\ref{t3}.

Substituting~\eqref{3T9} into~\eqref{3T8}, we can obtain
\begin{flalign}
&\mathbb{E}[\|\bar{\theta}_{t-s+1}-\theta^*\|_{2}^{2}]\nonumber\\
&\leq e^{\frac{2\lambda_0(r+v)}{r+v-1}}\Big(\mathbb{E}\left[\|\bar{\theta}_{0}-\theta^*\|_{2}^{2}\right]+\sum_{i=1}^{3}\bar{c}_{\boldsymbol{\theta}i}\Big)\triangleq\bar{c}'.\label{3T10}
\end{flalign}

We proceed to sum up both sides of~\eqref{3T4} from $0$ to $t$:
\begin{flalign}
&2\sum_{s=0}^{t}\lambda_{t-s}\mathbb{E}\big[F(\theta_{i,t+1})-F(\theta^*)\big]\leq -\sum_{s=0}^{t}\mathbb{E}\big[\big\|\bar{\theta}_{t-s+1}-\theta^*\big\|_{2}^{2}\big]\nonumber\\ 
&\quad+\sum_{s=0}^{t}(1+2\lambda_{t-s}a_{t-s})\mathbb{E}\big[\big\|\bar{\theta}_{t-s}-\theta^*\big\|_{2}^{2}\big]	
+\sum_{i=1}^{3}\bar{c}_{\boldsymbol{\theta}i},\label{3T11}
\end{flalign}
The first and second terms on the right hand side of~\eqref{3T11} can be simplified as
\begin{flalign}
&\sum_{s=0}^{t}(1+2\lambda_{t-s}a_{t-s})\mathbb{E}\big[\big\|\bar{\theta}_{t-s}-\theta^*\big\|_{2}^{2}\big]-\sum_{s=0}^{t}\mathbb{E}\big[\big\|\bar{\theta}_{t-s+1}-\theta^*\big\|_{2}^{2}\big]\nonumber\\ 
&\leq\sum_{s=0}^{t-1}2\lambda_{t-s}a_{t-s}\mathbb{E}\big[\|\bar{\theta}_{t-s}-\theta^*\|_{2}^{2}\big] +(1+2\lambda_{0}a_{0})\mathbb{E}\big[\|\bar{\theta}_{0}-\theta^*\|_{2}^{2}\big]\nonumber\\
&\leq\sum_{s=0}^{t-1}2\lambda_{t-s}a_{t-s}\bar{c}'\!+\!(1+2\lambda_{0})\mathbb{E}\big[\|\bar{\theta}_{0}-\theta^*\|_{2}^{2}\big]\!\leq\!\bar{c}_{\boldsymbol{\theta}4},\label{3T12}
\end{flalign}
where we have used~\eqref{3T10} and the relation $\sum_{s=0}^{t-1}2\lambda_{t-s}a_{t-s}\leq\frac{2(v+r)}{v+r-1}$ in the last inequality. The constant $\bar{c}_{\boldsymbol{\theta}4}$ is given in the statement of Theorem~\ref{t3}.

Given $2\sum_{s=0}^{t}\lambda_{s}\geq 2\int_{0}^{t+1}\frac{\lambda_{0}}{(x+1)^{v}}dx$, we have
\begin{equation}
2\sum_{s=0}^{t}\lambda_{t-s}=2\sum_{s=0}^{t}\lambda_{s}\geq \frac{2\lambda_{0}(t+2)^{1-v}-2\lambda_{0}}{1-v}.\label{3T13}
\end{equation}

Substituting~\eqref{3T12} and~\eqref{3T13} into~\eqref{3T11} yields
\begin{equation}
\begin{aligned}
\mathbb{E}\big[F(\theta_{i,t+1})-F(\theta^*)\big]\leq \frac{(1-v)\sum_{i=1}^{4}\bar{c}_{\boldsymbol{\theta}i}}{2\lambda_{0}\big((t+2)^{1-v}-1\big)}.
\end{aligned}
\end{equation}
Since $(t+2)^{1-v}\geq 2^{1-v}$ implies $1\leq \frac{1}{2^{1-v}}(t+2)^{1-v}$, we obtain $2\lambda_{0}\big((t+2)^{1-v}-1\big)\geq 2\lambda_{0}(1-\frac{1}{2^{1-v}})(t+1)^{1-v}$, which implies~\eqref{t3result} in Theorem~\ref{t3}.

\subsection{Proof of Lemma~\ref{l12}}
We first prove that the following inequality is valid for all $p\in[0,t]$:
\begin{equation}
	\frac{\beta_{t}}{\beta_{t-p}}\geq c_{0}\Big(1-\frac{c}{2}\Big)^{p},\label{4L1}
\end{equation}
where $\beta_{t}=\frac{\beta_{0}}{(t+1)^{q}}$ is given in the statement of Lemma~\ref{l12} and the constant $c_{0}$ is given by $c_{0}=\big(\frac{e\ln(\frac{2}{2-c})}{2q}\big)^{q}$. 

When constants $a,b,c,d>0$ satisfy $\frac{c}{a}\geq\frac{d}{b}$, the relationship $\frac{d}{b}\leq\frac{c+d}{a+b}\leq\frac{c}{a}$ always holds. This inequality further implies $\frac{t+2-p}{t+1}=\frac{t+1-p+1}{t+1-p+p}\geq \frac{1}{p}$ and $\frac{t-p+1}{t-p+2}\geq\frac{1}{2}$. Therefore, we have
\begin{equation}
	\frac{\beta_{t}}{\beta_{t-p}}=\Big(\frac{t-p+1}{t+1}\Big)^{q}\geq \frac{1}{(2p)^{q}}.\label{4L2}
\end{equation}
Consider a convex function $f(x)=-q\ln(x)-x\ln(1\!-\!\frac{c}{2}):  \mathbb{R}^{+}\!\rightarrow\!\mathbb{R}$ (where $\mathbb{R}^{+}$ represents the set of positive real numbers), whose derivative satisfies $f'(x)=-\frac{q}{x}-\ln(1-\frac{c}{2})$, implying the minimum point at $x^*=-\frac{q}{\ln(1-\frac{c}{2})}$ with the minimal value $f(x^*)=-q\ln\big(-\frac{q}{\ln(1-\frac{c}{2})}\big)+\frac{q}{\ln(1-\frac{c}{2})}\ln\big(1-\frac{c}{2}\big)=\ln(c_{0}2^{q}).$ Hence, for any $p\in \mathbb{N}^{+}$, we have $f(p)\geq \ln(c_{0}2^{q}),$ i.e.,$-q\ln(p)-p\ln\big(1-\frac{c}{2}\big)\geq \ln(c_{0}2^{q}),$
which is equivalent to $\ln\left((1-\frac{c}{2})^{p}\right)\leq \ln\big(\frac{1}{c_{0}2^{q}p^{q}}\big)$. Combining this relation with~\eqref{4L2} yields~\eqref{4L1}. 

We proceed to prove Lemma~\ref{l12}. By iterating $v_{t+1}\leq(1-c)v_{t}+\beta_{t}$ from $0$ to $t$, we obtain
\begin{equation}
	v_{t+1}\leq (1-c)(1-c)^{t}v_{0}+\sum_{p=0}^{t}\beta_{t-p}(1-c)^{p}.\label{4L3}
\end{equation}
By using the relation $(1-c)^{p}\leq(1-\frac{c}{2})^{2p}$, we have
\begin{equation}
	\sum_{p=0}^{t}\beta_{t-p}(1-c)^{p}\leq \beta_{t}\sum_{p=0}^{t}\frac{\beta_{t-p}}{\beta_{t}}\Big(1-\frac{c}{2}\Big)^{2p}.\nonumber
\end{equation}
Based on~\eqref{4L1}, one yields $\frac{\beta_{t-p}}{\beta_{t}}(1-\frac{c}{2})^{p}\leq \frac{1}{c_{0}}$, which implies
\begin{equation}
	\sum_{p=0}^{t}\beta_{t-p}(1-c)^{p}\leq \frac{\beta_{t}}{c_{0}}\sum_{p=0}^{t}\Big(1-\frac{c}{2}\Big)^{p}\leq \frac{2\beta_{t}}{c c_{0}}.\label{4L5}
\end{equation}
Using again inequality~\eqref{4L1}, we obtain
\begin{equation}
	(1-c)^{t}v_{0}\leq \Big(1-\frac{c}{2}\Big)^{t}v_{0}\leq\frac{v_{0}}{c_{0}\beta_{0}}\beta_{t}.\label{4L6}
\end{equation}
Substituting~\eqref{4L5} and~\eqref{4L6} into~\eqref{4L3}, we arrive at $v_{t}\leq \frac{1}{c_{0}}\big(\frac{v_{0}(1-c)}{\beta_{0}}+\frac{2}{c}\big)\beta_{t-1}$. Further using the relationship $\beta_{t-1}\leq 2^{q}\beta_{t}$ and the definition of $c_{0}$ yields Lemma~\ref{l12}.

\bibliographystyle{ieeetr}  
\bibliography{nonconvexquantization}

\end{document}